\theoremstyle{plain}
\newtheorem{theorem}{Theorem}
\newtheorem{lemma}{Lemma}
\newtheorem{cor}{Corollary}
\theoremstyle{definition}
\newtheorem{definition}{Definition}
\newtheorem{assumption}{Assumption}
\theoremstyle{remark}
\newtheorem{remark}{Remark}
\DeclareMathOperator{\diag}{diag}
\DeclareMathOperator{\tr}{tr}
\DeclareMathOperator{\Var}{Var}
\DeclareMathOperator{\Cov}{Cov}
\newcommand{\norm}[1]{\left\lVert#1\right\rVert}
\title{High Dimensional Distributed Gradient Descent with Arbitrary Number of Byzantine Attackers}
\author {
	Wenyu Liu\textsuperscript{\rm 1},
	Tianqiang Huang \textsuperscript{\rm 1},
	Pengfei Zhang \textsuperscript{\rm 2},
	Zong Ke \textsuperscript{\rm 3},
	Minghui Min \textsuperscript{\rm 4 },
	Puning Zhao \textsuperscript{\rm 5\footnote{corresponding author}}
}
\begin{document}

\maketitle

\begin{abstract}
Adversarial attacks pose a major challenge to distributed learning systems, prompting the development of numerous robust learning methods. However, most existing approaches suffer from the curse of dimensionality, i.e. the error increases with the number of model parameters. In this paper, we make a progress towards high dimensional problems, under arbitrary number of Byzantine attackers. The cornerstone of our design is a direct high dimensional semi-verified mean estimation method. The idea is to identify a subspace with large variance. The components of the mean value perpendicular to this subspace are estimated using corrupted gradient vectors uploaded from worker machines, while the components within this subspace are estimated using auxiliary dataset. As a result, a combination of large corrupted dataset and small clean dataset yields significantly better performance than using them separately. 
We then apply this method as the aggregator for distributed learning problems. The theoretical analysis shows that compared with existing solutions, our method gets rid of $\sqrt{d}$ dependence on the dimensionality, and achieves minimax optimal statistical rates. Numerical results validate our theory as well as the effectiveness of the proposed method.
\end{abstract}


\section{Introduction}
Many modern machine learning tasks require distributed computing and storage. In such systems, there are many \textit{worker machines}, which operate under the coordination of a centralized server, known as the \textit{master machine}. During the whole training process, worker machines only need to compute an update to the current model, and send it to the master, while the local dataset remains private. This framework is called Federated Learning (FL) \cite{mcmahan2017communication}, which has drawn considerable attention in recent years \cite{zhang2021survey,kairouz2021advances}. With advantages in both computational efficiency and privacy protection, FL has been widely applied in various areas, including smart devices, industrial engineering and healthcare \cite{li2020review,li2020federated}.

Despite significant advantages and widespread applications, FL is still facing many challenges \cite{kairouz2021advances}. One critical issue is that not all machines are trustable \cite{lyu2020threats}. In large scale FL systems, some worker machines may send wrong gradients to the master due to various reasons, including system crashes, faulty hardware, communication errors, and even malicious attacks \cite{bagdasaryan2020backdoor,bhagoji2019analyzing,fang2020local,sun2021data,luo2021feature}. These faults are typically modeled as Byzantine failure \cite{lamport1982byzantine}, such that some workers are manipulated by an adversary, and send wrong gradient vectors to the master.

There have been extensive research on Byzantine robust distributed learning. Many popular methods \cite{blanchard2017machine,guerraoui2018hidden,chen2017distributed,yin2018byzantine} have two issues that make them not perfect. Firstly, these methods only allow less than half worker machines to be Byzantine. Secondly, the statistical risk grows with dimension $d$, which is serious in modern large scale learning models. For the former issue, several new methods suitable for arbitrary number of Byzantine attackers have been proposed recently \cite{cao2019distributed,regatti2020bygars,xie2019zeno,xie2020zeno++,cao2020fltrust}. These methods rely on the help of a small auxiliary clean dataset, which is necessary since the breakdown point of robust statistics can not exceed $1/2$ \cite{hampel1971general}. The latter problem, i.e. curse of dimensionality, has been addressed by some recent works based on high dimensional robust statistics  \cite{diakonikolas2023algorithmic}, such as \cite{su2018securing,shejwalkar2021manipulating,zhu2022c}, which have constant dependence on $d$. However, to the best of our knowledge, there are no previous methods that can solve both two issues mentioned above. Unfortunately, in modern federated learning problems, it may happen that models have large scale and most workers are not trusted. 

In this paper, we tackle both problems mentioned above simultaneously. In particular, we propose a novel approach to high dimensional distributed learning under arbitrary number of Byzantine attackers, with the help of a small auxiliary dataset. The core of our new approach is a new algorithm for semi-verified mean estimation, whose goal is to estimate the statistical mean by combining a small clean dataset and a large untrusted dataset \cite{charikar2017learning}. The basic idea is to filter out some gradient vectors that are very likely to come from Byzantine workers, and then identify a linear subspace, in which the components in this subspace are hard to estimate using these corrupted gradient vectors. These components are then estimated using auxiliary clean dataset. For other components that are perpendicular to this subspace, we just use the projected sample average. Our design is partially motivated by \cite{diakonikolas2021list}. A theoretical analysis is provided, including the upper bound of our algorithm and the information-theoretic minimax lower bound. The new semi-verified mean estimation algorithm is then used as the gradient aggregator for robust distributed learning problem. 

\section{Preliminaries}
\subsection{Problem Statement of Byzantine Robust Distributed Learning}
To begin with, we formalize the problem of distributed learning under Byzantine failures. Our formalization follows \cite{chen2017distributed,yin2018byzantine,zhu2023byzantine}.

Suppose there is a master machine $W_0$ and $m$ worker machines, $W_1,\ldots, W_m$. A clean dataset $A=\{\mathbf{Z}_{01},\ldots, \mathbf{Z}_{0N_A} \}$ is stored in the master, with size $|A|=N_A$.  Each worker machine $W_i$ has $n$ samples $\mathbf{Z}_{i1},\ldots, \mathbf{Z}_{in}$. All samples stored in both master and worker machines are identically and independently distributed (i.i.d), following a common distribution $Q$.

Let $f(\mathbf{w}, \mathbf{z})$ be a loss function of a parameter vector $\mathbf{w}\in \Omega\subseteq \mathbb{R}^d$, in which $\Omega$ is the parameter space. The population risk function is defined as
\begin{eqnarray}
	F(\mathbf{w}) = \mathbb{E}[f(\mathbf{w}, \mathbf{Z})],
\end{eqnarray} 
in which $\mathbf{Z}\sim Q$. Our goal is to learn the minimizer of $F$, i.e.
	$\mathbf{w}^*=\underset{\mathbf{w}\in \Omega}{\arg\min}F(\mathbf{w})$.


The FL framework runs as following. In each iteration, the master machine broadcasts current parameter $\mathbf{w}_t$ to all worker machines. Then these worker machines calculate the gradient using the local dataset:
\begin{eqnarray}
	\mathbf{X}_i(t) =\frac{1}{n}\sum_{j=1}^n \nabla f(\mathbf{w}_t, \mathbf{Z}_{ij}).
	\label{eq:xi}
\end{eqnarray}
For any benign workers, the master can receive $\mathbf{X}_i(t)$ exactly. On the contrary, Byzantine machines can send arbitrary messages determined by the attacker. Denote $\mathbf{Y}_i(t)\in \mathbb{R}^d$ as the vector received by the master at $t$-th iteration, then
\begin{eqnarray}
	\mathbf{Y}_i(t)=\left\{
	\begin{array}{ccc}
		\mathbf{X}_i(t) &\text{if} & i\notin \mathcal{B}\\
		\star &\text{if} & i\in \mathcal{B}.
	\end{array}
	\right.
	\label{eq:sendmaster}
\end{eqnarray}
Furthermore, we can calculate the gradient using the clean dataset stored in the master that is never corrupted, i.e.
\begin{eqnarray}
	\mathbf{X}_0(t)=\frac{1}{N_A}\sum_{j=1}^{N_A}\nabla f(\mathbf{w}_t, \mathbf{Z}_{0j}).
	\label{eq:x0}
\end{eqnarray}
\begin{algorithm}[tb]
	\caption{Robust Distributed Gradient Descent}\label{alg:learning}
	\textbf{Input:} Master machine $W_0$, worker machines $W_1,\ldots, W_m$\\
	\textbf{Parameter:}Initial weight parameter $\mathbf{w}_0\in \Omega$, step length $\eta$, total number of iterations $T$\\
	\textbf{Output:}Estimated weight $\hat{\mathbf{w}}$
	\begin{algorithmic}[1]
		\FOR{$t=0,1,\ldots, T-1$}
		\STATE \underline{Master machine}: broadcast current parameter $\mathbf{w}_t$ to all worker machines;
		\FOR{$i\in [m]$ \textbf{in parallel}}
		\STATE \underline{Worker machine $i$}: compute local gradient $\mathbf{X}_i(t)$ using \eqref{eq:xi};\\
		\IF{$i$ is normal machine}
		\STATE send $\mathbf{X}_i(t)$ to master;
		\ELSE
		\STATE send arbitrary $d$ dimensional vector to master;
		\ENDIF
		\ENDFOR
		\STATE \underline{Master machine}: Receive $\mathbf{Y}_i(t)$, $i=1,\ldots, m$ from each worker machine;\\
		\STATE Calculate aggregated gradient $g(\mathbf{w}_t)$ using $\mathbf{Y}_i(t)$, $i=1,\ldots, m$ and clean dataset in $W_0$;
		\STATE Update parameter $\mathbf{w}_{t+1} = \mathbf{w}_t-\eta g(\mathbf{w}_t)$ \label{step:update};			
		\ENDFOR
	\end{algorithmic}
\end{algorithm}

We would like to clarify that the adversary knows our algorithm, as well as $m$ gradient vectors $\mathbf{X}_i(t)$, $i=1,\ldots, m$. However, the adversary has no knowledge of the clean dataset $A$. This is practical since it is usually not hard to collect only a small set of auxiliary clean data from a reliable source that is not inspected by the attacker.

It remains to discuss how to calculate the aggregated gradient $g(\mathbf{w}_t)$. This problem can be formulated as \textit{Semi-verified mean estimation} (see Definition \ref{def:sv}), which was first defined in \cite{charikar2017learning}. 
\subsection{Problem Statement of Semi-verified Mean Estimation}
The aggregator $g(\mathbf{w}_t)$ estimates $\nabla F(\mathbf{w}_t)$ using $\mathbf{Y}_i(t)$, $i=1,\ldots, m$ as well as $\mathbf{X}_0(t)$, which is calculated using \eqref{eq:x0}. Among $m$ worker machines, $\alpha m$ of them are ensured to be benign, while others are probably Byzantine. Therefore, we state semi-verified mean estimation problem as follows.

\begin{definition}\label{def:sv}
	(Semi-verified mean estimation problem) Suppose $\mathbf{X}_1, \ldots, \mathbf{X}_m$ are i.i.d with mean $\mu^*$ and covariance matrix $V^*$. $\mathcal{B}\subset S_0=[m]$ is the set of attacked samples. If $i\notin \mathcal{B}$, then $\mathbf{Y}_i=\mathbf{X}_i$, otherwise $\mathbf{Y}_i$ is an arbitrary vector determined by the attacker. In addition, we have a sample $\mathbf{X}_0$ with mean $\mu^*$ and covariance matrix $V_A^*$, which is guaranteed to be unattacked. The task is to estimate $\mu^*$ using untrusted samples $\mathbf{Y}_1,\ldots, \mathbf{Y}_m$ and the clean sample $\mathbf{X}_0$.
\end{definition} 
Consider that the semi-verified mean estimation need to be repeated for all iterations, we omit timestep $t$ in Definition \ref{def:sv}. Each sample $\mathbf{Y}_i$ corresponds to the gradient vector from a worker machine $W_i$. At iteration $t$, $\mu^*=\nabla F(\mathbf{w}_t)$. Denote $V_0$ as the covariance matrix of $\nabla f(\mathbf{w}_t, \mathbf{Z}_{ij})$, then according to \eqref{eq:xi} and \eqref{eq:x0}, $V^*$ and $V_A^*$ in Definition \ref{def:sv} become
\begin{eqnarray}
	V^*=\frac{V_0}{n},
	V_A^*=\frac{V_0}{N_A}.\label{eq:vstar}
\end{eqnarray}
Finally, we discuss two attack models that are different on whether the attacked samples are randomly (Definition \ref{def:additive}) or carefully selected (Definition \ref{def:strong}).
\begin{definition}\label{def:additive}
	(Additive Contamination Model) The adversary randomly pick at most $(1-\alpha) m$ samples from $\mathbf{X}_1,\ldots, \mathbf{X}_m$, and alter their values arbitrarily.
\end{definition}
\begin{definition}\label{def:strong}
	(Strong Contamination Model) The adversary picks $(1-\alpha)m$ samples from $\mathbf{X}_1,\ldots, \mathbf{X}_m$ according to their values, and alter their values arbitrarily.
\end{definition}
These definitions follow \cite{diakonikolas2016robust,diakonikolas2020outlier}. Under additive contamination model,  adversarial samples are randomly selected, so the distribution of remaining clean samples remains unchanged; equivalently, there are already $\alpha m$ samples, and the adversary injects the remaining $(1-\alpha)m$ corrupted samples with arbitrary values.  Under strong contamination model,  since adversarial samples are carefully selected by the attacker, and thus those remaining samples follow a different distribution. Even if good samples can be identified, estimation of $\mu^*$ is still not guaranteed.

Practical scenarios usually lie in between these two models. In distributed learning problems, the attacked worker machines are usually not randomly selected, thus additive contamination model underestimates the impact of Byzantine attacks. On the contrary, strong contamination model tends to overestimate such impact, since it is unlikely for the adversary to gather full information and design optimal attack strategy. In this work, we analyze both two models.

\section{Semi-verified Mean Estimation}\label{sec:estimation}

Our algorithm for semi-verified mean estimation is shown in Algorithm \ref{alg}, which is partially motivated by the Subspace Isotropic Filtering (SIFT) algorithm proposed in \cite{diakonikolas2021list}. The idea is illustrated in Figure \ref{fig:illustrate}. The procedures are explained as follows.

\begin{algorithm}[tb]
	\caption{Semi verified mean estimation}\label{alg}
	\textbf{Input:} Untrusted samples $\mathbf{Y}_1,\ldots, \mathbf{Y}_m$ (from \eqref{eq:sendmaster}), and clean sample $\mathbf{X}_0$ (from \eqref{eq:x0})\\
	\textbf{Parameter:} $p$, $\lambda_c$\\
	\textbf{Output:} Estimated mean $\hat{\mu}$
	\begin{algorithmic}[1] 
		\STATE Initialize $S = S_0$\label{step:initial};
		\STATE 	$S=S\setminus\{i|\|\mathbf{Y}_i\|>m^{1/3} \}$;\label{step: prefilter}\\
		\WHILE{True}
		\STATE Calculate sample mean $\mu(S)$ and sample covariance $V(S)$ using \eqref{eq:mean} and \eqref{eq:cov};
		\STATE Conduct spectral decomposition of $V(S)$, such that $V(S)=\mathbf{U}\mathbf{\Lambda} \mathbf{U}^T$, in which $\mathbf{U}$ is orthogonal, and $\mathbf{\Lambda} = \diag (\lambda_1,\ldots, \lambda_d)$, $\lambda_1\geq \ldots\geq \lambda_d$;\label{step:spectral}
		\IF {$\lambda_p\geq \lambda_c$}
		\STATE Let $\mathbf{P}=\mathbf{U}_p\mathbf{U}_p^T$, in which $\mathbf{U}_p\in \mathbb{R}^{d\times p}$ is the first $p$ columns of $\mathbf{U}$;\label{step:proj}
		\STATE For each $i\in S$, calculate $\tau_i$ using \eqref{eq:taudef};
		\STATE $\tau_{\max}=\max_{i\in S}\tau_i$;
		\STATE For each $i\in S$, remove $i$ from $S$ with probability $\tau_i/\tau_{\max}$\label{step:remove};
		\ELSE
		\STATE break;
		\ENDIF
		\ENDWHILE
		\STATE Calculate $V(S)$, $\mathbf{P}$\label{step:again};
		\STATE $\hat{\mu} = \mathbf{P}\mathbf{X}_0+(\mathbf{I}-\mathbf{P})\mu(S)$;\label{step:last}
		\STATE \textbf{return} $\hat{\mu}$;
	\end{algorithmic}
\end{algorithm}
\begin{figure}[h!]
	\subfigure[Initialization and prefiltering (Step 1-2).]{\includegraphics[width=0.48\linewidth]{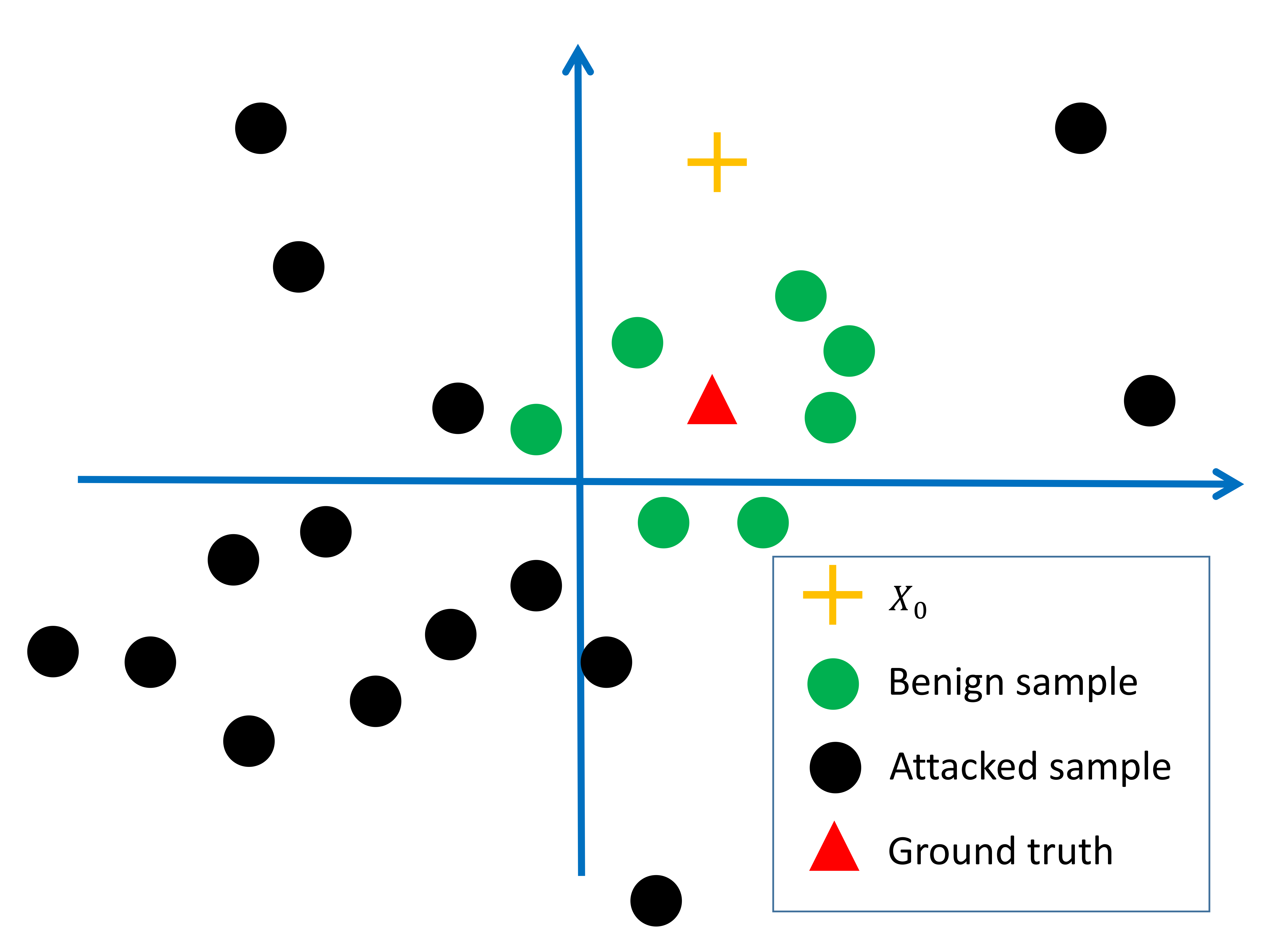}}
	\subfigure[Iterative filtering (Step 3-14).]{\includegraphics[width=0.48\linewidth]{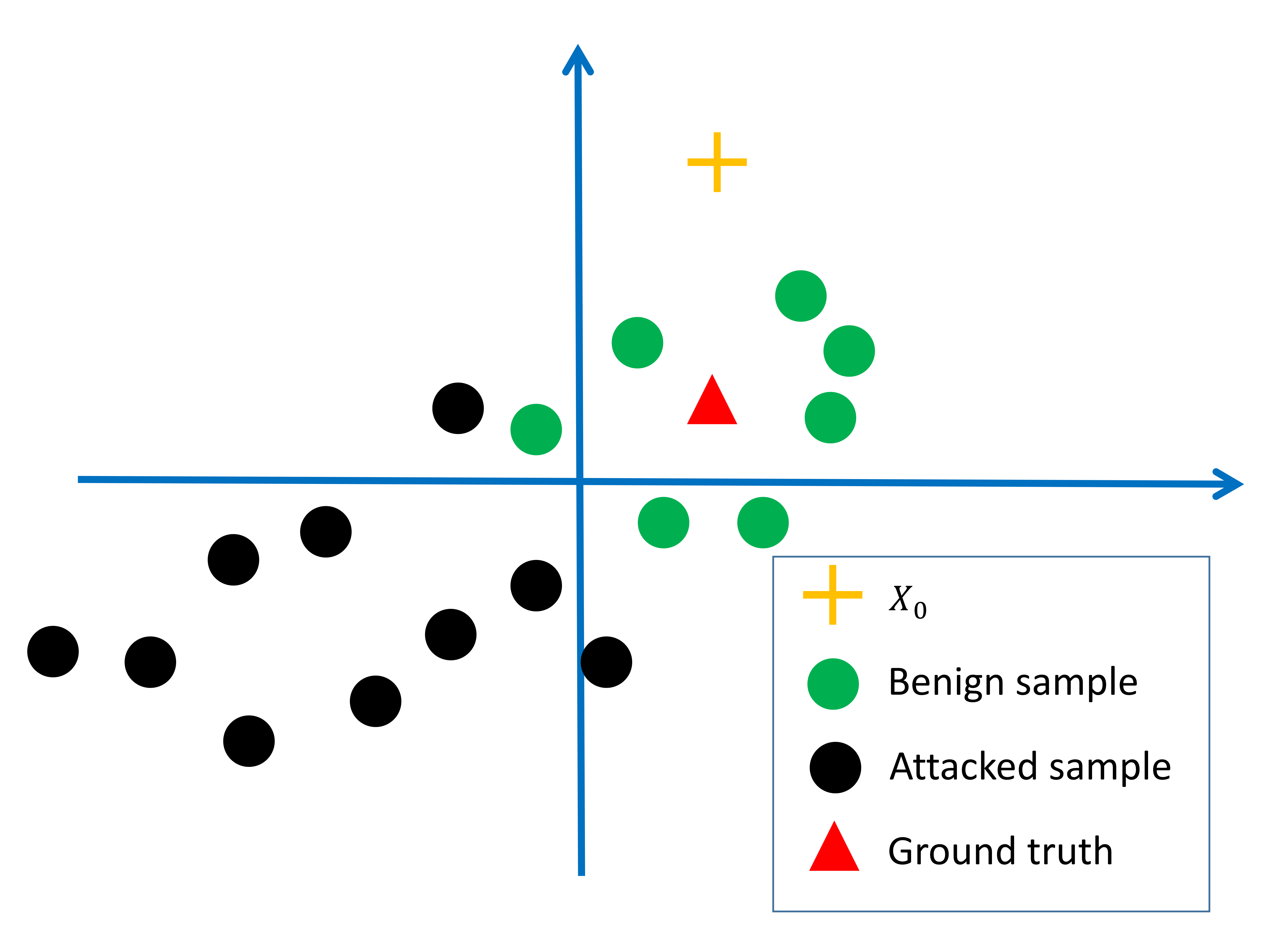}}
	\subfigure[Calculate sample mean and projection matrix (Step 15).]{\includegraphics[width=0.48\linewidth]{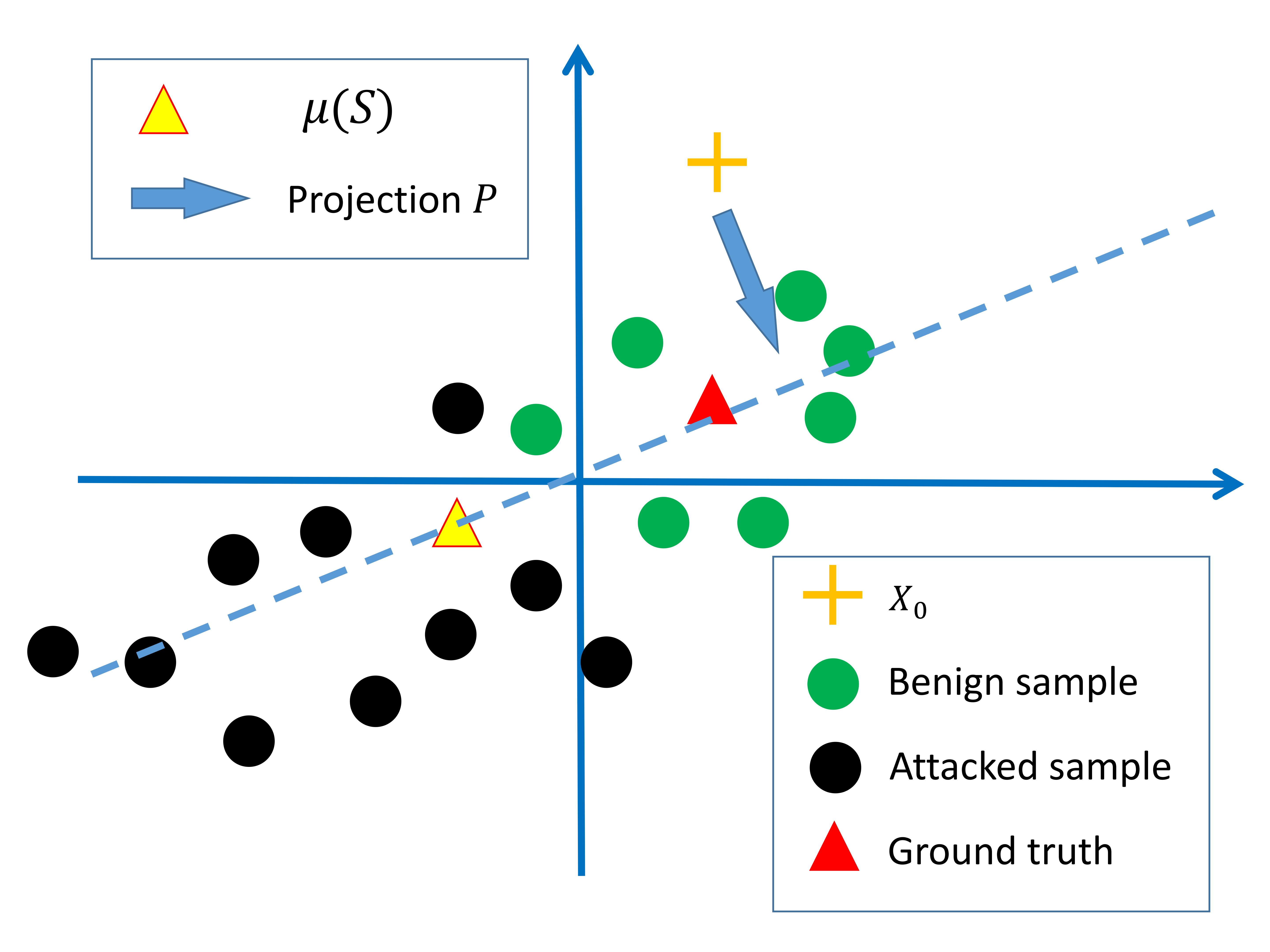}}
	\subfigure[Calculate final estimate $\hat{\mu}$ (Step 17).]{\includegraphics[width=0.48\linewidth]{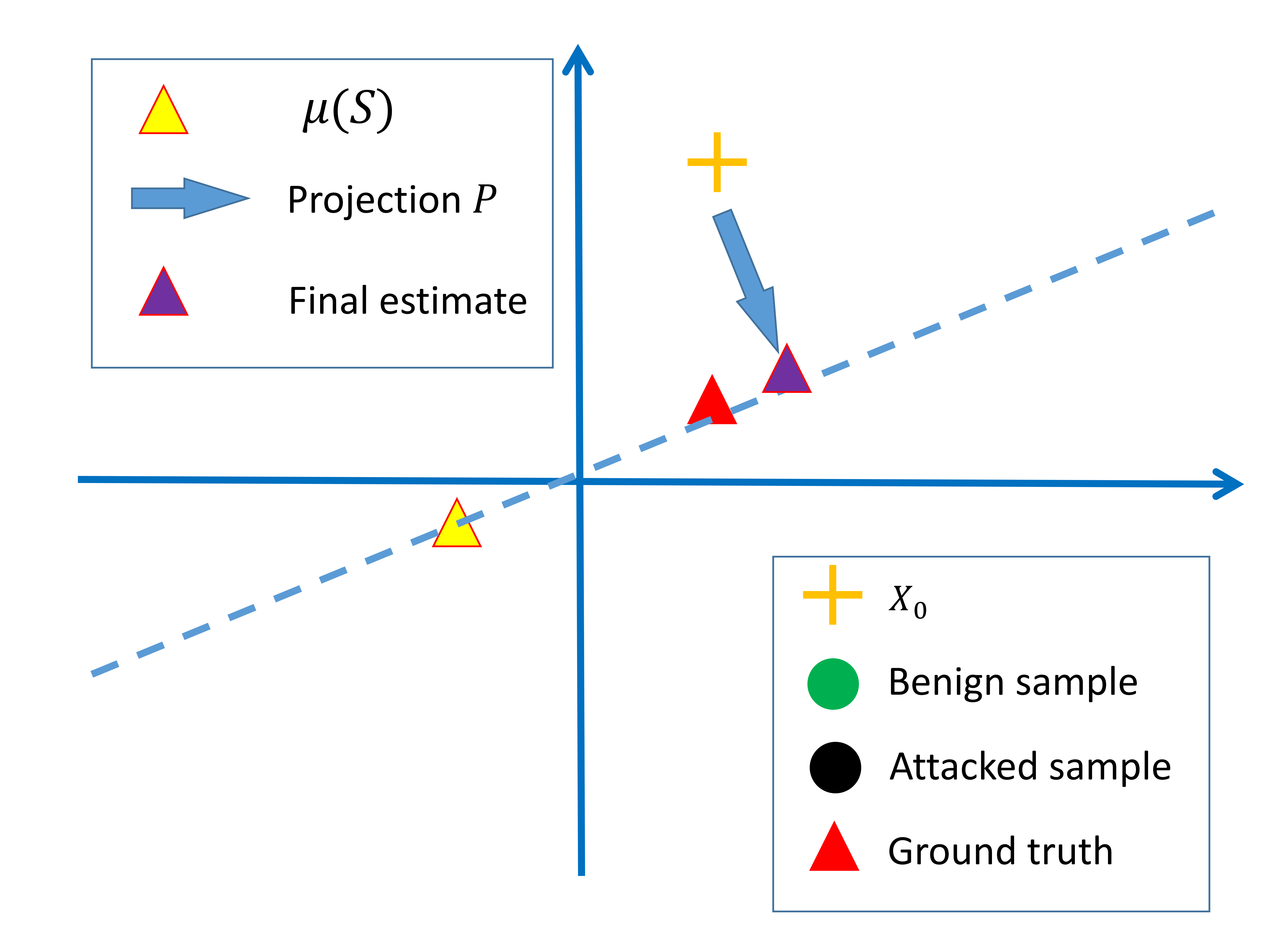}}
	\caption{A two-dimensional illustration of the semi-verified mean estimation method shown in Algorithm \ref{alg}. $\mu^*$ is represented by the red triangle. Benign and attacked samples correspond to green and black dots, respectively. The orange plus sign denotes  $\mathbf{X}_0$.}\label{fig:illustrate}
\end{figure}
(1) \textit{Initialization and prefiltering.} Let $S_0=\{1,\ldots,m \}$ be the indices of all untrusted samples and initialize $S=S_0$. We use a prefilter (step 2) to remove the samples whose norms are too large. This step is mainly used to simplify the theoretical analysis, and may not be necessary in practice. 

(2) \textit{Iterative filtering.} The algorithm then sanitizes the dataset by removing some samples that are highly likely to be attacked. In each iteration, sample mean and covariance matrix are calculated first:
\begin{eqnarray}
	\mu(S) &=& \frac{1}{|S|}\sum_{i\in S} \mathbf{Y}_i,\label{eq:mean}\\
	V(S)&=&\frac{1}{|S|}\sum_{i\in S} (\mathbf{Y}_i-\mu(S))(\mathbf{Y}_i-\mu(S))^T.
	\label{eq:cov}
\end{eqnarray}
If some eigenvalues of $V(S)$ are too large, then there should be some attacked samples which are moved in the directions of corresponding eigenvectors by the adversary. Motivated by this intuition, we conduct spectral decomposition of $V(S)$, such that $V(S) = \mathbf{U}\mathbf{\Lambda}\mathbf{U}^T$, with $\mathbf{\Lambda} = \diag(\lambda_1,\ldots, \lambda_d)$, $\lambda_1\geq \ldots \geq \lambda_d$. If there are $p$ eigenvalues larger than a certain threshold $\lambda_c$, i.e. $\lambda_p\geq \lambda_c$, then the algorithm removes some points to reduce the eigenvalue. In particular, each sample $i$ is assigned a score
\begin{eqnarray}
	\tau_i = \|\mathbf{P}V^{-\frac{1}{2}}(S)(\mathbf{Y}_i-\mu(S))\|_2^2,
	\label{eq:taudef}
\end{eqnarray}
in which $\mathbf{P}=\mathbf{U}_p\mathbf{U}_p^T$, with $\mathbf{U}_p\in \mathbb{R}^{d\times p}$ being the matrix of first $p$ columns of $\mathbf{U}$. $\mathbf{P}$ is a projection matrix to the space spanned by $p$ principal components. 

We now give  an intuitive explanation of \eqref{eq:taudef}. We first regularize the dataset by linear transformation with $V^{-\frac{1}{2}}(S)$. It is easy to show that samples after transformation have an identity covariance matrix:
\begin{eqnarray}
	&&\hspace{-1cm}\frac{1}{|S|}\sum_{i\in S} \left[V^{-\frac{1}{2}}(S)(\mathbf{Y}_i-\mu(S))\right]\left[V^{-\frac{1}{2}}(S)(\mathbf{Y}_i-\mu(S))\right]^T \nonumber\\
	&=& V^{-\frac{1}{2}}(S) V(S) V^{-\frac{1}{2}}(S)=\mathbf{I}_d.
\end{eqnarray}
After transformation, if a sample is far from the mean, i.e. $\|V^{-\frac{1}{2}}(S)(\mathbf{Y}_i-\mu(S))\|$ is large, then it is likely that this sample is attacked. As has been discussed earlier, it would be better to filter samples in directions corresponding to large eigenvalues. Therefore, \eqref{eq:taudef} incorporates the projection matrix $\mathbf{P}$. Larger $\tau_i$ indicates higher confidence that sample $i$ is attacked. With this intuition, sample $i$ is removed with probability $\tau_i/\tau_{\max}$, where $\tau_{\max}=\max_{i\in S} \tau_i$. This randomized removal is mainly for theoretical convenience; in practice, one may simply remove the $k$  samples with the largest 
$\tau$ values. The removal process continues until $\lambda_p<\lambda_c$.


Sometimes $V(S)$ has some very small eigenvalues, thus accurate numerical computation of $V^{-\frac{1}{2}}(S)$ is hard. In this case, the calculation can be simplified to $\tau_i=\|\sum_{j=1}^p \lambda_j^{-\frac{1}{2}} \mathbf{u}_j\mathbf{u}_j^T (\mathbf{Y}_i-\mu(S))\|_2^2$. This format can avoid the numerical problem of calculating \eqref{eq:taudef} directly.

(3) \textit{Calculate $V(S)$ and $\mathbf{P}$ again.} At step 15, after iterative removal, the algorithm calculates the projection matrix using step 7 in Algorithm \ref{alg}. This is illustrated in Figure \ref{fig:illustrate}(c), in which the blue arrow represents the projection operation.

(4) \textit{Calculate $\hat{\mu}$}. The semi-verified mean estimator is
\begin{eqnarray}
	\hat{\mu} = \mathbf{P}\mathbf{X}_0+(\mathbf{I}-\mathbf{P})\mu(S).
	\label{eq:hatmu}
\end{eqnarray}
As shown in Figure \ref{fig:illustrate}(d), both $\mathbf{X}_0$ and $V(S)$ are relatively far from the ground truth $\mu^*$. However, with appropriate projection, the final estimate with \eqref{eq:hatmu} is much closer to $\mu^*$.

\section{Theoretical Analysis}\label{sec:theory}

This section provides theoretical analysis about the semi-verified mean estimation method in Algorithm \ref{alg}. From now on, we use the following notations: For two matrices $\mathbf{A}$ and $\mathbf{B}$, $\mathbf{A}\preceq \mathbf{B}$ if $\mathbf{B}-\mathbf{A}$ is positive semidefinite. $a\lesssim b$ if $a\leq Cb$ for some constant $C$ that does not depend on $N_A$, $m$, $n$, $d$ and $\alpha$. $\Cov[\mathbf{U}]$ denotes the covariance matrix of random vector $\mathbf{U}$. The proofs are shown in the appendix. 

Our analysis begins with the following assumption, which requires the boundedness of covariance matrix:

\begin{assumption}\label{ass:var}
		$V_0\preceq \sigma^2 \mathbf{I}_d$,
	in which $V_0$ is the covariance matrix of $\nabla f(\mathbf{w}_t, \mathbf{Z}_{ij})$.
\end{assumption}
According to \eqref{eq:vstar}, we have $V^*\preceq (\sigma^2/n)\mathbf{I}_d$ and $V_A^*\preceq (\sigma^2/N_A)\mathbf{I}_d$.

\subsection{Upper Bound}
Theorem \ref{thm:additive} provides a bound of the performance of Algorithm \ref{alg} under additive contamination model.
\begin{theorem}\label{thm:additive}
	Under additive contamination model, if Assumption \ref{ass:var} hold, and parameters $p$, $\lambda_c$ in Algorithm \ref{alg} satisfy
	\begin{eqnarray}
		p&>&\frac{8}{\alpha},\label{eq:m}\\
		\lambda_c&>&32\frac{\sigma^2}{n}\left(1+\frac{2d}{\alpha m}\right),\label{eq:lamc}
	\end{eqnarray}
	then
	\begin{eqnarray}
		\mathbb{E}\left[\|\hat{\mu}-\mu^*\|_2^2\right]\leq \frac{3\sigma^2 p}{N_A}+\frac{15\lambda_c}{2\alpha}+\delta_m,
		\label{eq:result}
	\end{eqnarray}
	in which $\delta_m$ decays faster than any polynomial of $m$.
\end{theorem}

With $m\gtrsim d/\alpha$, from \eqref{eq:lamc}, we can let $\lambda_c\sim \sigma^2/n$, then \eqref{eq:result} becomes
\begin{eqnarray}
	\mathbb{E}\left[\|\hat{\mu}-\mu^*\|_2^2\right]\lesssim \sigma^2\left(\frac{p}{N_A}+\frac{1}{\alpha n}\right).	
	\label{eq:result2}
\end{eqnarray} 

We now discuss the selection of parameter $p$. Since \eqref{eq:m} requires $p\gtrsim 1/\alpha$, we discuss two cases. If $N_A\lesssim n$, according to \eqref{eq:result2}, let $p\sim 1/\alpha$, then $\mathbb{E}\left[\|\hat{\mu}-\mu^*\|_2^2\right]\lesssim \sigma^2 /(\alpha N_A)$. If $N_A\gtrsim n$, according to \eqref{eq:result2}, as long as $p$ satisfies $1/\alpha \lesssim p\lesssim N_A/(\alpha n)$, then $\mathbb{E}\left[\|\hat{\mu}-\mu^*\|_2^2\right]\lesssim \sigma^2 /(\alpha n)$. With this selection rule of $p$, \eqref{eq:result2} becomes
\begin{eqnarray}
	\mathbb{E}\left[\|\hat{\mu}-\mu^*\|_2\right]\lesssim \sigma\alpha^{-\frac{1}{2}}\left(\frac{1}{\sqrt{N_A}}+\frac{1}{\sqrt{n}}\right).
	\label{eq:finala}
\end{eqnarray}

Theorem \ref{thm:strong} shows the performance of Algorithm \ref{alg} under strong contamination model.

\begin{theorem}\label{thm:strong}
	Under strong contamination model, if Assumption \ref{ass:var} hold, $p$ satisfies \eqref{eq:m}, and $\lambda_c$ satisfies 
	\begin{eqnarray}
		\lambda_c>\frac{8\sigma^2}{\alpha n} \left(1+\sqrt{\frac{16d\ln^2 m}{3\alpha m}}\right)^2,
		\label{eq:lamc-strong}
	\end{eqnarray}
	then 	
	\begin{eqnarray}
		\mathbb{E}\left[\|\hat{\mu}-\mu^*\|_2^2\right]\leq \frac{3\sigma^2 p}{N_A}+\frac{15\lambda_c}{2\alpha}+\delta_m,
		\label{eq:result_strong}
	\end{eqnarray}
	in which $\delta_m$ decays faster than any polynomial of $m$.
\end{theorem}
Despite that \eqref{eq:result_strong} appears to be the same as \eqref{eq:result} for additive contamination model, the minimum value of $\lambda_c$ is different between these two models. With $m/\ln^2 m\gtrsim d/\alpha$, from \eqref{eq:lamc-strong}, we let $\lambda_c\sim \sigma^2 /(\alpha n)$, then \eqref{eq:result_strong} becomes
\begin{eqnarray}
	\mathbb{E}\left[\|\hat{\mu}-\mu^*\|_2^2\right]\lesssim \sigma^2 \left(\frac{p}{N_A}+\frac{1}{\alpha^2 n}\right).
	\label{eq:resstrong2}
\end{eqnarray}
If $N_A\lesssim \alpha n$, according to \eqref{eq:resstrong2}, let $p\sim 1/\alpha$. Otherwise, just need to ensure that $1/\alpha \lesssim p\lesssim N_A/(\alpha^2 n)$. Then \eqref{eq:resstrong2} becomes
\begin{eqnarray}
	\mathbb{E}\left[\|\hat{\mu}-\mu^*\|_2\right]\lesssim \sigma\alpha^{-\frac{1}{2}}\left(\frac{1}{\sqrt{N_A}}+\frac{1}{\sqrt{\alpha n}}\right).
	\label{eq:finalb}
\end{eqnarray}

Traditional methods, such as Zeno \cite{xie2019zeno,xie2020zeno++} has an error of $\sigma \alpha^{-1/2}(\sqrt{d/N_A}+\sqrt{d/n})$. In contrast, our approach—as shown in \eqref{eq:finala} and \eqref{eq:finalb}—does not suffer from the $\sqrt{d}$ dependence, making it significantly more effective for high-dimensional mean estimation.
Depending on whether we are assuming additive or strong contamination, the results are slightly different ($1/\sqrt{n}$ vs $1/\sqrt{\alpha n}$). As discussed earlier, strong contamination model gives the attacker more room for manipulation, since it allows the attacker to pick samples arbitrarily instead of randomly, thus the estimation error is larger.

\begin{remark}
	 \cite{diakonikolas2021list} has proposed an efficient method for list-decodable mean estimation, which generates a list of $O(1/\alpha)$ hypotheses whose minimum distance to $\mu^*$ is $O(1/\sqrt{\alpha})$. As is discussed in \cite{charikar2017learning}, list-decodable method can be used in semi-verified mean estimation. Compared with this indirect approach, our new method requires less number of auxiliary clean samples, and the parameter selection is more flexible. Detailed comparisons can be found in the appendix.
\end{remark}


\subsection{Minimax lower bound}
Now we show the information theoretic lower bound of semi-verified mean estimation problem. Under additive contamination model, the minimax risk is defined as following:
\begin{eqnarray}
	R_A(\alpha) = \underset{\hat{\mu}}{\inf}\underset{D\in \mathcal{F}}{\sup}\underset{\pi_A(\alpha)}{\sup}\|\hat{\mu}-\mu^*\|_2,
	\label{eq:ra}
\end{eqnarray}
in which $\mathcal{F}$ is the set of all distributions satisfying Assumption \ref{ass:var}, as well as \eqref{eq:vstar}, which implies that $\Cov[\mathbf{X}_i]\preceq (\sigma^2/n)\mathbf{I}$ and $\Cov[\mathbf{X}_0]\preceq (\sigma^2/N_A)\mathbf{I}$. $\pi_A(\alpha)$ is the policy of attacker, which maps $\mathbf{X}_i$ for $i\in S_0$ to $\mathbf{Y}_i$, $i\in S_0$. In particular, it picks $\lceil \alpha N\rceil$ samples randomly, and let $\mathbf{Y}_i=\mathbf{X}_i$ for these samples. For other samples, $\mathbf{Y}_i$ are arbitrary. $\hat{\mu}$ is the estimator, which is a function of $\mathbf{Y}_1,\ldots, \mathbf{Y}_m, \mathbf{X}_0$. 

Similarly, under strong contamination model, the minimax risk is defined as
\begin{eqnarray}
	R_S(\alpha) = \underset{\hat{\mu}}{\inf}\underset{D\in \mathcal{F}}{\sup}\underset{\pi_S(\alpha)}{\sup}\|\hat{\mu}-\mu^*\|_2,
	\label{eq:rs}
\end{eqnarray}
in which $\pi_S(\alpha)$ is policy of strong contamination. It maps $\mathbf{X}_i$ to $\mathbf{Y}_i$ arbitrarily, as long as $\mathbf{Y}_i=\mathbf{X}_i$ for at least $\alpha N$ samples.

The results are shown in Theorem \ref{thm:minimax}.
\begin{theorem}\label{thm:minimax}
	If
	$\left\lceil N_A/n\right\rceil\leq \ln \frac{d}{4}/\left(4\beta\left(\ln \frac{1}{\beta} + 1\right)\right)$,
	then with probability at least $1/2-\exp[-(\ln 2-1/2)m\alpha]$,
	\begin{eqnarray}
		R_A(\alpha)&\geq& \frac{\sigma}{2\sqrt{2\alpha}}\left(\frac{1}{\sqrt{n}}+\frac{1}{\sqrt{N_A}}\right),\label{eq:rabound}\\
		R_S(\alpha)&\geq & \frac{\sigma}{2\sqrt{2\alpha}}\left(\frac{1}{\sqrt{\alpha n}}+\frac{1}{\sqrt{N_A}}\right).
		\label{eq:rsbound}
	\end{eqnarray}
\end{theorem}

The upper bound matches the minimax lower bound up to constant factors. Such results indicate that the error rates of Algorithm \ref{alg} are optimal. In other words, it is impossible to further improve the bounds in Theorem \ref{thm:additive} and \ref{thm:strong} in general.

\section{Application in Distributed Learning under Byzantine Attack}\label{sec:byzantine}

Based on the analysis of semi-verified mean estimation in the previous section, we now analyze the distributed learning method in Algorithm \ref{alg:learning}, which uses our new semi-verified mean estimation method as the aggregator function. Our analysis is based on the following assumption.
\begin{assumption}\label{ass:distributed}
	(a) For all $\mathbf{w}\in \Omega$, $\nabla f(\mathbf{w}, \mathbf{Z})$ is sub-exponential with parameter $\sigma$, i.e.
	\begin{eqnarray}
		\underset{\mathbf{v}:\|\mathbf{v}\|_2=1}{\sup}\mathbb{E}\left[e^{\lambda \mathbf{v}^T\left(\nabla f(\mathbf{w}, \mathbf{Z})-\nabla F(\mathbf{w})\right)}\right]\leq e^{\frac{1}{2}\sigma^2\lambda^2}, 
	\end{eqnarray}
	for all $\lambda$ with $|\lambda|\leq 1/\sigma$;
	
	(b) $F(\mathbf{w})$ is $\mu$-strong convex and $L$-smooth in $\mathbf{w}$.
\end{assumption}

(a) is more restrictive than Assumption \ref{ass:var}. (a) requires the distribution of gradient values to be sub-exponential, while the latter only requires bounded eigenvalues of covariance matrix. Such strengthened assumption is made only for theoretical completeness. For (b), despite that our theoretical results are derived under the assumption that $F$ is strong convex, similar to \cite{yin2018byzantine}, our analysis can be easily generalized to the case with non-strong convex and nonconvex functions.

Theorem \ref{thm:distributed} bounds the final error of $\hat{\mathbf{w}}$ under additive and strong contamination model.
\begin{theorem}\label{thm:distributed}
	Suppose that the following conditions are satisfied: 	
	(1) Assumption \ref{ass:distributed} hold; (2) $p>8/\alpha$; (3) $\lambda_c$ satisfies
	\begin{eqnarray}
		\lambda_c > 32\frac{\sigma^2}{n}\left(1+\frac{2d}{\alpha m}\right)
		\label{eq:lamc-dl}
	\end{eqnarray}	
	under additive contamination model, or
	\begin{eqnarray}
		\lambda_c > \frac{8\sigma^2}{\alpha n} \left(1+\sqrt{\frac{16d\ln^2 m}{3\alpha m}}\right)^2
		\label{eq:lamc-dl-strong}
	\end{eqnarray}
	under strong contamination model; (4) $\eta \leq 1/L$; (5) The size of auxiliary clean dataset satisfies
		$N_A\geq 2\ln (mT/\delta)$.	
	
	Then with probability at least $1-\delta-Te^{-\frac{1}{64}\alpha m}-4Tme^{-\frac{1}{16} \lambda_cm\alpha^2 m^\frac{1}{3}\epsilon^2}$,
	\begin{eqnarray}
		\|\hat{\mathbf{w}}-\mathbf{w}^*\|\leq (1-\rho)^T\|\mathbf{w}_0-\mathbf{w}^*\|_2+\frac{2\Delta}{\mu},
	\end{eqnarray}
	in which $\hat{\mathbf{w}}=\mathbf{w}_T$ is the updated weight after $T$ iterations, and $\rho = \eta\mu/2$,
	
	\begin{eqnarray}
		\Delta = \sqrt{\frac{6p}{N_A}\sigma^2 \ln \frac{pT}{\delta} + \frac{15\lambda_c}{2\alpha}}.
		\label{eq:delta}
	\end{eqnarray}
\end{theorem}

If $m\gtrsim d/\alpha$, $p\sim 1/\alpha$, and $T$ is large enough, then under additive contamination model, with $\lambda_c\sim \sigma^2 /n$,
\begin{eqnarray}
	\|\mathbf{w}_T-\mathbf{w}^*\|_2=\tilde{O}\left(\frac{\sigma}{\sqrt{\alpha}}\left(\frac{1}{\sqrt{N_A}}+\frac{1}{\sqrt{n}}\right)\right).
	\label{eq:final}
\end{eqnarray}

Under strong contamination model, with $\lambda_c\sim \sigma^2 / (\alpha n)$,
\begin{eqnarray}
	\|\mathbf{w}_T-\mathbf{w}^*\|_2=\tilde{O}\left(\frac{\sigma}{\sqrt{\alpha}}\left(\frac{1}{\sqrt{N_A}}+\frac{1}{\sqrt{\alpha n}}\right)\right).
\end{eqnarray}

The above results show that, as long as the number of worker machines $m$ grows proportionally with $d$, and $N_A$ grows slightly with $d$, then the learning error does not increase with dimensionality. Hence, compared with \cite{cao2019distributed,xie2019zeno,xie2020zeno++}, our new method removes the $\sqrt{d}$ dependence and is thus more suitable to high dimensional problems. 


\section{Numerical Results}\label{sec:numerical}
This section provides results of numerical simulation. We compare different gradient aggregators:

\textbf{1. Master only.} This means that we only use the gradient values from the auxiliary clean data stored in the master:
\begin{eqnarray}
	g_{Master}(\mathbf{w}) = \frac{1}{N_A}\sum_{j=1}^{N_A}\nabla f(\mathbf{w}, \mathbf{Z}_{ij}).
\end{eqnarray}
This method is used as a baseline, in order to show the benefits of combining clean samples with the untrusted gradient information from worker machines.

\textbf{2. Distance based filtering.} This method comes from \cite{cao2019distributed}. Here we just assume that $q$ is known. Among all $m$ gradient vectors from worker machines, this method picks $m-q$ closest one, and then calculate the weighted average:
\begin{eqnarray}
	g_{DBF}(\mathbf{w}) = \frac{N_Ag_{Master}+n\underset{i\in \mathcal{N}_{m-q}(g_{Master}(\mathbf{w}))}{\sum} \mathbf{Y}_i}{N_A+n(m-q)},
\end{eqnarray}
in which $\mathcal{N}_{m-q}(g_{Master}(\mathbf{w}))$ means the $m-q$ nearest neighbors of $g_{Master}$ among corrupted gradient vectors from worker machines, $\{\mathbf{Y}_1,\ldots, \mathbf{Y}_m \}$.

\textbf{3. Zeno.} This method was proposed in \cite{xie2019zeno}. After that, an improved version was provided in \cite{xie2020zeno++}, called Zeno++. Here we use the first order expansion of stochastic descent score mentioned in \cite{xie2020zeno++}. In particular, this method assigns a score 
\begin{eqnarray}
	Score(\mathbf{Y}_i, g_{Master}) = \gamma \langle g_{Master}, \mathbf{Y}_i\rangle - \rho\|\mathbf{Y}_i\|_2^2,
	\label{eq:sdscore}
\end{eqnarray}
and then use the average gradients from $m-q$ worker machines with highest scores:
	$g_{Zeno}(\mathbf{w}) =(1/(m-q))\sum_{i=1}^{m-q}\mathbf{Y}_{(i)}$, 
in which $\mathbf{Y}_{(i)}$ is the gradient vector with $i$-th highest score according to \eqref{eq:sdscore}.

\textbf{4. Our approach.} This refers to our new approach in Algorithm \ref{alg:learning}.

For all these four methods, we implement two types of attack: random attack, in which $\mathbf{Y}_i\sim \mathcal{N}(0, \sigma_{attack}^2 \mathbf{I})$, and sign-flip attack, which flips the sign of original gradient vector, such that $\mathbf{Y}_i=-\mathbf{X}_i$.

We would like to remark here that our method relies on auxiliary clean dataset, thus we only compare with previous methods that also use auxiliary samples. Other methods designed for $q<m/2$, such as Krum \cite{blanchard2017machine}, geometric median-means \cite{chen2017distributed}, coordinate-wise median or trimmed mean \cite{yin2018byzantine}, and recent high dimensional methods \cite{zhu2023byzantine}, are not shown here due to unfair comparison.

\begin{figure}[h!]
	\centering	
		\subfigure[Random attack, $d=20$.]{\includegraphics[width=0.48\linewidth,height=0.4\linewidth]{./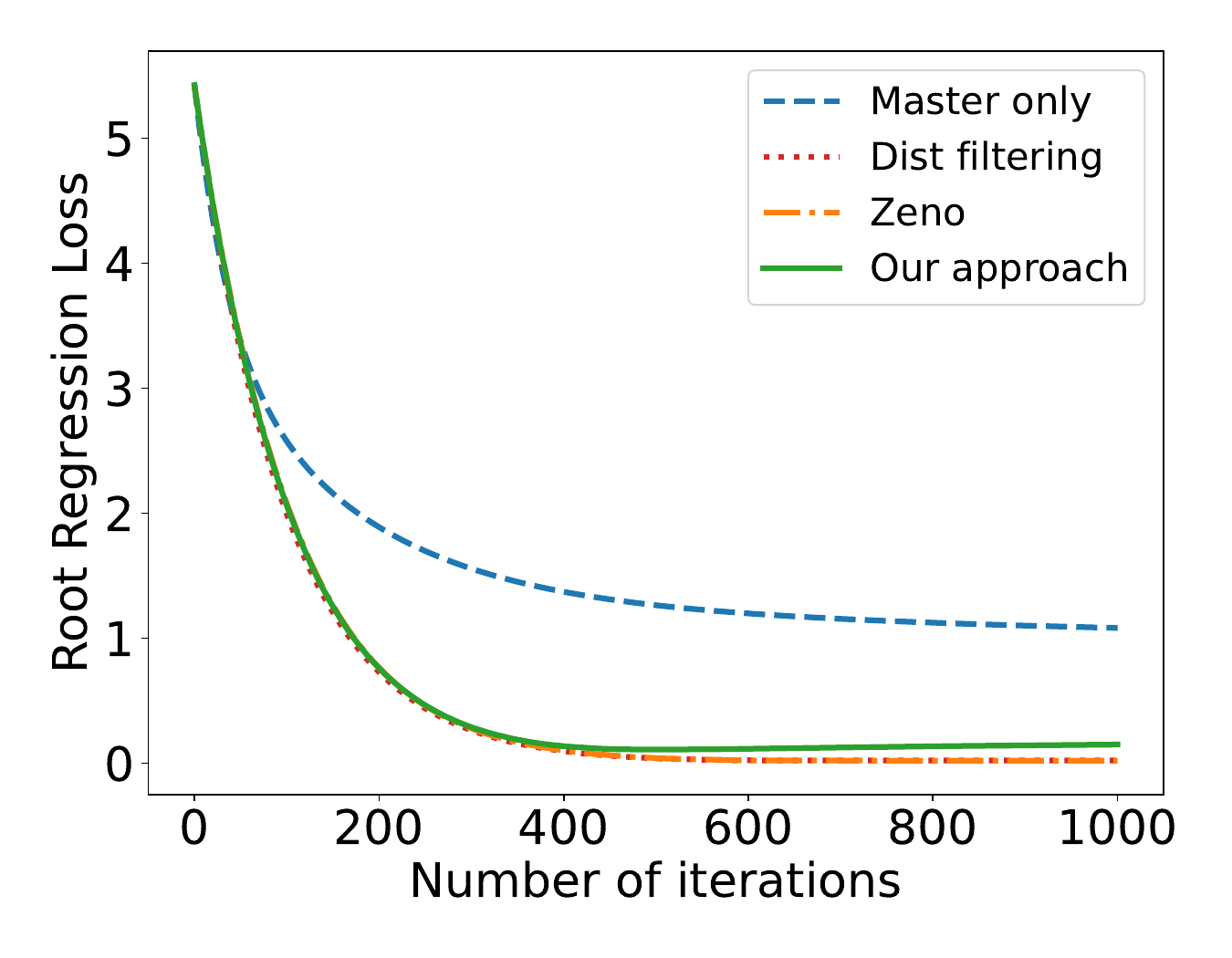}}
		\subfigure[Sign-flip attack, $d=20$.]{\includegraphics[width=0.48\linewidth,height=0.4\linewidth]{./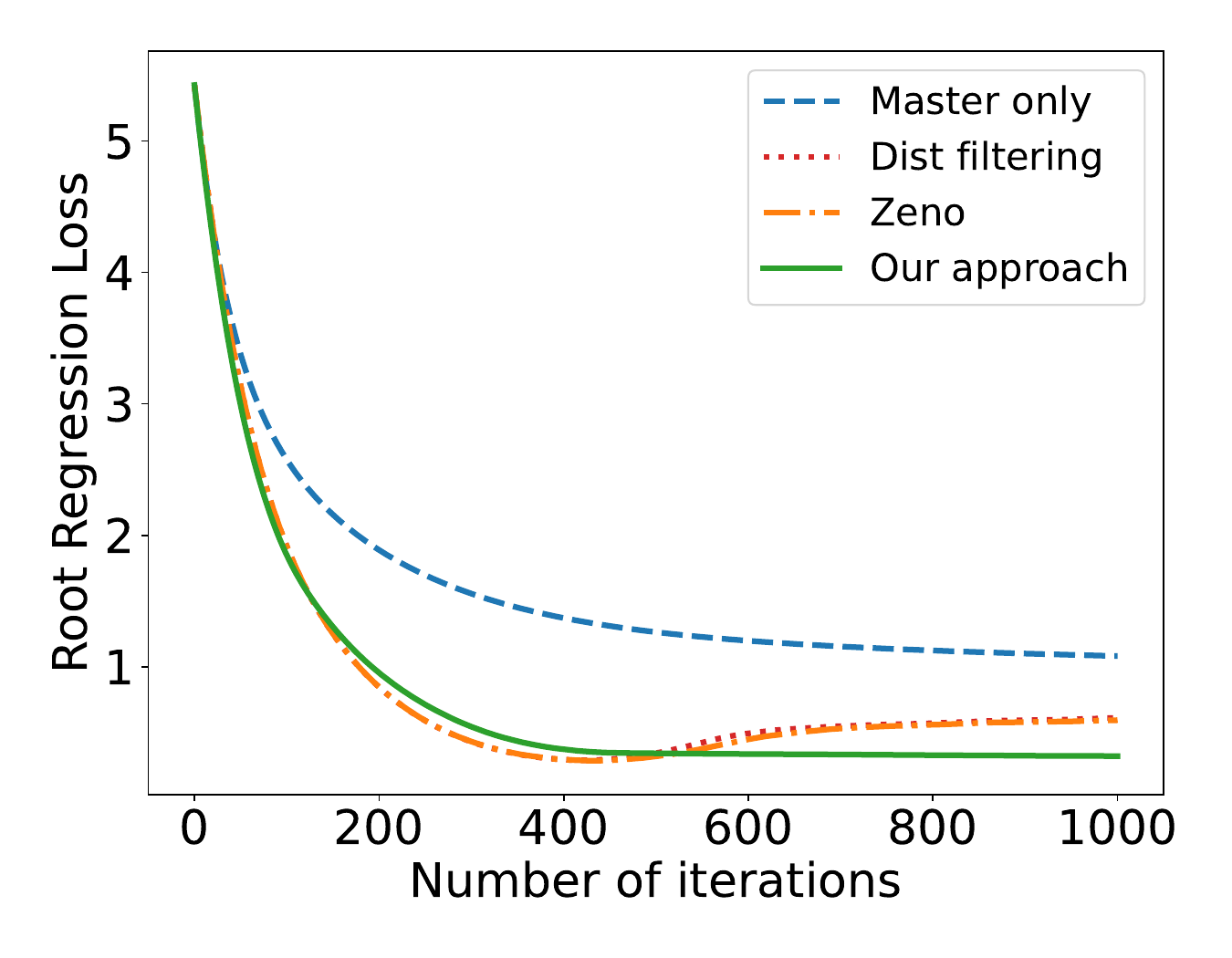}}
		\subfigure[Random attack, $d=50$.]{\includegraphics[width=0.48\linewidth,height=0.4\linewidth]{./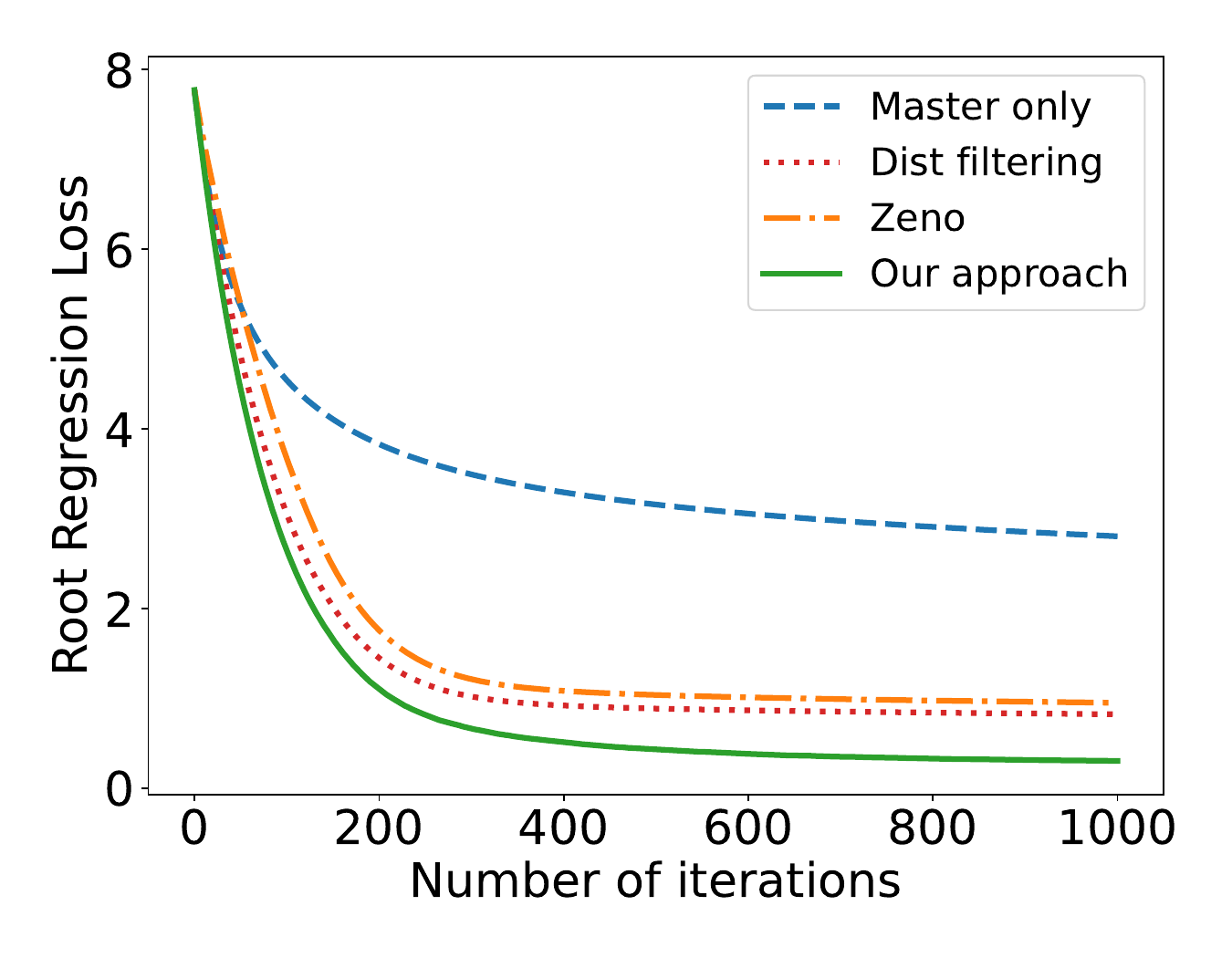}}
		\subfigure[Sign-flip attack, $d=50$.]{\includegraphics[width=0.48\linewidth,height=0.4\linewidth]{./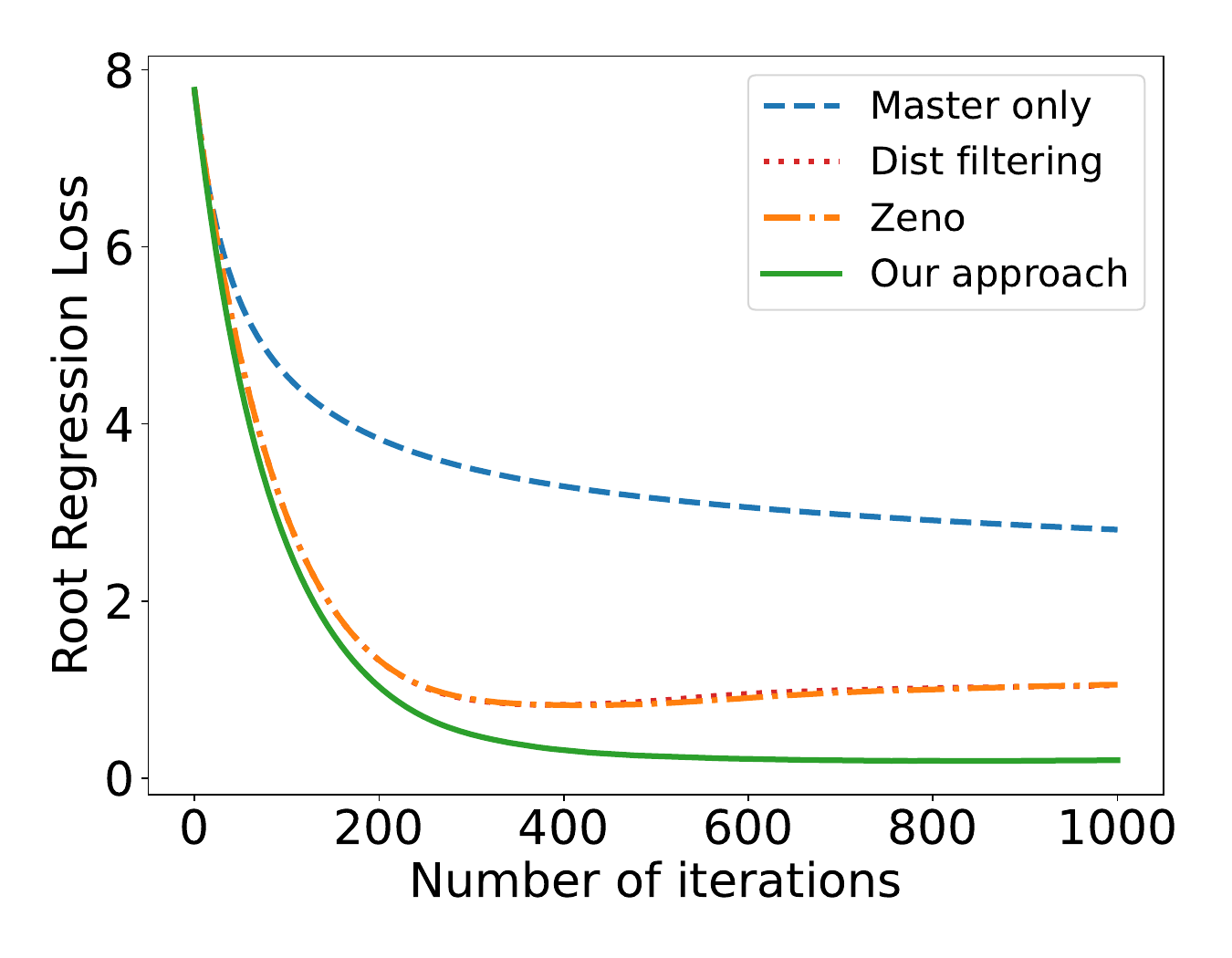}}
		\subfigure[Random attack, $d=100$.]{\includegraphics[width=0.48\linewidth,height=0.4\linewidth]{./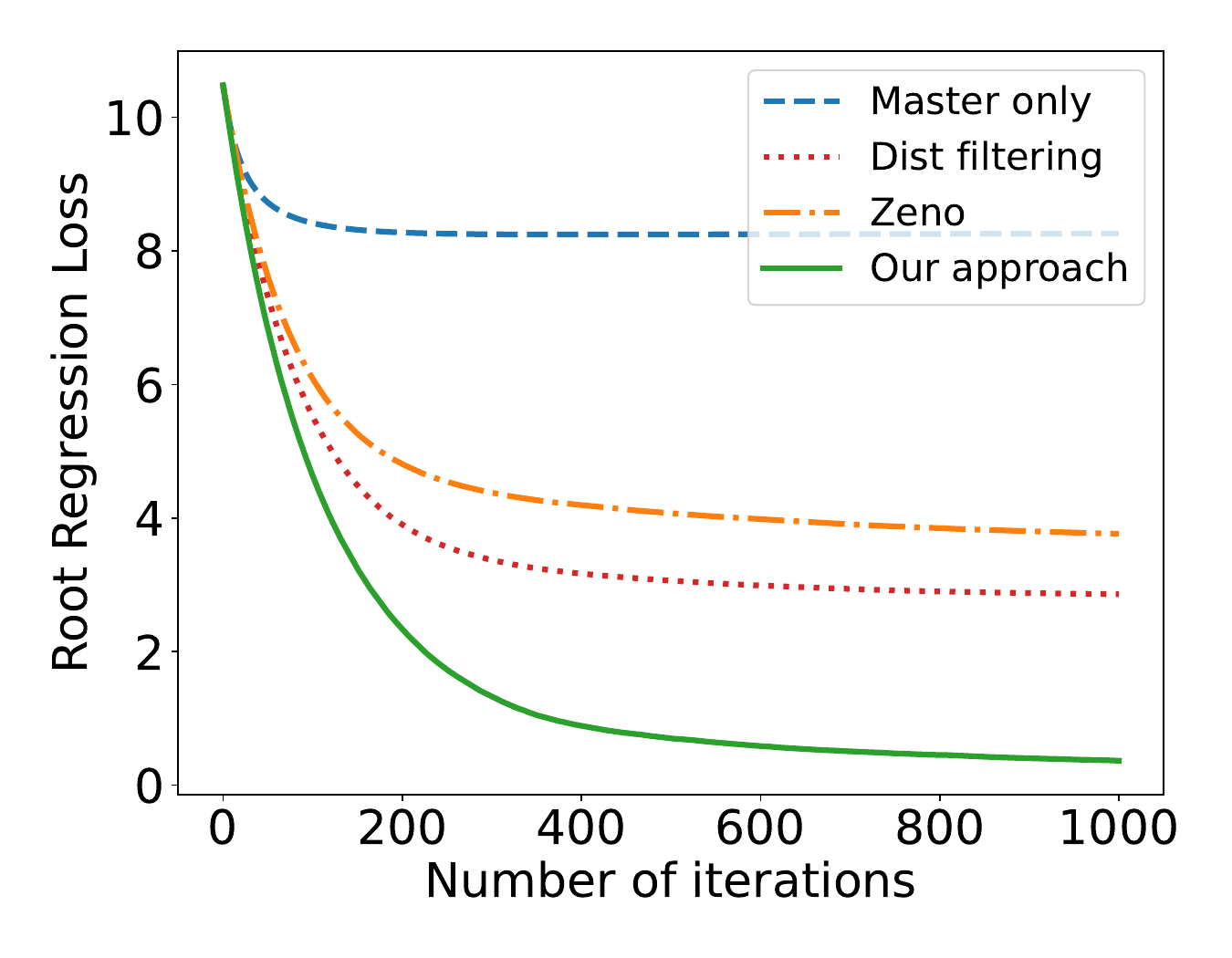}}
		\subfigure[Sign-flip attack, $d=100$.]{\includegraphics[width=0.48\linewidth,height=0.4\linewidth]{./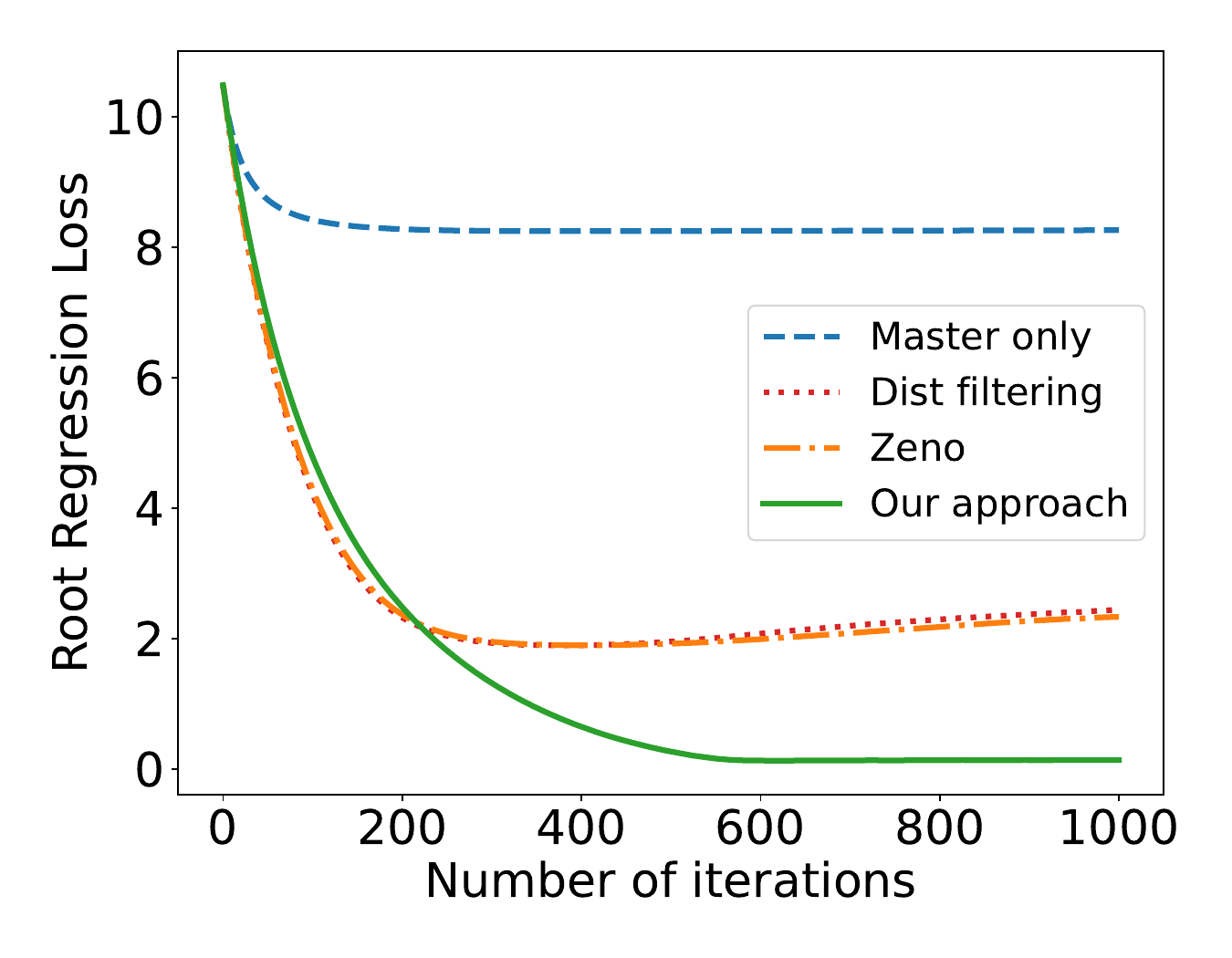}}
	\caption{Experiment results with synthesized data under linear model, with $q/m=0.8$.}\label{fig:syn1}
    \vspace{-5mm}
\end{figure}

\subsection{Synthesized Data}

We first run experiments with distributed linear regression. The model is
\begin{eqnarray}
	V_i=\langle \mathbf{U}_i, \mathbf{w}^*\rangle +W_i,
	\label{eq:syntheticmodel}
\end{eqnarray}
in which $\mathbf{U}_i, \mathbf{w}^*\in \mathbb{R}^d$, $\mathbf{w}^*$ is the true parameter vector, $W_i$ is the i.i.d noise following normal distribution $\mathcal{N}(0,1)$. Now we randomly generate $\mathbf{w}^*$ with each coordinate following normal distribution $\mathcal{N}(0,2)$. We generate $N=50,000$ samples. For each sample, all components of $\mathbf{U}_i$ are i.i.d following $\mathcal{N}(0,1)$, and $V_i$ are calculated using \eqref{eq:syntheticmodel}. These samples are distributed into $m= 500$ worker machines. The learning rate is set to be $\eta = 5\times 10^{-3}$. Moreover, there are $N_A=50$ auxiliary clean samples. The results with $q=400$ (i.e. $\alpha = 0.2$) and $q=100$ (i.e. $\alpha = 0.8$) Byzantine machines are shown in Figure~\ref{fig:syn1} and Figure~\ref{fig:syn2}, respectively. We set $p=5$ for $q=400$, and $p=2$ for $q=100$. In both experiments, we show the results with different dimensionality $d=20, 50, 100$. We run experiments with both random attack and sign-flip attack, in which $\sigma_{attack} = 1$ for the former one.
\begin{figure}[h!]
	\centering	
\subfigure[Random attack, $d=20$.]{\includegraphics[width=0.48\linewidth,height=0.4\linewidth]{./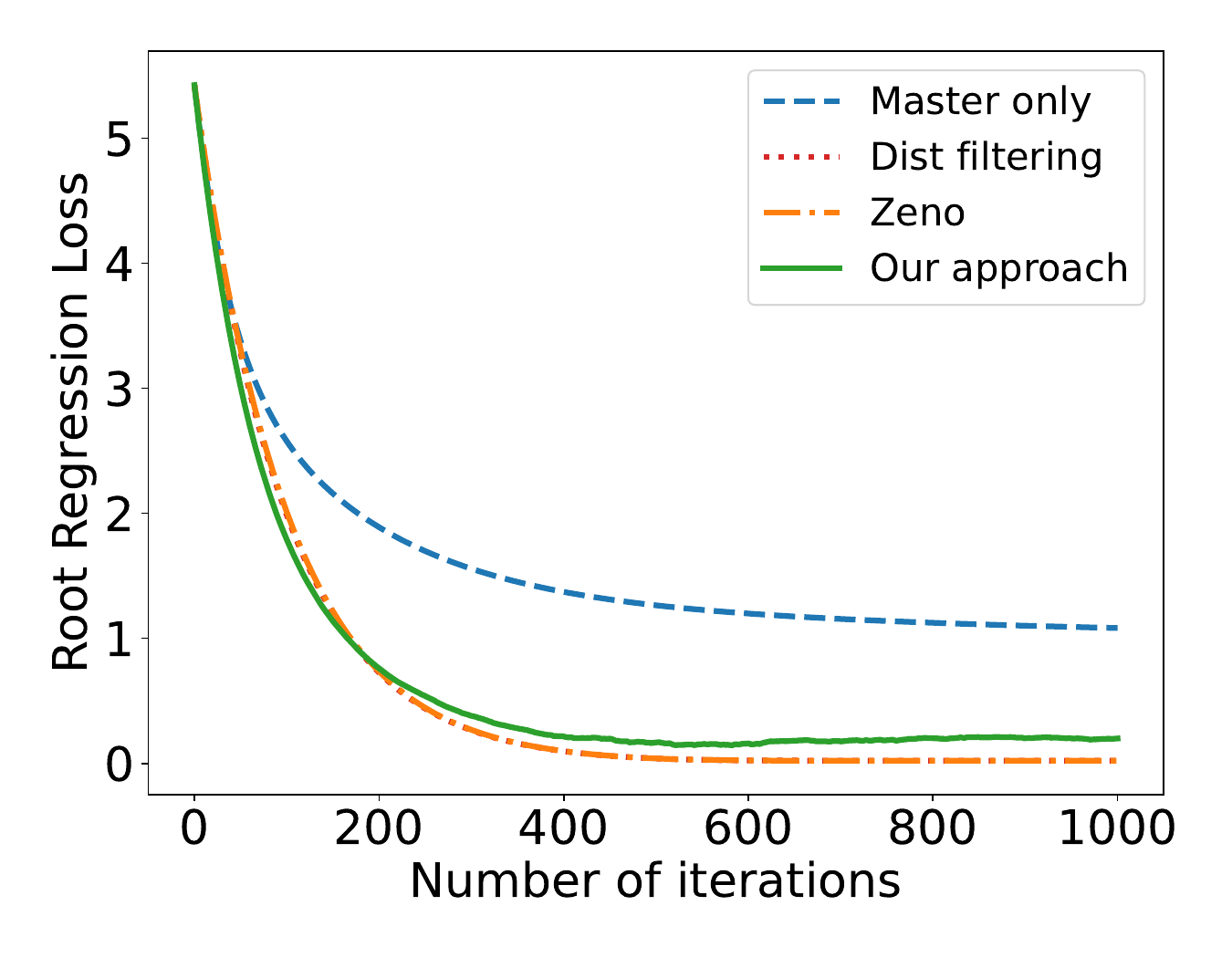}}
\subfigure[Sign-flip attack, $d=20$.]{\includegraphics[width=0.48\linewidth,height=0.4\linewidth]{./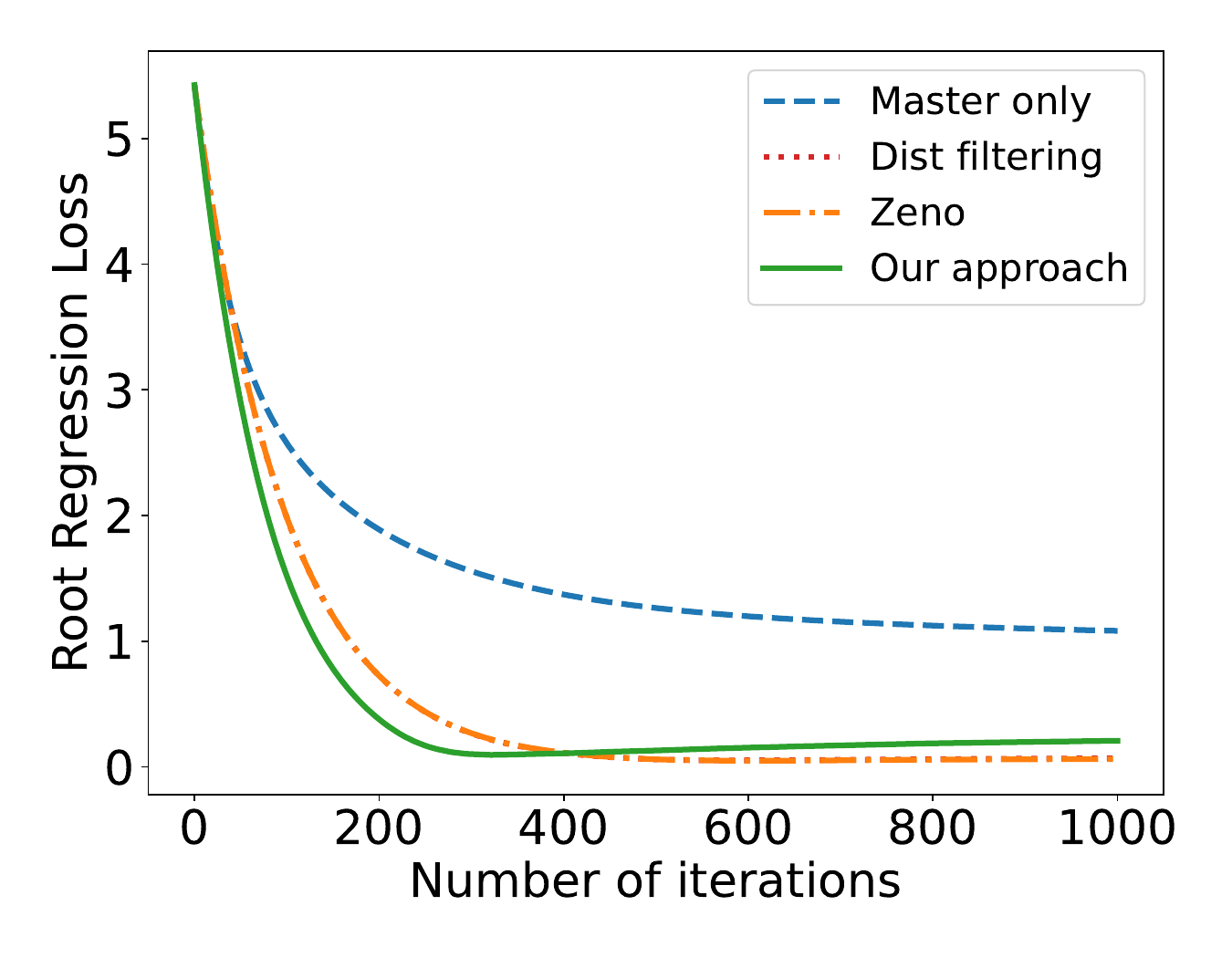}}
\subfigure[Random attack, $d=50$.]{\includegraphics[width=0.48\linewidth,height=0.4\linewidth]{./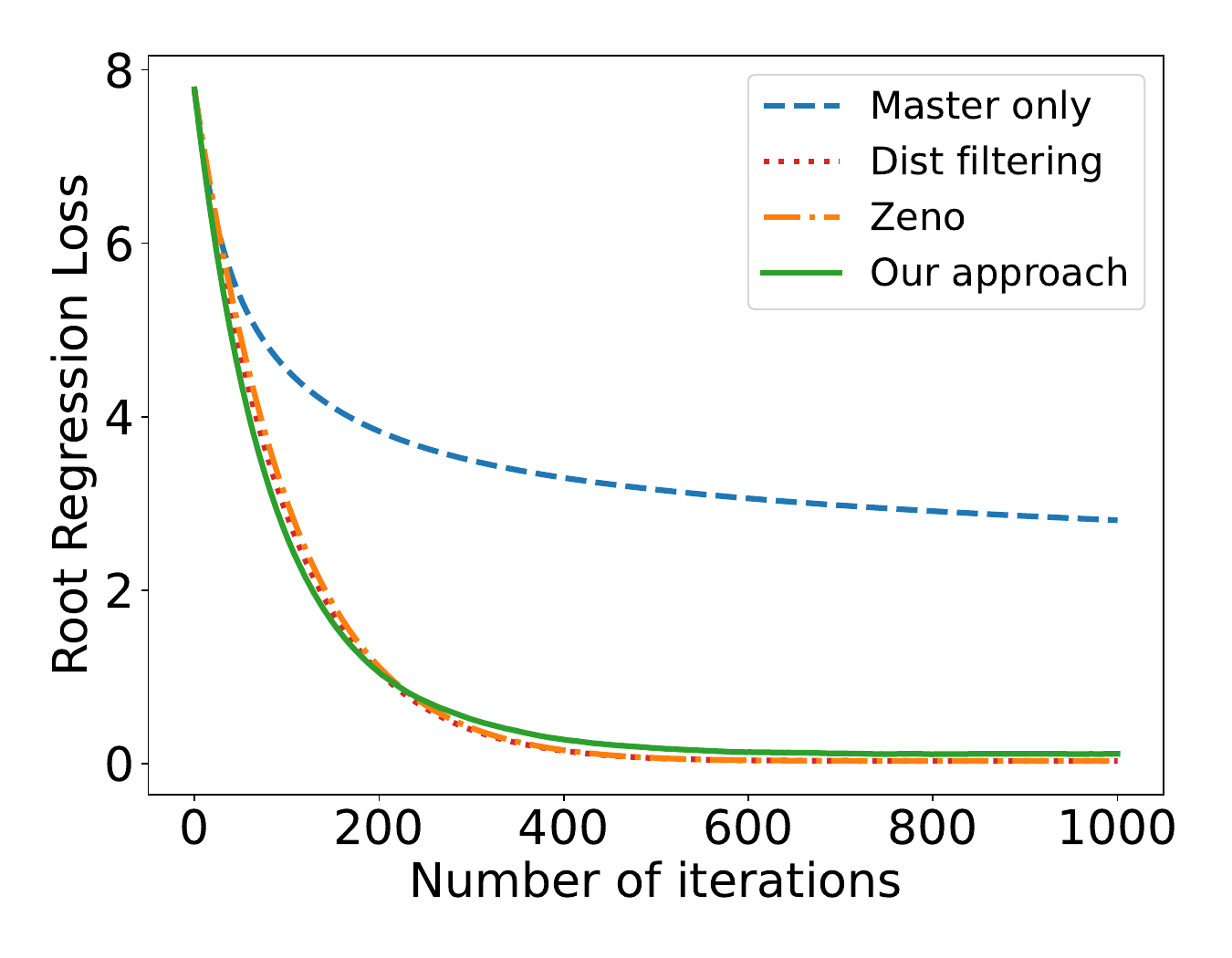}}
\subfigure[Sign-flip attack, $d=50$.]{\includegraphics[width=0.48\linewidth,height=0.4\linewidth]{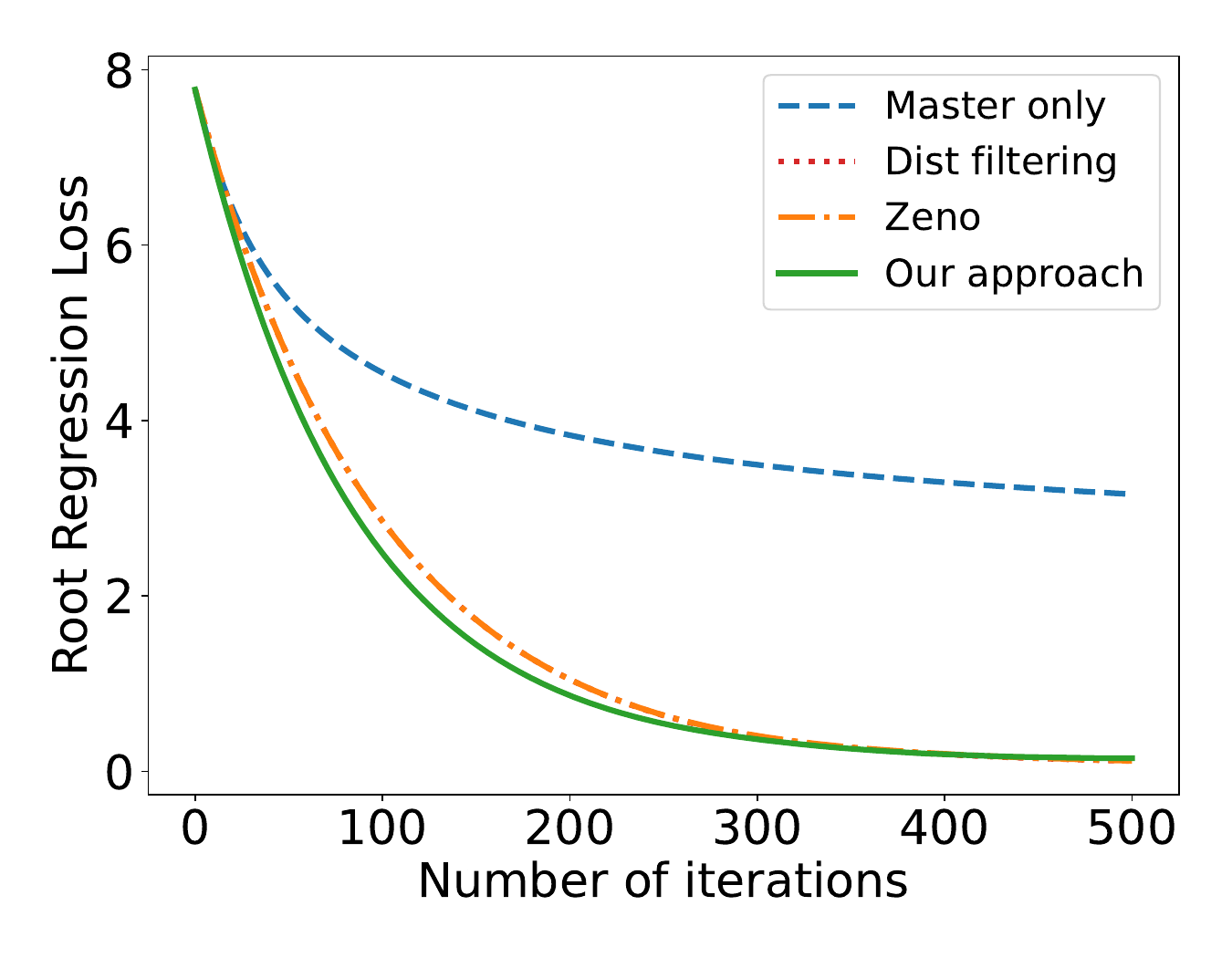}}
\subfigure[Random attack, $d=100$.]{\includegraphics[width=0.48\linewidth,height=0.4\linewidth]{./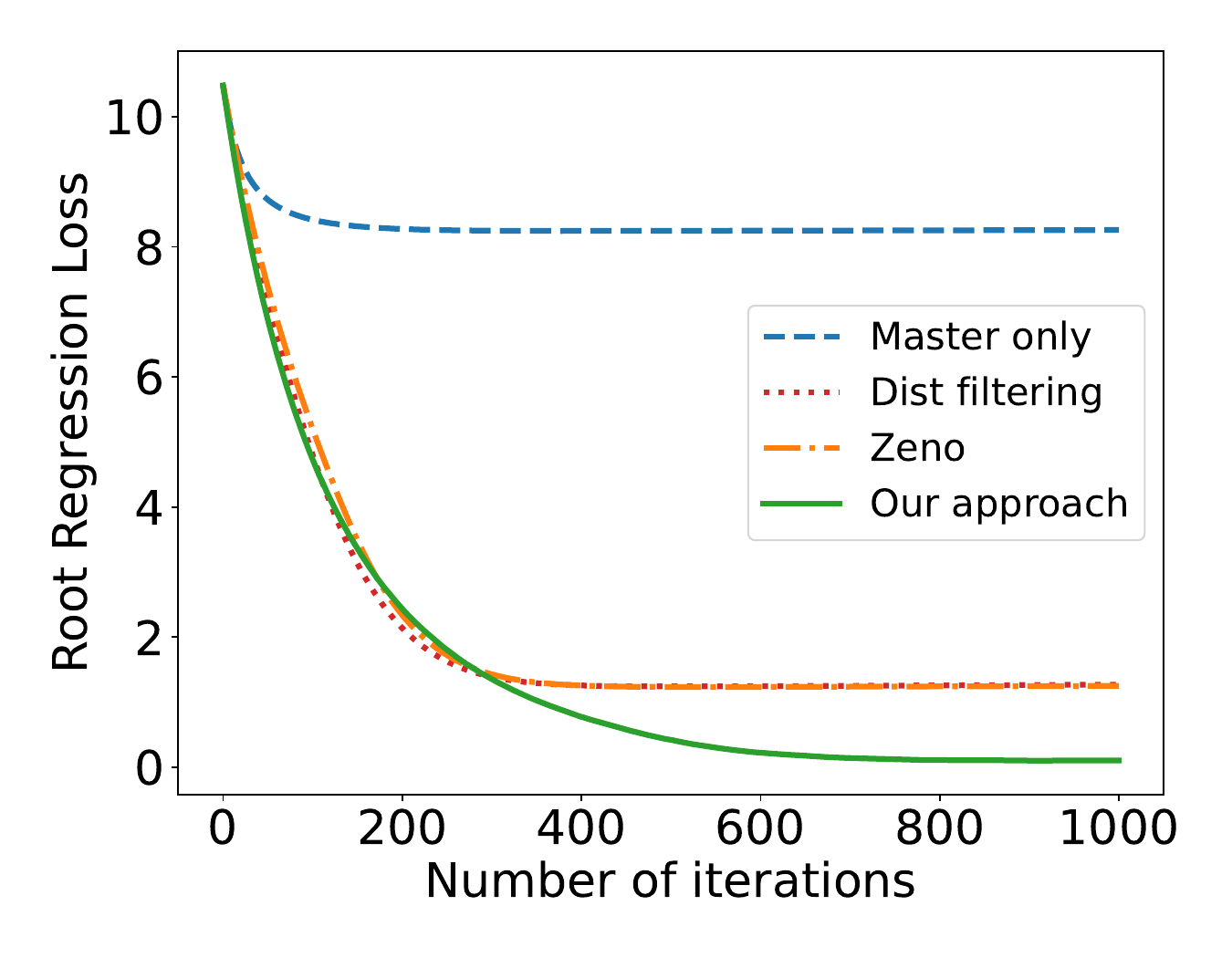}}
\subfigure[Sign-flip attack, $d=100$.]{\includegraphics[width=0.48\linewidth,height=0.4\linewidth]{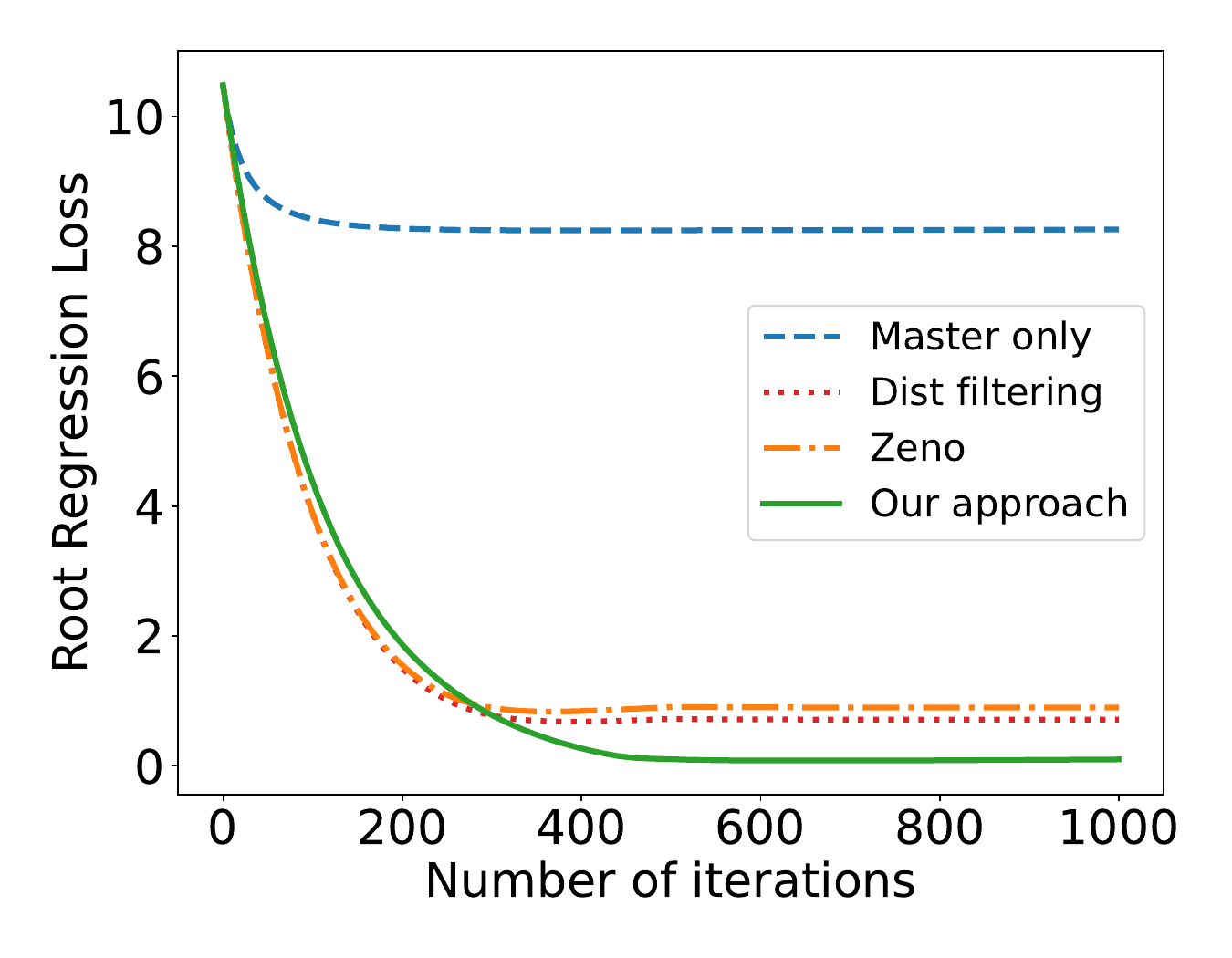}}
	\caption{Experiment results with synthesized data under linear model, with $q/m=0.2$.}\label{fig:syn2}
    \vspace{-3mm}
\end{figure}
\begin{figure}[h!]
	\centering
	\subfigure[Random attack, $q/m=0.8$.]{\includegraphics[width=0.48\linewidth,height=0.42\linewidth]{./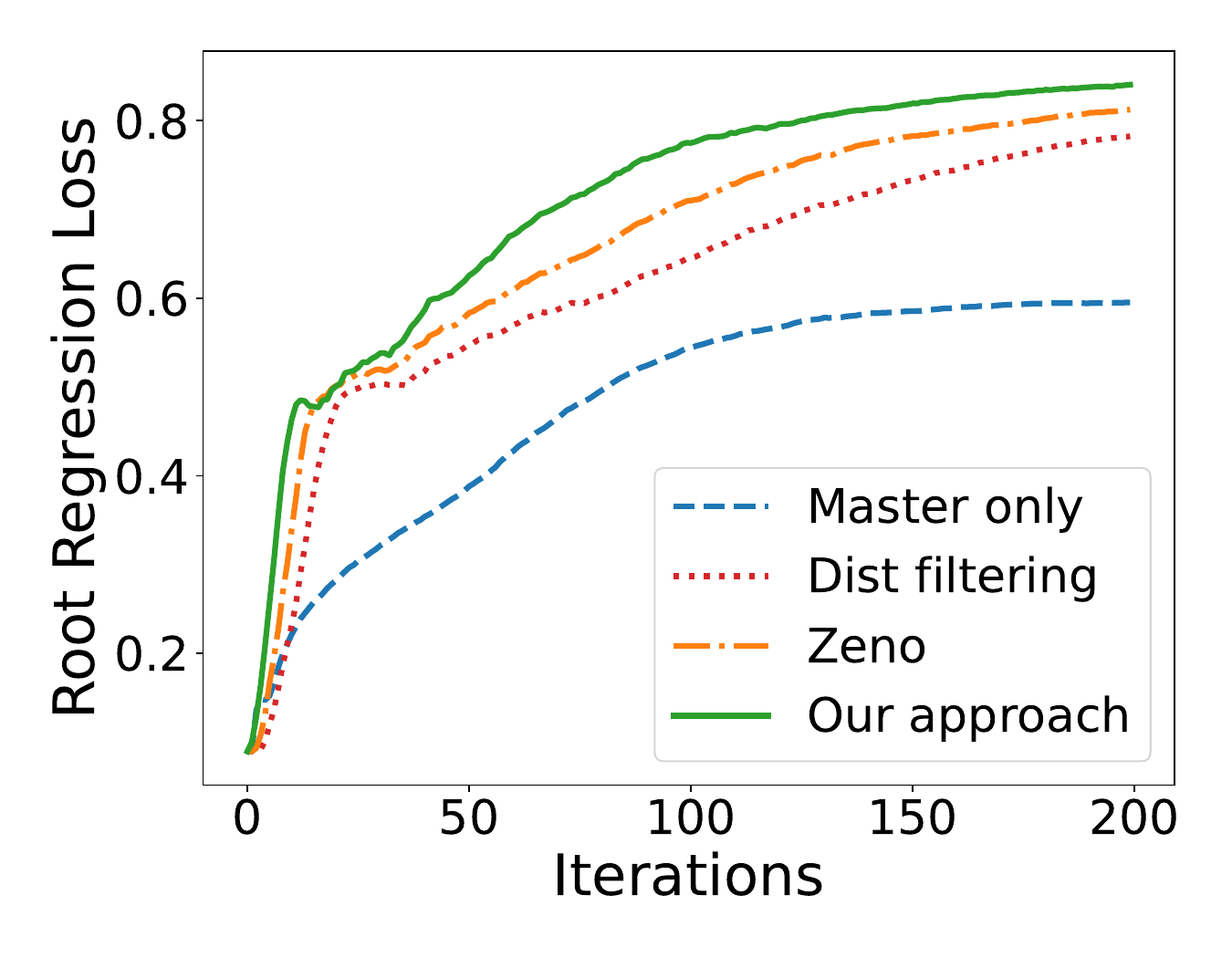}}
	\subfigure[Sign-flip attack, $q/m=0.8$.]{\includegraphics[width=0.48\linewidth,height=0.42\linewidth]{./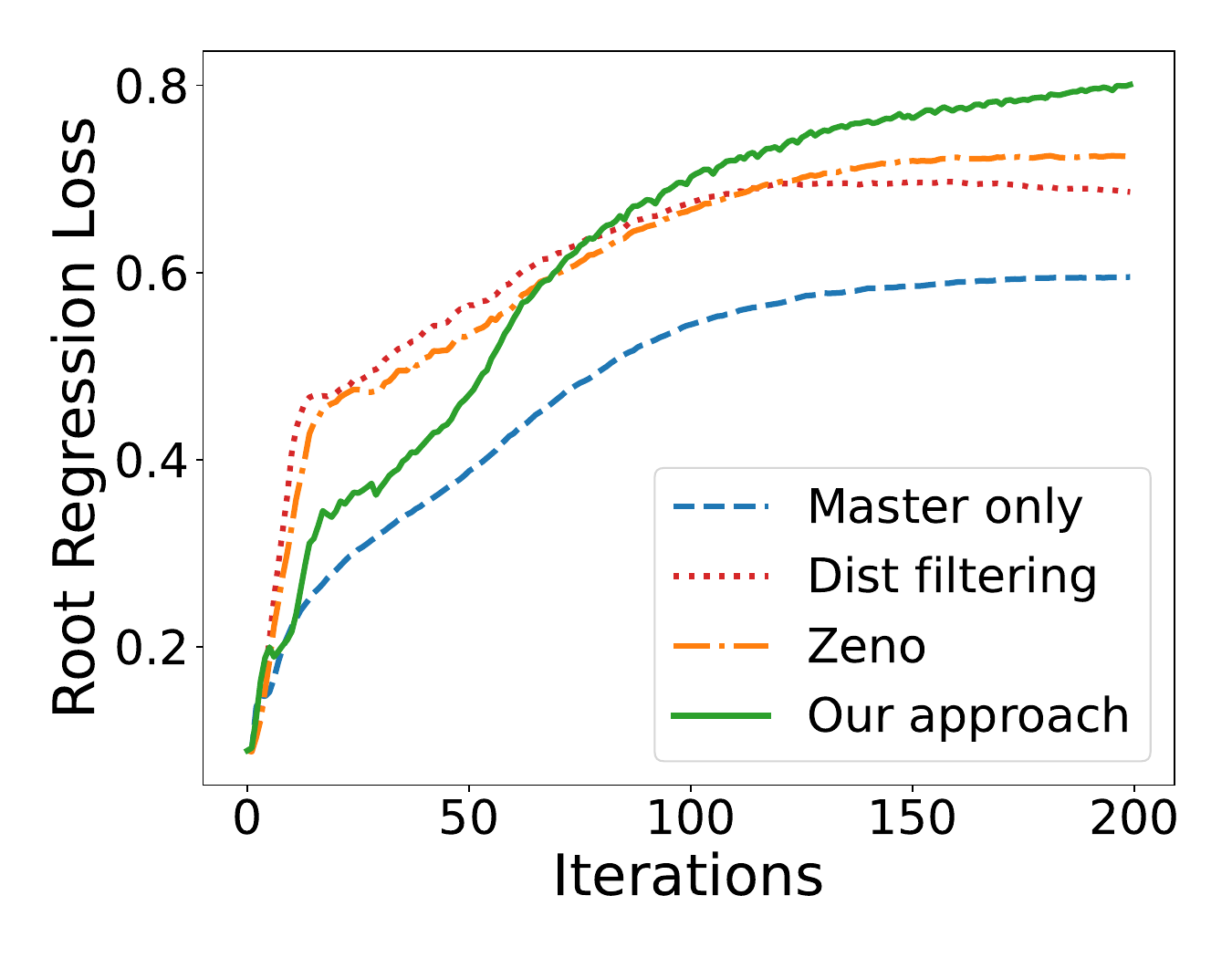}}
\subfigure[Random attack, $q/m=0.2$.]{\includegraphics[width=0.48\linewidth,height=0.42\linewidth]{./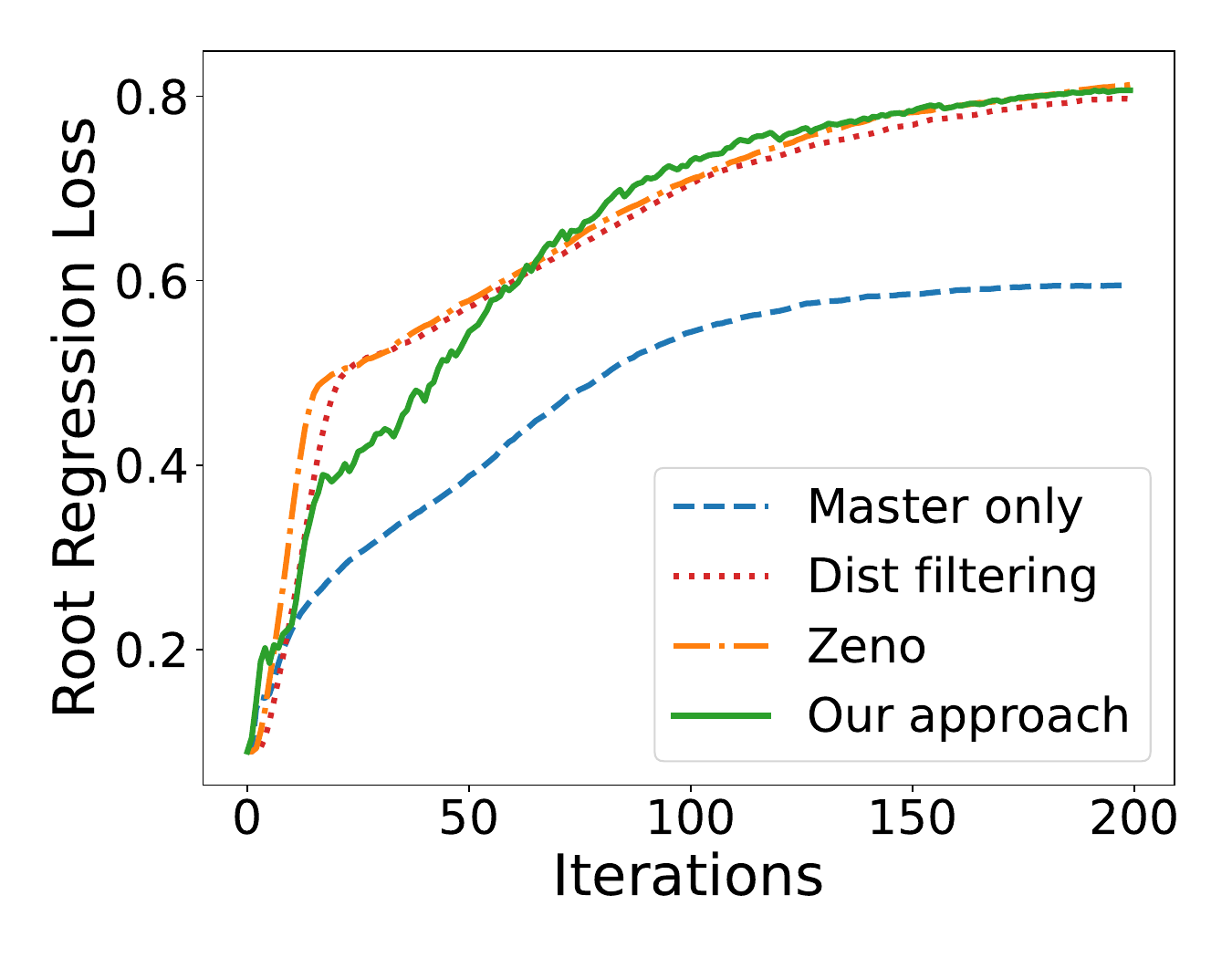}}
	\subfigure[Sign-flip attack, $q/m=0.2$.]{\includegraphics[width=0.48\linewidth,height=0.42\linewidth]{./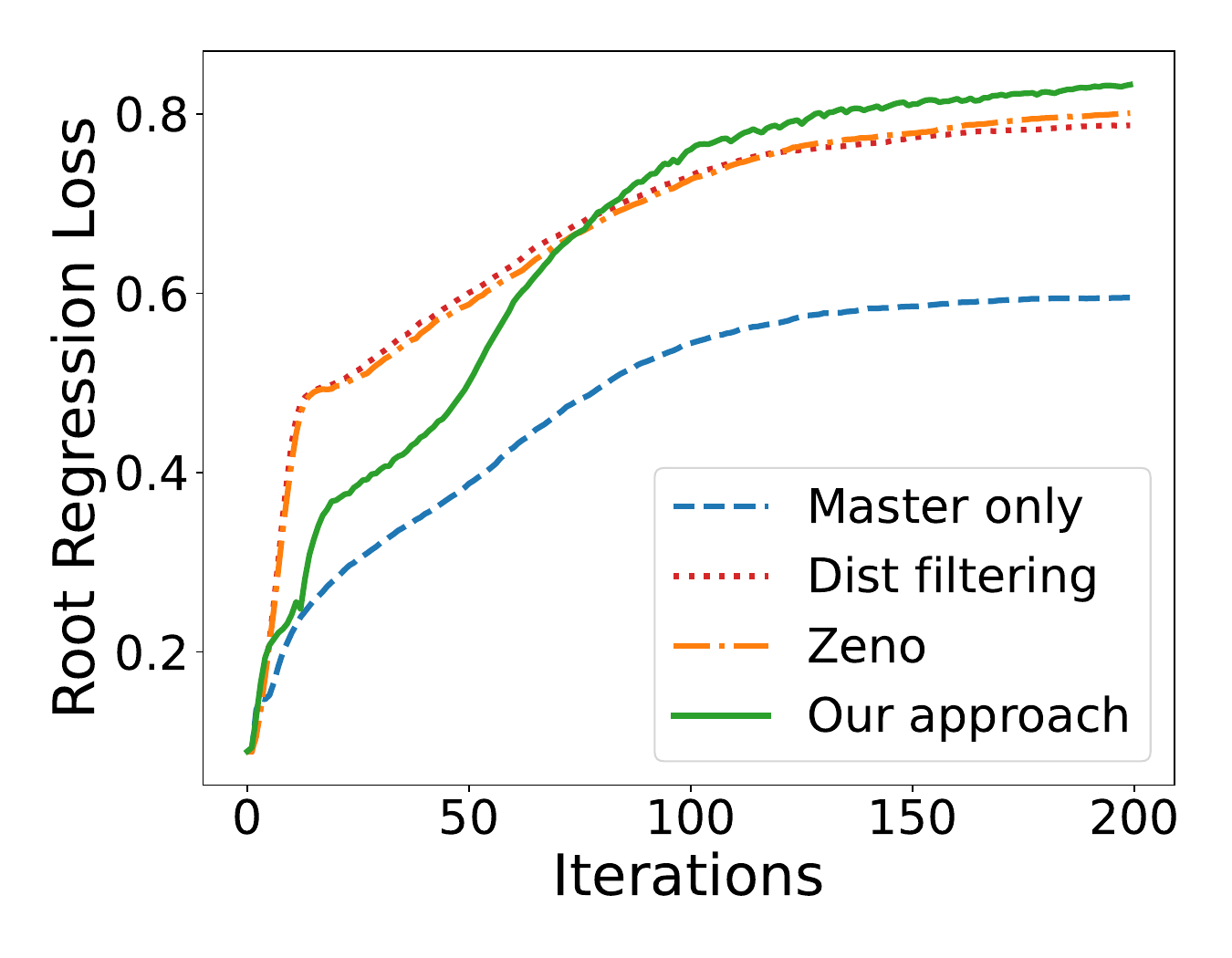}}
	\caption{Experiment results with MNIST data.}\label{fig:mnist}
	\vspace{-4mm}
\end{figure}

Figure \ref{fig:syn1} (a) and (b) show that with $80\%$ Byzantine workers, if $d=20$, then the loss curves of our method (green solid curves in the figure) are nearly the same as previous methods, including distance based filtering and Zeno. With $d=50$, our method begins to outperform existing methods, which is shown in (c) and (d). If the dimensionality is further increased to $d=100$, then the advantage of our method becomes clearer. This result agrees well with our theoretical analysis.

Now we move on to the case with only $20\%$ worker machines being attacked. The results are shown in Figure \ref{fig:syn2}. The result shows that even in the case with less than half machines being attacked, our method still performs comparable to or better than existing approaches.

\subsection{Real data}
Here we use MNIST dataset \cite{lecun1998mnist} to test the performance of robust gradient aggregators. In MNIST dataset, there are  $60,000$ training images and $12,000$ testing images, with each image has a size of $28\times 28$.

The model is a neural network with one hidden layer between input and output. The size of hidden layer is $32$. In each experiment, we first randomly select $N_A=50$ samples as the auxiliary clean dataset. The remaining samples are distributed into $m=500$ worker machines evenly. The gradients are obtained by backpropagating cross entropy loss function. The results for $q=400$ (i.e. $\alpha = 0.2$) and $q=100$ (i.e. $\alpha = 0.8)$ under both random and sign-flip attack are shown in Figure \ref{fig:mnist}, respectively. For random attack, $\sigma_{attack} = 0.01$.

From Figure \ref{fig:mnist} (a)-(d), our new method outperforms existing approaches in general, for both random attack and sign-flip attack, especially with $q/m=0.8$. For the case with $q/m=0.2$, our method is slightly better than others, but the advantage becomes less obvious. This result is natural since our design is primarily for the case with more than half workers being Byzantine.
Furthermore, we would like to remark that Zeno appears to perform better than distance-based filtering. Our understanding is that the gradient descent score \eqref{eq:sdscore} actually selects gradient vectors whose directions are close to the gradient from auxiliary data. In neural networks with complex loss landscape, this rule may be better than filtering based entirely on distances.

\section{Conclusion}\label{sec:conc}
This paper solves the problem of high dimensional distributed learning problem under arbitrary number of Byzantine attackers. Firstly, we have proposed a new method for semi-verified mean estimation, which combines a small clean dataset and a large corrupted dataset to estimate the statistical mean. We have also conducted theoretical analysis under both additive and strong contamination model. The results show that the new method is minimax rate optimal. We have then applied the semi-verified mean estimation method into the aggregator function in distributed learning. Compared with existing methods, the performance of our method is nearly the same as existing approaches for low dimensional problems. With higher dimensionality, our method performs significantly better. Finally, numerical results validate the effectiveness of our new method.

\section{Acknowledgments}
This work is supported by National Natural Science Foundation of China under Grants (62072106), the Open Research Projects of the Key Laboratory of Blockchain Technology and Data Security of the Ministry of Industry and Information Technology for the year 2025 (Grant KT20250015), the Natural Science Foundation of China (General Program, No.62571529), the Science and Technology Innovation Platform Project of Fujian Province, China (Grant 2023-P-003), and the Industry-University Cooperative Project for Colleges and Universities of Fujian Province, China (Grant 2024H6027).

\bibliography{aaai2026}


\newpage
\appendix
\onecolumn
\section{Appendix}

In the appendix, we first present a review of related work. Next, we compare our direct semi-verified mean estimation approach  with the indirect one based on list-decodable mean estimation. Finally, we provide formal proofs of the theorems.

\subsection{Related Work}\label{sec:related}
\subsubsection{Robust mean estimation}\label{sec:mean}
Distributed learning relies on accurate estimation of the statistical mean of gradients gathered from worker machines. Therefore, we provide a brief review of robust mean estimation first. Estimation with corrupted samples has been an important task in statistics since 1960s \cite{huber1964robust,huber2004robust}. Traditional robust estimation methods such as trimmed mean behaves badly as dimensionality increases. An exception is Tukey median \cite{tukey1975mathematics}, which is sample efficient in high dimensional spaces. However, Tukey median is not practical, since the computation can not finish in polynomial time. \cite{lai2016agnostic,diakonikolas2016robust,diakonikolas2017being} proposed convex programming and filtering methods for high dimensional robust mean estimation. Since then, a flurry of works has emerged, focusing on improving the time complexity \cite{cheng2019high}, extending to sparse problems \cite{balakrishnan2017computationally,diakonikolas2020outlier}, and improving the performance under stronger tail assumption with sum of squares method \cite{hopkins2018mixture}. These methods work if the proportion of contaminated samples is less than $1/2$. 

If majority of samples are corrupted, then it is impossible to estimate the mean with a single solution. Suppose that the fraction of clean samples is ensured to be at least $\alpha$, i.e. $q/m<1-\alpha$, then the adversary can produce $\Omega(1/\alpha)$ clusters of points, with one of them composed of clean samples, while others are corrupted. In this case, there is no way to determine which cluster is correct. Therefore, in recent works, the goal of mean estimation is relaxed to generating a list of hypotheses, with the guarantee that at least one of the hypotheses is close to the ground truth $\mu^*$. This paradigm is called list-decodable learning. Typically, the length of generated hypotheses list need to be controlled within $O(1/\alpha)$. We hope to make the minimum estimation error among the hypotheses list to be as low as possible. \cite{charikar2017learning} proposed an SDP-based method, achieving error of $O(\sigma\sqrt{\ln(1/\alpha)/\alpha})$. For spherical Gaussians, a multi-filter method can achieve a better rate \cite{diakonikolas2018list}. Several recent works improves the time complexity of list-decodable mean estimation. \cite{diakonikolas2021list} proposed SIFT (Subspace Isotropic Filtering), which achieves nearly PCA time complexity. \cite{cherapanamjeri2020list} introduced an optimization based approach with linear time complexity. 

Apart from the untrusted data, if we also have some auxiliary clean samples, then it is possible to determine one estimated value from the hypotheses list, which is close to the ground truth with high probability. \cite{charikar2017learning} named it semi-verified learning, and claims that with $K$ hypotheses, one needs $O(\ln K)$ clean samples to determine the correct one.

\subsubsection{Federated Learning with less than half Byzantine machines}

Suppose that there are $m$ worker machines, among which $q$ of them being attacked.  Each worker machine has $n$ samples. The total size of dataset is $N=mn$. Here we briefly review the previous works with $q<m/2$.

\cite{blanchard2017machine} proposed Krum, which collects gradient vectors from these $m$ worker machines, and then select a vector that is closest to its $m-q$ neighbors. It is expected that the selected one is close to the ground truth. However, the error bound is suboptimal. \cite{chen2017distributed} uses a geometric median of mean method, with a suboptimal error rate of $\sqrt{qd/N}$. \cite{yin2018byzantine} made the first step towards optimal statistical rates of Byzantine robust federated learning. Two methods are proposed: coordinate-wise median and coordinate-wise trimmed mean. Under the assumption that the gradient has bounded skewness, the coordinate-wise median method achieves $\tilde{O}(q\sqrt{d}/(m\sqrt{n})+d/\sqrt{nm}+\sqrt{d}/n)$. If $n\gtrsim m$, then this method is optimal in $q$, $n$ and $m$. Under the subexponential tail assumption, the coordinate-wise trimmed mean achieves $\tilde{O}(d(q/(m\sqrt{n})+1/\sqrt{nm}))$, which is also optimal. However, these two methods are not optimal in $d$. As a result, in large scale models, the performance of these models are not desirable.

Recently, \cite{zhu2023byzantine} shows that by using several recent high dimensional robust mean estimation methods \cite{diakonikolas2016robust,diakonikolas2017being,diakonikolas2020outlier,steinhardt2018robust,zhu2022a,zhu2022b,zhu2022c,diakonikolas2023algorithmic} as the gradient aggregator functions, the error rate of federated learning becomes nearly optimal not only in $m$, $n$, $q$, but also in dimensionality $d$, up to a logarithm factor.  

\subsubsection{Federated learning with arbitrary number of Byzantine machines}\label{sec:related-arbitrary}
The condition that less than half worker machines are attacked may not hold in practice. In this case, the methods mentioned above no longer works. If $q>m/2$, without additional information, it is impossible to provide a precise solution. \cite{cao2019distributed} provides a simple method with the help of a clean auxiliary dataset. The basic idea is to calculate the gradient vector only using the auxiliary dataset as a baseline, and then filter out gradient vectors uploaded from worker machines that are too far away from such baseline. A slightly different method was proposed in \cite{regatti2020bygars}, which introduces reputation scores as the weights for gradient. However, the error rate of these methods is $\|\hat{\mathbf{w}}-\mathbf{w}^*\| = O(\sqrt{d/N_A})$, with $N_A$ being the size of auxiliary dataset. This rate is not significantly improved compared with learning with auxiliary data only, and is undesirable in high dimension problems. A significantly different method called Zeno was proposed in \cite{xie2019zeno}, which formulates a stochastic descent score to filter out adversarial workers. \cite{xie2020zeno++} proposed Zeno++, an improvement to Zeno. The main advantage of Zeno++ is the suitability under asynchronous setting. Moreover, by computing a first order approximation of the gradient descent score, this method significantly improves the computational efficiency. However, the error bound is also significantly suboptimal. In particular, the error rate of Zeno and Zeno++ is still $O(\sqrt{d/N_A})$ (this comes from calculating the term $V_3$ in Theorem 1 in \cite{xie2020zeno++}), which also scales badly with the increase of dimensionality. To the best of our knowledge, our work is the first attempt to solve Byzantine tolerant distributed learning problem in high dimensions.

\subsection{Comparison with Indirect Method Based on List-decodable Mean Estimation}\label{sec:compare}

The SIFT method proposed in \cite{diakonikolas2021list} is an efficient approach for list-decodable mean estimation. Given the ratio of trusted samples $\alpha$, SIFT outputs a list of $O(1/\alpha)$ hypotheses whose minimum distance to $\mu^*$ is $O(1/\alpha)$. From \cite{charikar2017learning}, with $O(\ln(1/\alpha))$ verified samples, the list-decodable method can be converted to semi-verified method.

The main difference of our method with the indirect approach mentioned above is that after obtaining the projection matrix, SIFT projects a list of random samples, i.e. calculating $\mathbf{P}\mathbf{X}_0+(\mathbf{I}-\mathbf{P})\mathbf{X}_i$ as one of hypothesis, in which $i$ is randomly selected. However, we only calculate one output, i.e. $\hat{\mu}=\mathbf{P}\mathbf{X}_0+(\mathbf{I}-\mathbf{P})\mu(S)$. There are also some differences in details of matrix calculation. 

Now we compare our direct method with the indirect approach mentioned above. For a fair comparison, we let $n=1$, then under additive contamination model, \eqref{eq:finala} becomes
\begin{eqnarray}
	\mathbb{E}[\norm{\hat{\mu}-\mu^*}_2]\lesssim \sigma \alpha^{-\frac{1}{2}}\left(\frac{1}{\sqrt{N_A}}+1\right)\sim \sigma \alpha^{-\frac{1}{2}},
\end{eqnarray}
as long as $1/\alpha \lesssim p\lesssim N_A/\alpha$.

Our method has several advantages. Firstly, according to \cite{charikar2017learning}, the indirect method using SIFT requires $O(\ln (1/\alpha))$ auxiliary clean samples, while we only need at least one clean sample. Secondly, the parameter selection is easier, since SIFT requires $p\sim 1/\alpha$, while our method only requires $1/\alpha \lesssim p\lesssim N_A/\alpha$, which is a wider constraint. In practice, $\alpha$ is usually unknown and setting an appropriate $p$ is challenging. Our direct method partially alleviates this issue.

\subsection{Proof of Theorem 1}\label{sec:additive}

This section shows the error bound of semi-verified mean estimation problem under additive contamination model. The notations in this section is slightly different from those in Algorithm 2. In Algorithm 2, $S$ is a dynamic set with some samples gradually removed, while in this section, $S$ refers to the final remaining dataset after the whole algorithm is finished. This means that $S$ is a fixed set. Furthermore, denote $S_t$ as the remaining set $S$ in Algorithm 2 after running the while loop $t$ iterations. 

The proof begins with the following lemma.

\begin{lemma}\label{lem:G}
	(\cite{charikar2017learning}, Proposition B.1) With probability at least $1-e^{-\alpha m/64}$, there exists a set $G_0\subset S_0^*$ with $|G_0|\geq \alpha m/2$, and
	\begin{eqnarray}
		\left\|\frac{1}{|G_0|}\sum_{i\in G_0}(\mathbf{Y}_i-\mu^*)(\mathbf{Y}_i-\mu^*)^T\right\|_{op}\leq 8\sigma^2 \left(1+\frac{2d}{\alpha m}\right).
		\label{eq:G}
	\end{eqnarray}
\end{lemma}

With this lemma, we can pick $G_0$ with $|G_0|\geq \alpha m/2$ and satisfy \eqref{eq:G}. $G_0$ can be intuitively understood as 'good data', which means that samples in $G_0$ are neither corrupted nor outliers in clean samples. Note that with a small probability $e^{-\alpha m/64}$, this is impossible. In this case, just let $G_0$ be a set of $\lceil \alpha m/2\rceil$ samples randomly selected from $S_0^*$.

Recall that 
\begin{eqnarray}
	\hat{\mu}(A)=\mathbf{P}\mathbf{X}_0 + (\mathbf{I}-\mathbf{P})\mu(S),
\end{eqnarray}
in which $S$ is the remaining set after filtering. Denote $G=G_0\cap S$ and the set of remaining good samples. Then
\begin{eqnarray}
	\|\hat{\mu}-\mu^*\|_2^2 \leq 3\|\mathbf{P}(\mathbf{X}_0- \mu^*)\|_2^2 + 3\|(\mathbf{I}-\mathbf{P})(\mu(S)-\mu(G))\|_2^2 + 3\|(\mathbf{I}-\mathbf{P})(\mu(G)-\mu^*)\|_2^2.
	\label{eq:mse}
\end{eqnarray}
We bound these three terms in \eqref{eq:mse} separately.

\textbf{Bound of $\|\mathbf{P}(\mathbf{X}_0- \mu^*)\|_2^2$.} We prove the following lemma:
\begin{lemma}\label{lem:I1bound}
	\begin{eqnarray}
		\mathbb{E}[\|\mathbf{P}(\mathbf{X}_0-\mu^*)\|_2^2]\leq \frac{\sigma^2 p}{N_A}.	
	\end{eqnarray}
\end{lemma}
\begin{proof}
	Recall that in Algorithm 2, the auxiliary clean set $A$ is only used in the last step, and the attacker has no knowledge of $A$. As a result, $\mathbf{P}$ and samples in $A$ are mutually independent. Hence
	\begin{eqnarray}
		\mathbb{E}[\|\mathbf{P}(\mathbf{X}_0-\mu^*)\|_2^2]&=&\mathbb{E}[\tr(\mathbf{P}(\mathbf{X}_0-\mu^*)(\mathbf{X}_0-\mu^*)^T\mathbf{P})]\nonumber\\
		&\leq & \frac{1}{N_A}\mathbb{E}[\tr(\sigma^2 \mathbf{P})]\nonumber\\
		&=&\frac{\sigma^2 p}{N_A},
		\label{eq:I1}
	\end{eqnarray}
	in which $\tr$ denotes trace of a matrix. For the last step, recall that $\mathbf{P}=\mathbf{U}_p\mathbf{U}_p^T$, in which $\mathbf{U}_p\in \mathbb{R}^{d\times p}$ is the first $p$ columns of the matrix $\mathbf{U}$, whose columns are eigenvectors of $V(S)$. Hence $\tr(\mathbf{P}) = p$.
\end{proof}

\textbf{Bound of $\|(\mathbf{I}-\mathbf{P})(\mu(S)-\mu(G))\|_2^2$.} 
We show the following lemma:
\begin{lemma}\label{lem:I2bound}
	Under the following conditions: 
	
	(a) \eqref{eq:G} holds; 
	
	(b) there exists an $\epsilon\in (0,1)$, such that
	\begin{eqnarray}
		\lambda_c&\geq& 32\frac{\sigma^2}{n} \left(1+\frac{2d}{\alpha m}\right)\frac{1+\epsilon}{1-\epsilon},\label{eq:lamc2}\\
		p&\geq &\frac{8(1+\epsilon)}{\alpha(1-\epsilon)};\label{eq:m2}
	\end{eqnarray}
	
	(c) In all iterations in Algorithm 2, denote $L_t\subset S_t$ as removed samples in step $t$, $G_t=G_0\cap S_t$ is the set of remaining good data after $t$ iterations, $L_{Gt}=L_t\cap G_t$, then for all iterations,
	\begin{eqnarray}
		\frac{|L_{Gt}|}{|G_t|}&\leq& \frac{1+\epsilon}{|G_t|}\sum_{i\in G}\frac{\tau_i}{\tau_{\max}},\label{eq:cond1}\\
		\frac{|L_t|}{|S_t|}&\geq & \frac{1-\epsilon}{|S_t|}\sum_{i\in S}\frac{\tau_i}{\tau_{\max}}.\label{eq:cond2}
	\end{eqnarray}
	Then
	\begin{eqnarray}
		\|(\mathbf{I}-\mathbf{P})(\mu(S)-\mu(G))\|_2^2 \leq \frac{2\lambda_c}{\alpha}.
		\label{eq:I2bound}
	\end{eqnarray}
\end{lemma}
\begin{proof}
	The proof is shown in Appendix \ref{sec:I2bound}.
\end{proof}
Now we discuss three conditions in Lemma \ref{lem:I2bound}. For (1), Lemma \ref{lem:G} has shown that \eqref{eq:G} holds with high probability. (2) has been guaranteed in the statements in Theorem 1. It remains to show that \eqref{eq:cond1} and \eqref{eq:cond2} also hold with high probability. The result is stated in the following lemma.
\begin{lemma}\label{lem:prob}
	\begin{eqnarray}
		\text{P}\left(\exists t, \frac{|L_{Gt}|}{|G_t|}> \frac{1+\epsilon}{|G_t|}\sum_{i\in G}\frac{\tau_i}{\tau_{\max}} \text{ or } 	\frac{|L_t|}{|S_t|}< \frac{1-\epsilon}{|S_t|}\sum_{i\in S}\frac{\tau_i}{\tau_{\max}}\right)\leq 4m\exp\left[-\frac{1}{8}\alpha\lambda_c m^\frac{1}{3}\epsilon^2\right].
	\end{eqnarray}
\end{lemma}
\begin{proof}
	The proof is shown in section \ref{sec:prob}.
\end{proof}
If at least one of conditions \eqref{eq:G}, \eqref{eq:cond1}, \eqref{eq:cond2} are violated, then due to the prefilter step in Algorithm 2, the maximum distance between any two samples is at most $2m^{1/3}$. Therefore, $\|(\mathbf{I}-\mathbf{P})(\mu(S)-\mu(G))\|_2^2\leq 4m^{2/3}$ always hold. With Lemma \ref{lem:G} and \ref{lem:prob}, \eqref{eq:I2highp} is violated with exponential decaying probability. As a result, 
\begin{eqnarray}
	\mathbb{E}[\|(\mathbf{I}-\mathbf{P})(\mu(S)-\mu(G))\|_2^2]\leq \frac{2\lambda_c}{\alpha} + 4m^\frac{2}{3}\left[e^{-\frac{1}{64}\alpha m}+4m\exp\left(-\frac{1}{8}\alpha \lambda_cm^\frac{1}{3}\epsilon^2\right)\right].
	\label{eq:I2}
\end{eqnarray}

\textbf{Bound of $\|(\mathbf{I}-\mathbf{P})(\mu(G)-\mu^*)\|_2^2$.} We show the following lemma:
\begin{lemma}\label{lem:I3bound}
	If \eqref{eq:G}, \eqref{eq:lamc2} and \eqref{eq:m2} hold, then
	\begin{eqnarray}
		\|(\mathbf{I}-\mathbf{P})(\mu(G)-\mu^*)\|_2^2\leq \frac{\lambda_c}{2\alpha}.
		\label{eq:I3bound}
	\end{eqnarray}
\end{lemma}
\begin{proof}
	The proof is shown in Appendix \ref{sec:I3bound}.
\end{proof}

Following the same argument as \eqref{eq:I2}, we get
\begin{eqnarray}
	\mathbb{E}[\|(\mathbf{I}-\mathbf{P})(\mu(G)-\mu^*)\|_2^2]=\frac{\lambda_c}{2\alpha} + 4e^{-\frac{1}{64}\alpha m}m^\frac{2}{3}.
	\label{eq:I3}
\end{eqnarray}
By \eqref{eq:mse}, \eqref{eq:I1}, \eqref{eq:I2} and \eqref{eq:I3},
\begin{eqnarray}
	\mathbb{E}[\|\hat{\mu}-\mu^*\|_2^2]\leq \frac{3\sigma^2 p}{N_A}+\frac{15\lambda_c}{2\alpha} + \delta_N,
\end{eqnarray}
in which $\delta_N$ decays faster than any polynomial of $N$.

\textbf{Proof of Auxiliary Lemmas.} Here we prove lemmas mentioned above.
\subsubsection{Proof of Lemma \ref{lem:I2bound}}\label{sec:I2bound}

Denote $Q=S\setminus G$ as the remaining corrupted samples that are not filtered by Algorithm 2. Since $\mathbf{P}$ is a projection matrix, $\mathbf{P}^2 = \mathbf{P}$ and $\mathbf{P}=\mathbf{P}^T$. For any vector $\mathbf{u}$ with $\|\mathbf{u}\|_2=1$,
\begin{eqnarray}
	&&\mathbf{u}^T (\mathbf{I}-\mathbf{P})V(S)(\mathbf{I}-\mathbf{P})\mathbf{u} \nonumber\\
	&=& \frac{1}{|S|}\mathbf{u}^T (\mathbf{I}-\mathbf{P})\left[\sum_{i\in S} (\mathbf{Y}_i-\mu(S))(\mathbf{Y}_i-\mu(S))^T\right] (\mathbf{I}-\mathbf{P})\mathbf{u}\nonumber\\
	&=& \frac{1}{|S|}\mathbf{u}^T (\mathbf{I}-\mathbf{P})\left[\sum_{i\in G} (\mathbf{Y}_i-\mu(G)) (\mathbf{Y}_i-\mu(G))^T + |G|(\mu(G)- \mu(S))(\mu(G)-\mu(S))^T\right.\nonumber\\
	&& \left. + \sum_{i\in Q}(\mathbf{Y}_i-\mu(Q))(\mathbf{Y}_i-\mu(Q))^T + |Q|(\mu(Q)-\mu(S))(\mu(Q)-\mu(S))^T\right] (\mathbf{I}-\mathbf{P})\mathbf{u}\nonumber\\
	&\geq & \frac{1}{|S|}\mathbf{u}^T (\mathbf{I}-\mathbf{P})\left[|G|(\mu(G)- \mu(S))(\mu(G)-\mu(S))^T\right.\nonumber\\
	&&\left.+|Q|(\mu(Q)-\mu(S))(\mu(Q)-\mu(S))^T\right](\mathbf{I}-\mathbf{P})\mathbf{u}\nonumber\\
	&\geq & \frac{|G|}{|Q|}\mathbf{u}^T (\mathbf{I}-\mathbf{P})(\mu(G)-\mu(S))(\mu(G)-\mu(S))^T (\mathbf{I}-\mathbf{P})\mathbf{u}.
	\label{eq:diff}
\end{eqnarray}
For the last step, note that $\mu(\cdot)$ is the sample mean, hence $|S|\mu(S)=|G|\mu(G)|+|Q|\mu(Q)$. Hence,
\begin{eqnarray}
	\mu(Q)-\mu(S) = -\frac{|G|}{|Q|}(\mu(G) -\mu(S)).
	\label{eq:meandiff}
\end{eqnarray}
\eqref{eq:meandiff} is then used to derive the last step in \eqref{eq:diff}.

Recall Algorithm 2, $V(S) = \sum_{j=1}^d \lambda_j(S)\mathbf{u}_j\mathbf{u}_j^T$, in which $\lambda_j(S)$ is the $j$-th largest eigenvalue of $V(S)$. Therefore
\begin{eqnarray}
	(\mathbf{I}-\mathbf{P})V(S) (\mathbf{I}-\mathbf{P})=\sum_{j=p+1}^d \lambda_j(S)\mathbf{u}_j\mathbf{u}_j^T,
\end{eqnarray}
hence for any $\mathbf{u}$ with $\|\mathbf{u}\|_2=1$, $\mathbf{u}^T (\mathbf{I}-\mathbf{P})V(S) (\mathbf{I}-\mathbf{P}) \mathbf{u}\leq \lambda_{p+1} (S)$. Let 
\begin{eqnarray}
	\mathbf{u} = \frac{(\mathbf{I}-\mathbf{P})(\mu(G)-\mu(S))}{\|(\mathbf{I}-\mathbf{P})(\mu(G)-\mu(S))\|_2},
\end{eqnarray}
then from \eqref{eq:diff},
\begin{eqnarray}
	\lambda_{k+1}(V(S))\geq \frac{|G|}{|Q|}\|(\mathbf{I}-\mathbf{P})(\mu(G)-\mu(S))\|_2^2.
	\label{eq:sgdist1}
\end{eqnarray}
$|Q|<|S|$ always hold, thus \eqref{eq:sgdist1} implies 	\begin{eqnarray}
	\|(\mathbf{I}-\mathbf{P})(\mu(S)-\mu(G))\|_2^2 \leq \frac{|S|}{|G|}\lambda_{k+1}(S).
	\label{eq:sgdist}
\end{eqnarray}
It remains to bound $|S|/|G|$. 

\begin{lemma}\label{lem:transition}
	With \eqref{eq:lamc2} and \eqref{eq:m2}, under \eqref{eq:G}, if \eqref{eq:cond1} and \eqref{eq:cond2} is satisfied in all iterations, then for all iterations $t=1,2,\ldots$,
	\begin{eqnarray}
		\frac{|G_t|}{\sqrt{|S_t|}}\geq \frac{|G_0|}{\sqrt{|S_0|}}.
		\label{eq:transition}
	\end{eqnarray}
\end{lemma}
\begin{proof}
	The proof is shown in section \ref{sec:transition}.
\end{proof}
\begin{cor}\label{cor}
	Under the conditions of Lemma \ref{lem:transition}, $|G|\geq \alpha^2 N/4$, $|S|/|G|\leq 2/\alpha$.
\end{cor}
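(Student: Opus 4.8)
The plan is to leverage Lemma \ref{lem:transition} directly: it guarantees that the ratio $|G_t|/\sqrt{|S_t|}$ is non-decreasing across iterations, so in particular the final set $S = S_{t_{\mathrm{final}}}$ with $G = G_0 \cap S$ satisfies $|G|/\sqrt{|S|} \geq |G_0|/\sqrt{|S_0|}$. First I would record the two initializing facts: $|S_0| \leq N$ after the prefilter (the prefilter only removes points, and before it $|S_0| = N$), and $|G_0| \geq \alpha N/2$ from Lemma \ref{lem:G}. These give
\begin{eqnarray}
	\frac{|G_0|}{\sqrt{|S_0|}} \geq \frac{\alpha N/2}{\sqrt{N}} = \frac{\alpha\sqrt{N}}{2}.
\end{eqnarray}

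Next I would combine this with $|S| \leq N$ (again because the algorithm only removes samples, so the remaining set never grows) to obtain a lower bound on $|G|$ alone. From Lemma \ref{lem:transition},
\begin{eqnarray}
	|G| \geq \frac{|G_0|}{\sqrt{|S_0|}}\sqrt{|S|} \geq \frac{\alpha\sqrt{N}}{2}\sqrt{|S|}.
\end{eqnarray}
Since we also trivially have $|G| \leq |S|$, I would divide through by $\sqrt{|S|}$ to get $\sqrt{|S|} \geq |G|/\sqrt{|S|} \geq \alpha\sqrt{N}/2$, hence $|S| \geq \alpha^2 N/4$; but more to the point, plugging $\sqrt{|S|} \geq \alpha\sqrt{N}/2$ is the wrong direction, so instead I would go the other way: from $|G| \geq (\alpha\sqrt{N}/2)\sqrt{|S|}$ and $|S| \geq |G|$, I square and rearrange. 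Actually the cleanest route is: the bound $|G|/\sqrt{|S|} \geq \alpha\sqrt{N}/2$ together with $|S| \le N$ gives $|G| \ge (\alpha \sqrt N / 2)\sqrt{|S|}$, and then using $|S| \ge |G|$ once more is not needed — instead, since $|G| \le |S| \le N$ we cannot immediately conclude. The correct final step is to note $|G|/\sqrt{|S|} \ge \alpha\sqrt N/2$ and $\sqrt{|S|} \ge |G|/\sqrt{|S|}$ is false in general, so I would instead argue: $|S|/|G| = \sqrt{|S|}/(|G|/\sqrt{|S|}) \le \sqrt{N}/(\alpha\sqrt{N}/2) = 2/\alpha$, which is exactly the second claim. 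For the first claim, $|G| = (|G|/\sqrt{|S|})\cdot\sqrt{|S|} \ge (\alpha\sqrt N/2)\cdot(|G|/(2/\alpha))^{1/2}$ — no; more directly, from $|S|/|G|\le 2/\alpha$ and $|S|\ge|G|$ we get $|G| \ge |S|\alpha/2 \ge |G|\alpha/2$ which is vacuous, so instead I use $|G|/\sqrt{|S|} \ge \alpha\sqrt N/2$ and $\sqrt{|S|} \ge \sqrt{|G|}$: then $|G|/\sqrt{|G|} \ge |G|/\sqrt{|S|} \ge \alpha\sqrt N/2$, i.e. $\sqrt{|G|} \ge \alpha\sqrt N/2$, giving $|G| \ge \alpha^2 N/4$.

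The argument is essentially bookkeeping; the only thing requiring care is that none of the relevant quantities grow during the algorithm ($|S_t|$ is nonincreasing, and $G_t = G_0 \cap S_t$ so $|G_t|$ is also nonincreasing), which is why the bounds $|S| \leq N$ and $|S| \ge |G|$ are safe to use. I expect no real obstacle here beyond choosing the chain of inequalities so that each step goes in the usable direction; the substantive work has already been done in Lemma \ref{lem:transition}.
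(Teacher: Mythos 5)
Your final chain of inequalities is correct and matches the paper's proof essentially step for step: both apply Lemma \ref{lem:transition} to get $|G|/\sqrt{|S|}\geq|G_0|/\sqrt{|S_0|}\geq\alpha\sqrt N/2$, then obtain $|S|/|G|\leq 2/\alpha$ by writing $|S|/|G|=\sqrt{|S|}\cdot\bigl(\sqrt{|S|}/|G|\bigr)$ and bounding $\sqrt{|S|}\leq\sqrt N$, and obtain $|G|\geq\alpha^2N/4$ from $\sqrt{|G|}\geq|G|/\sqrt{|S|}$ (using $G\subset S$). One aside in your write-up is actually wrong — you dismiss $\sqrt{|S|}\geq|G|/\sqrt{|S|}$ as "false in general," but it is equivalent to $|S|\geq|G|$ and hence true — though you abandon that route anyway, so it does not affect the argument you ultimately settle on.
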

\begin{proof}
	From Lemma \ref{lem:transition}, 
	\begin{eqnarray}
		\sqrt{|G|} \geq \frac{|G|}{\sqrt{|S|}}\geq \frac{|G_0|}{\sqrt{|S_0|}}=\frac{1}{2}\alpha\sqrt{N},
	\end{eqnarray} 	
	in which the first inequality holds because $G\subset S$, and
	\begin{eqnarray}
		\frac{|S|}{|G|}=\sqrt{|S|}\frac{\sqrt{|S|}}{|G|}\leq \sqrt{|S|}\frac{\sqrt{|S_0|}}{|G_0|}\leq |S_0|/|G_0|\leq \frac{2}{\alpha}.
	\end{eqnarray}
\end{proof}
After all iterations, $\lambda_{k+1}(S)\leq \lambda_k(S)\leq \lambda_c$. From Corollary \ref{cor}, under \eqref{eq:G}, \eqref{eq:cond1} and \eqref{eq:cond2}, 
\begin{eqnarray}
	\|(\mathbf{I}-\mathbf{P})(\mu(S)-\mu(G))\|_2^2 \leq \frac{2\lambda_c}{\alpha}.
	\label{eq:I2highp}
\end{eqnarray}
\subsubsection{Proof of Lemma \ref{lem:transition}}\label{sec:transition}

We show the following lemmas first.

\begin{lemma}\label{lem:proj}
	\begin{eqnarray}
		\|\mathbf{P}V^{-\frac{1}{2}}(S_t)(\mu(G_t)-\mu(S_t))\|_2^2\leq \frac{|S_t|}{|G_t|}.
	\end{eqnarray}
\end{lemma}
\begin{proof}
	The proof is similar to the proof of Lemma \ref{lem:I2bound} in section \ref{sec:I2bound}. Denote $Q_t=S_t\setminus G_t$. For any $\mathbf{u}$ with $\|\mathbf{u}\|_2=1$,
	\begin{eqnarray}
		\mathbf{u}^T \mathbf{P}\mathbf{u} &=& \mathbf{u}^T V^{-\frac{1}{2}}(S_t)V(S_t)V^{-\frac{1}{2}}(S_t)\mathbf{P}\mathbf{u}\nonumber\\
		&=&\frac{1}{|S_t|}\mathbf{u}^T \mathbf{P}V^{-\frac{1}{2}}(S_t)\sum_{i\in S}(\mathbf{Y}_i-\mu(S_t))(\mathbf{Y}_i-\mu(S_t))^TV^{-\frac{1}{2}}(S_t) \mathbf{P}\mathbf{u}\nonumber\\
		&\geq & \frac{1}{|S_t|}\mathbf{u}^T\mathbf{P}V^{-\frac{1}{2}}(S_t)[|G_t|(\mu(G_t)-\mu(S_t))(\mu(G_t)-\mu(S_t))^T\nonumber\\
		&&+|Q|(\mu(Q)-\mu(S_t))(\mu(Q)-\mu(S_t))^T]V^{-\frac{1}{2}}(S_t) \mathbf{P}\mathbf{u}\nonumber\\
		&=&\frac{|G_t|}{|Q|}\mathbf{u}^T \mathbf{P}V^{-\frac{1}{2}}(S_t)(\mu(G_t)-\mu(S_t))(\mu(G_t)-\mu(S_t))^TV^{-\frac{1}{2}}(S_t)\mathbf{P}\mathbf{u}.
	\end{eqnarray}
	Let
	\begin{eqnarray}
		\mathbf{u}=\frac{PV^{-\frac{1}{2}}(S_t)(\mu(G_t)-\mu(S_t))}{\|PV^{-\frac{1}{2}}(S_t)(\mu(G_t)-\mu(S_t))\|_2},
	\end{eqnarray}	
	since $\mathbf{P}$ is a projection matrix, $\mathbf{u}^T \mathbf{P}\mathbf{u}\leq 1$. Hence
	\begin{eqnarray}	\|\mathbf{P}V^{-\frac{1}{2}}(S_t)(\mu(G_t)-\mu(S_t))\|_2^2\leq \frac{|Q|}{|G_t|}.
	\end{eqnarray}
	Lemma \ref{lem:proj} can then be proved using the fact that $|Q|<|S_t|$.
\end{proof}
\begin{lemma}\label{lem:tau}
	Under \eqref{eq:G}, 
	\begin{eqnarray}
		\frac{1}{|G_t|}\sum_{i\in G}\tau_i &\leq& 8\sigma^2 \left(1+\frac{2d}{\alpha m}\right) \frac{p}{\lambda_p(S_t)}+\frac{|S_t|}{|G_t|},\label{eq:taug}\\
		\frac{1}{|S_t|}\sum_{i\in S}\tau_i &=& p.\label{eq:taus}
	\end{eqnarray}
\end{lemma}
\begin{proof}
	\textbf{Proof of \eqref{eq:taug}}.
	\begin{eqnarray}
		\frac{1}{|G_t|}\sum_{i\in G}\tau_i &=& \frac{1}{|G_t|}\sum_{i\in G}\|\mathbf{P} V^{-\frac{1}{2}}(S_t) (\mathbf{Y}_i-\mu(S_t))\|_2^2\nonumber\\
		&=&\frac{1}{|G_t|}\sum_{i\in G}\tr(\mathbf{P}V^{-\frac{1}{2}}(S_t) (\mathbf{Y}_i-\mu(S_t))(\mathbf{Y}_i-\mu(S_t))^T V^{-\frac{1}{2}}(S_t)\mathbf{P})\nonumber\\
		&=&\frac{1}{|G_t|}\sum_{i\in G}\tr(\mathbf{P}V^{-\frac{1}{2}}(S_t)(\mathbf{Y}_i-\mu(G_t))(\mathbf{Y}_i-\mu(G_t))^T V^{-\frac{1}{2}}(S_t)\mathbf{P})\nonumber\\
		&&+\tr(\mathbf{P}V^{-\frac{1}{2}}(S_t)(\mu(G_t)-\mu(S_t))(\mu(G_t)-\mu(S_t))^T V^{-\frac{1}{2}}(S_t)\mathbf{P})\nonumber\\
		&\overset{(a)}{\leq} & 8\sigma^2 \left(1+\frac{2d}{\alpha m}\right)\tr(\mathbf{P}V^{-1}(S_t)\mathbf{P})+\|PV^{-\frac{1}{2}}(S_t)(\mu(G_t)-\mu(S_t))\|_2^2\nonumber\\
		&\overset{(b)}{\leq} & 8\sigma^2 \left(1+\frac{2d}{\alpha m}\right)\tr(\mathbf{P}V^{-1}(S_t)\mathbf{P}) + \frac{|S_t|}{|G_t|}\nonumber\\
		&\overset{(c)}{\leq} & 8\sigma^2 \left(1+\frac{2d}{\alpha m}\right)\sum_{j=1}^m \frac{1}{\lambda_j(S_t)}+\frac{|S_t|}{|G_t|}\nonumber\\
		&\leq & 8\sigma^2\left(1+\frac{2d}{\alpha m}\right)\frac{p}{\lambda_p(S_t)}+\frac{|S_t|}{|G_t|}.
	\end{eqnarray}
	For (a), from \eqref{eq:G},
	\begin{eqnarray}
		\frac{1}{|G_t|}\sum_{i\in G}(\mathbf{Y}_i-\mu(G_t))(\mathbf{Y}_i-\mu(G_t))^T\preceq \frac{1}{|G_t|}\sum_{i\in G}(\mathbf{Y}_i-\mu^*)(\mathbf{Y}_i-\mu^*)^T\leq 8\sigma^2\left(1+\frac{2d}{\alpha m}\right).
	\end{eqnarray}
	(b) comes from Lemma \ref{lem:proj}. For (c), note that
	\begin{eqnarray}
		V(S_t)=\sum_{j=1}^d \lambda_j(S_t)\mathbf{u}_j\mathbf{u}_j^T,
	\end{eqnarray}
	and $\mathbf{P}\mathbf{u}_j=\mathbf{u}_j$ for $i\leq m$, $\mathbf{P}\mathbf{u}_j=\mathbf{0}$ for $i>m$, then
	\begin{eqnarray}
		\mathbf{P}V^{-1}(S_t)\mathbf{P} = \sum_{j=1}^d \frac{1}{\lambda_j(S_t)}\mathbf{P}\mathbf{u}_j\mathbf{u}_j^T \mathbf{P} = \sum_{i=1}^m \frac{1}{\lambda_j(S_t)}\mathbf{u}_j\mathbf{u}_j^T,
	\end{eqnarray}
	thus $\tr(\mathbf{P} V^{-1}(S_t)\mathbf{P})=\sum_{i=1}^m 1/\lambda_j(S_t)$.
	
	\textbf{Proof of \eqref{eq:taus}}. 
	\begin{eqnarray}
		\frac{1}{|S_t|}\sum_{i\in S}\tau_i &=&\frac{1}{|S_t|}\sum_{i\in S}\tr(\mathbf{P} V^{-\frac{1}{2}}(S_t)(\mathbf{Y}_i-\mu(S_t))(\mathbf{Y}_i-\mu(S_t))^T V^{-\frac{1}{2}}(S_t)\mathbf{P})\nonumber\\
		&=&\tr(\mathbf{P}V^{-\frac{1}{2}}(S_t) V(S_t) V^{-\frac{1}{2}}(S_t)\mathbf{P})\nonumber\\
		&=&\tr(\mathbf{P}) = p.
	\end{eqnarray}
	The proof is complete.
\end{proof}
Now it remains to prove Lemma \ref{lem:transition} with the above lemmas. The proof is by induction. \eqref{eq:transition} holds clearly for $t=0$. Therefore, it remains to show that if \eqref{eq:transition} holds for some $t$, then it must also hold for $t+1$. From \eqref{eq:lamc2}, 
\begin{eqnarray}
	\lambda_p(S_t)\geq \lambda_c\geq 32\sigma^2\left(1+\frac{2d}{\alpha m}\right)\frac{1+\epsilon}{1-\epsilon},
\end{eqnarray}
and from \eqref{eq:m2},
\begin{eqnarray}
	\frac{|S_t|}{|G_t|}=\sqrt{|S_t|}\frac{\sqrt{|S_t|}}{|G_t|}\leq \sqrt{|S_t|}\frac{\sqrt{|S_0|}}{|G_0|}\leq \frac{|S_0|}{|G_0|}=\frac{2}{\alpha}\leq \frac{1}{4}p\frac{1-\epsilon}{1+\epsilon}
\end{eqnarray}
Using \eqref{eq:taug}, 
\begin{eqnarray}
	\frac{1}{|G_t|}\sum_{i\in G}\tau_i &\leq& 8\sigma^2 \left(1+\frac{2d}{\alpha m}\right) \frac{p}{\lambda_p(S_t)}+\frac{|S_t|}{|G_t|}\\
	&\leq & \frac{1}{4}p\frac{1-\epsilon}{1+\epsilon}+\frac{1}{4}p\frac{1-\epsilon}{1+\epsilon}\nonumber\\
	&=&\frac{1}{2}p\frac{1-\epsilon}{1+\epsilon}.
\end{eqnarray}
Recall the condition \eqref{eq:cond1} and \eqref{eq:cond2}. Define $r_G(t)=|L_{Gt}|/|G_t|$ and $r(t)=|L_t|/|S_t|$ as the ratio of removed samples in $G_t$ and $S_t$. Then
\begin{eqnarray}
	\frac{r_0(t)}{r(t)}&\leq& \frac{1+\epsilon}{1-\epsilon}\frac{\frac{1}{|G_t|}\sum_{i\in G_t}\frac{\tau_i}{\tau_{\max}}}{\frac{1}{|S_t|}\sum_{i\in S_t}\frac{\tau_i}{\tau_{\max}}}\nonumber\\
	&\leq& \frac{1+\epsilon}{1-\epsilon}\frac{\frac{1}{2}p\frac{1-\epsilon}{1+\epsilon}}{p}=\frac{1}{2}.
\end{eqnarray}
Therefore, after $t+1$-th iteration,
\begin{eqnarray}
	\frac{|G_{t+1}|}{\sqrt{|S_{t+1}|}} = \frac{|G_t\setminus L_{Gt}|}{\sqrt{|S_t\setminus L_t|}}\geq \frac{|G_t|(1-r_0(t))}{\sqrt{|S_t|(1-r(t))}}\geq \frac{|G_t|(1-r_0(t))}{\sqrt{|S_t|}(1-\frac{r(t)}{2})}\geq \frac{|G_{t}|}{\sqrt{|S_{t}|}}.
\end{eqnarray}
The proof is complete.
\subsubsection{Proof of Lemma \ref{lem:prob}}\label{sec:prob}
Suppose there are $k$ independent random variables $Z_1,\ldots, Z_k$. $\text{P}(Z_i=1) = p_i$, and $\text{P}(Z_i=0) = 1-p_i$. Let $\bar{p}=(1/k)\sum_{i=1}^n p_i$, then we show a concentration inequality of the mean of $Z_i$. 

Let $Z=\sum_{i=1}^k Z_i$. Then
\begin{eqnarray}
	&&\mathbb{E}[e^{\lambda Z}]=\mathbb{E}\left[\exp\left(\lambda \sum_{i=1}^k Z_i\right)\right]=\Pi_{i=1}^k \mathbb{E}[e^{\lambda Z_i}] =\Pi_{i=1}^k (1-p_i+p_ie^\lambda)\nonumber\\
	&\leq & \exp\left(\sum_{i=1}^n \ln (1-p_i+p_ie^\lambda)\right)\leq \exp\left(\sum_{i=1}^k p_i(e^\lambda - 1)\right)=\exp\left(k\bar{p}(e^\lambda - 1)\right).
\end{eqnarray}
Hence
\begin{eqnarray}
	\text{P}(Z>k\bar{p}(1+\epsilon))&\leq & \underset{\lambda\geq 0}{\inf}\exp\left[-k\bar{p}(1+\epsilon)\lambda+k\bar{p}(e^\lambda - 1)\right]\nonumber\\
	&=& \exp\left[-k\bar{p}((1+\epsilon)\ln (1+\epsilon) - \epsilon)\right].
\end{eqnarray}
Similarly,
\begin{eqnarray}
	\text{P}(Z<k\bar{p}(1-\epsilon))&\leq & \underset{\lambda\geq 0}{\inf}\exp\left[k\bar{p}(1+\epsilon)\lambda+k\bar{p}(e^{-\lambda} - 1)\right]\nonumber\\
	&=& \exp\left[-k\bar{p}((1-\epsilon)\ln (1-\epsilon) - \epsilon)\right].
\end{eqnarray}
It is straightforward to show that $(1+\epsilon)\ln(1+\epsilon)\geq \epsilon+\epsilon^2/4$, $(1-\epsilon)\ln(1-\epsilon)\geq -\epsilon+\epsilon^2/2$ for $\epsilon\in (0,1)$. Therefore
\begin{eqnarray}
	\text{P}\left(\left|\frac{1}{k\bar{p}}\sum_{i=1}^n Z_i-1\right|>\epsilon\right)\leq 2e^{-\frac{1}{4}k\bar{p}\epsilon^2}.
	\label{eq:conc}
\end{eqnarray}
Now we use \eqref{eq:conc} to prove Lemma \ref{lem:prob}. For each step $t$, let $Z_i$ be the indicator that sample $i$ is removed, $n$ be the remaining samples. $n$ can be lower bounded by $\alpha^2N/4$ by Corollary \ref{cor}. $\bar{p}$ can be lower bounded by
\begin{eqnarray}
	\bar{p}\overset{(a)}{\geq} \frac{1}{|S_t|}\sum_{i\in S}\frac{\tau_i}{\tau_{\max}}\overset{(b)}{=}\frac{p}{\tau_{\max}}\overset{(c)}{\geq}\frac{\lambda_c }{m^\frac{2}{3}},
\end{eqnarray}
in which (a) comes from Algorithm 2. (b) comes from \eqref{eq:taus}. For (c),
\begin{eqnarray}
	\tau_i&=&\|\mathbf{P}V^{-\frac{1}{2}}(s)(\mathbf{Y}_i-\mu(S))\|_2^2\nonumber\\
	&=&\|\sum_{j=1}^p \lambda_j^{-\frac{1}{2}}\mathbf{u}_j\mathbf{u}_j^T(\mathbf{Y}_j-\mu(S))\|_2^2\nonumber\\
	&\leq & p\lambda_c^{-1}\|\mathbf{Y}_j-\mu(S)\|_2^2\nonumber\\
	&\leq& p\lambda_c^{-1}m^\frac{2}{3}.
\end{eqnarray}
Hence, by taking union bound, and use $k\geq \alpha m/2$, we have
\begin{eqnarray}
	\text{P}\left(\exists t, \frac{|L_{Gt}|}{|G_t|}> \frac{1+\epsilon}{|G_t|}\sum_{i\in G}\frac{\tau_i}{\tau_{\max}} \text{ or } 	\frac{|L_t|}{|S_t|}< \frac{1-\epsilon}{|S_t|}\sum_{i\in S}\frac{\tau_i}{\tau_{\max}}\right)\leq 4m\exp\left[-\frac{1}{8}\alpha\lambda_c m^\frac{1}{3}\epsilon^2\right].
\end{eqnarray}

\subsubsection{Proof of Lemma \ref{lem:I3bound}}\label{sec:I3bound}
Under \eqref{eq:G}, we have
\begin{eqnarray}
	\|\mu(G)-\mu^*\|_2^2 &=&\underset{\|\mathbf{u}\|_2=1}{\max}\mathbf{u}^T (\mu(G)-\mu^*)(\mu(G)-\mu^*)^T\mathbf{u}\nonumber\\
	&\leq&\underset{\|\mathbf{u}\|_2=1}{\max}\mathbf{u}^T\frac{1}{|G|}\sum_{i\in G}(\mathbf{Y}_i-\mu^*)(\mathbf{Y}_i-\mu^*)^T\mathbf{u}\nonumber\\
	&\overset{(a)}{=}& \underset{\|\mathbf{u}\|_2=1}{\max}\mathbf{u}^T\frac{1}{|G|}\sum_{i\in G}(\mathbf{X}_i-\mu^*)(\mathbf{X}_i-\mu^*)^T\mathbf{u}\nonumber\\
	&\leq & \frac{|G_0|}{|G|}\underset{\|\mathbf{u}\|_2=1}{\max}\mathbf{u}^T\frac{1}{|G_0|}\sum_{i\in G_0}(\mathbf{X}_i-\mu^*)(\mathbf{X}_i-\mu^*)^T\mathbf{u}\nonumber\\
	&\overset{(b)}{\leq} & 8\sigma^2\left(1+\frac{2d}{\alpha m}\right)\frac{|G_0|}{|G|}\nonumber\\
	&\overset{(c)}{<} & \frac{\lambda_c}{2\alpha},
	\label{eq:I3highp}
\end{eqnarray}
in which (a) comes from the definition that $G_0\subset S_0^*$. Therefore, $\mathbf{Y}_i$ is not modified from the clean sample $\mathbf{X}_i$. (b) use Lemma \ref{lem:G}. (c) uses $\lambda_c>32\sigma^2(1+2d/(\alpha m))$, $|G_0|=\alpha m/2$, and from Corollary \ref{cor}, $|G|\geq \alpha^2 N/4$.

Since $\mathbf{P}$ is a projection matrix,
\begin{eqnarray}
	\|(\mathbf{I}-\mathbf{P})(\mu(G) - \mu^*)\|_2^2\leq \frac{\lambda_c}{2\alpha}.
\end{eqnarray}

\subsection{Proof of Theorem 2}\label{sec:strong}
In this section we show Theorem 2. Most of the proofs are the same as the proof of Theorem 1. The only exception is that Lemma \ref{lem:G} no longer holds under strong contamination model, since the adversary can modify $(1-\alpha)N$ arbitrarily. Therefore, in this section, we derive a new version of Lemma \ref{lem:G}. Other steps can just follow the proof of Theorem 1 in Appendix~\ref{sec:additive}.

Our proof begins with the matrix Bernstein inequality \cite{tropp2015introduction}.
\begin{lemma}\label{lem:bern}
	(From \cite{tropp2015introduction}, Theorem 1.6.2) Let $\mathbf{W}_1,\ldots, \mathbf{W}_N$ be i.i.d random matrices, with $\mathbb{E}[\mathbf{W}_i] = \mathbf{0}$, and $\|\mathbf{W}_i\|\leq L$ almost surely. Define $\mathbf{Z}=\sum_{i=1}^N \mathbf{W}_i$. Then for all $t\geq 0$,
	\begin{eqnarray}
		\text{P}(\|\mathbf{Z}\|>t)\leq 2d\exp\left(-\frac{t^2}{2N\|\mathbb{E}[\mathbf{W}\mathbf{W}^T]\|+2Lt/3}\right),
		\label{eq:bern}
	\end{eqnarray}
	in which $\mathbf{W}$ is an i.i.d copy of $\mathbf{W}_1,\ldots, \mathbf{W}_N$.
\end{lemma}

Based on Lemma \ref{lem:bern}, we show the following lemma.

\begin{lemma}\label{lem:Gstrong}
	With probability at least $1-\exp[-((1/2)\ln 2-1/4)N\alpha]-2d\exp(-(\ln^2 N)/2)$, there exists a set $G_0\subset S_0^*$ with $|G_0|\geq \alpha m/2$, and
	\begin{eqnarray}
		\left\|\frac{1}{|G_0|}\sum_{i\in G_0} (\mathbf{Y}_i-\mu^*)(\mathbf{Y}_i-\mu^*)^T\right\|_{op}\leq \frac{2\sigma^2}{\alpha} \left(1+\sqrt{\frac{16d\ln^2 N}{3\alpha m}}\right)^2.
	\end{eqnarray}
\end{lemma}
\begin{proof}
	Recall that $\mathbf{X}_i$ for all $i\in S_0$ are samples before contamination, while $\mathbf{Y}_i$ are observed values. $\mathbf{Y}_i = \mathbf{X}_i$ for at least $\alpha m$ samples.
	
	Let
	\begin{eqnarray}
		\mathbf{W}_i &=& \frac{1}{N}(\mathbf{X}_i-\mu^*)(\mathbf{X}_i-\mu^*)^T \mathbf{1}\left(\|\mathbf{X}_i-\mu^*\|\leq M\right)\nonumber\\
		&&-\frac{1}{N}\mathbb{E}[(\mathbf{X}-\mu^*)(\mathbf{X}-\mu^*)^T \mathbf{1}\left(\|\mathbf{X}-\mu^*\|\leq M\right)],
	\end{eqnarray}
	in which $\mathbf{X}$ is an i.i.d copy of $\mathbf{X}_i$ for $i\in S_0$, $M$ is a truncation threshold that will be determined later. Define $\mathbf{Z}=\sum_{i=1}^N \mathbf{W}_i$. Note that $\mathbb{E}[\mathbf{W}_i] = \mathbf{0}$. Hence we can bound $\mathbf{Z}$ with high probability using Lemma \ref{lem:bern}. It remains to find $L$ and $\mathbb{E}[\mathbf{W} \mathbf{W}^T]$. From triangle inequality,
	\begin{eqnarray}
		\|\mathbf{W}\|&\leq& \frac{1}{N}\left\|(\mathbf{X}_i-\mu^*)(\mathbf{X}_i-\mu^*)^T \mathbf{1}(\|\mathbf{X}_i-\mu^*\|\leq M)\right\|\nonumber\\
		&&+\frac{1}{N}\left\|\mathbb{E}[(\mathbf{X}-\mu^*)(\mathbf{X}-\mu^*)^T \mathbf{1}(\|\mathbf{X}_i-\mu^*\|\leq M)]\right\|\nonumber\\
		&\leq & \frac{2M^2}{N}.
	\end{eqnarray}
	Hence, in \eqref{eq:bern}, we can let $L=2M^2/N$. Moreover,
	\begin{eqnarray}
		\mathbb{E}[\mathbf{W}\mathbf{W}^T]&=&\frac{1}{m^2}\mathbb{E}\left[(\mathbf{X}-\mu^*)(\mathbf{X}-\mu^*)^T(\mathbf{X}-\mu^*)(\mathbf{X}-\mu^*)^T\mathbf{1}(\|\mathbf{X}-\mu^*\|\leq M)\right]\nonumber\\
		&&-\left[\frac{1}{N}\mathbb{E}[(\mathbf{X}_i-\mu^*)(\mathbf{X}_i-\mu^*)^T\mathbf{1}(\|\mathbf{X}_i-\mu^*\|\leq M)]\right]^2\nonumber\\
		&\preceq& \frac{M^2}{m^2}\mathbb{E}\left[(\mathbf{X}-\mu^*)(\mathbf{X}-\mu^*)^T\right]\nonumber\\
		&=&\frac{M^2}{m^2}V^*\preceq \frac{M^2\sigma^2}{m^2} \mathbf{I}.
	\end{eqnarray}
	Therefore,
	\begin{eqnarray}
		\text{P}(\|\mathbf{Z}\|>t)\leq 2d\exp\left[-\frac{t^2}{2\frac{M^2\sigma^2}{N}+\frac{4M^2}{3N}t}\right].
	\end{eqnarray}
	Let
	\begin{eqnarray}
		t_0=\max\left\{M\sigma\sqrt{\frac{2\ln^2 N}{N}}, \frac{4M^2}{3N}\ln^2 N \right\}.
		\label{eq:t0}
	\end{eqnarray}
	Then
	\begin{eqnarray}
		\text{P}(\|\mathbf{Z}\|>t_0)\leq 2de^{-\frac{1}{2}\ln^2 N}.
		\label{eq:prob1}
	\end{eqnarray}
	If $\|\mathbf{Z}\|\leq t_0$, then
	\begin{eqnarray}
		\left\|\frac{1}{N}\sum_{i\in S_0}(\mathbf{X}_i-\mu^*)(\mathbf{X}_i-\mu^*)^T\mathbf{1}(\|\mathbf{X}_i-\mu^*\|\leq M)\right\|&\leq& \left\|\sum_{i\in S_0} \mathbf{W}_i\right\|+\left\|\mathbb{E}[(\mathbf{X}-\mu^*)(\mathbf{X}-\mu^*)^T]\right\|\nonumber\\
		&\leq & t_0+\sigma^2.
		\label{eq:mbound}
	\end{eqnarray}
	Let 
	\begin{eqnarray}
		M=\sqrt{\frac{4\sigma^2 d}{\alpha}}.
		\label{eq:M}
	\end{eqnarray}
	Then
	\begin{eqnarray}
		\text{P}\left(\|\mathbf{X}_i-\mu^*\|>M\right)\leq \frac{\mathbb{E}[\|\mathbf{X}_i-\mu^*\|_2^2]}{M^2}=\frac{\mathbb{E}[\tr(V^*)]}{M^2}=\frac{\sigma^2 d}{M^2}=\frac{\alpha}{4}.
	\end{eqnarray}
	Define $n=|\{i\in S_0|\|\mathbf{X}_i-\mu^*\|>M \}|$ as the number of samples whose distances to $\mu^*$ are more than $M$. From Chernoff inequality,
	\begin{eqnarray}
		\text{P}\left(n>\frac{1}{2}\alpha m\right)\leq \exp\left[-\left(\frac{1}{2}\ln 2-\frac{1}{4}\right) N\alpha\right].
		\label{eq:prob2}
	\end{eqnarray}
	Now we assume that $\|\mathbf{Z}\|\leq t_0$ and $n\leq \alpha m/2$. The probability of violating these two conditions are bounded in \eqref{eq:prob1} and \eqref{eq:prob2}, respectively. Then define
	\begin{eqnarray}
		G_0=\{i\in S_0\setminus \mathcal{B}|\|\mathbf{X}_i-\mu^*\|\leq M \}.
	\end{eqnarray}
	The size of $G_0$ is lower bounded by
	\begin{eqnarray}
		|G_0|\geq N-|\mathcal{B}|-n\geq N-(1-\alpha)N-\frac{1}{2}\alpha m=\frac{1}{2}\alpha m.
	\end{eqnarray}
	Then
	\begin{eqnarray}
		&&\left\|\frac{1}{|G_0|}\sum_{i\in G_0} (\mathbf{Y}_i-\mu^*)(\mathbf{Y}_i-\mu^*)^T\right\|\nonumber\\
		&\overset{(a)}{=}&\left\|\frac{1}{|G_0|}\sum_{i\in G_0} (\mathbf{X}_i-\mu^*)(\mathbf{X}_i-\mu^*)^T\right\|\nonumber\\
		&\leq &\frac{|S_0|}{|G_0|}\left\|\frac{1}{|S_0|}\sum_{i\in S_0} (\mathbf{X}_i-\mu^*)(\mathbf{X}_i-\mu^*)^T\mathbf{1}\left(\|\mathbf{X}_i-\mu^*\|\leq M\right)\right\|\nonumber\\
		&\overset{(b)}{\leq} & \frac{2}{\alpha} (\sigma^2 + t_0)\nonumber\\
		&\overset{(c)}{=}&\frac{2}{\alpha}\sigma^2\left(1+\frac{M}{\sigma}\sqrt{\frac{2\ln^2 N}{N}}+\frac{4M^2}{3N\sigma^2}\ln^2 N\right)\nonumber\\
		&\overset{(d)}{=}&\frac{2}{\alpha}\sigma^2 \left(1+\sqrt{\frac{8d\ln^2 N}{\alpha m}}+\frac{16d}{3\alpha} \frac{\ln^2 N}{N}\right)\nonumber\\
		&<&\frac{2\sigma^2}{\alpha}\left(1+\sqrt{\frac{16d\ln^2 N}{3\alpha m}}\right)^2.
		\label{eq:Gresult}
	\end{eqnarray}
	For (a), recall that for all samples that are not attacked, $\mathbf{Y}_i=\mathbf{X}_i$. (b) comes from \eqref{eq:mbound}. (c) comes from \eqref{eq:t0}. (d) comes from \eqref{eq:M}. Recall that \eqref{eq:Gresult} relies on two conditions, $\|\mathbf{Z}\|\leq t_0$ and $n\leq \alpha m/2$. Therefore, the probability that \eqref{eq:Gresult} does not hold is no more than the sum of \eqref{eq:prob1} and \eqref{eq:prob2}. The proof is complete.
\end{proof}
Under strong contamination model, Lemma \ref{lem:Gstrong} can be used to replace Lemma \ref{lem:G} under additive contamination model. Lemma \ref{lem:I1bound} still holds here. For Lemma \ref{lem:I2bound}, \eqref{eq:lamc2} becomes
\begin{eqnarray}
	\lambda_c \geq \frac{8\sigma^2}{\alpha} \left(1+\sqrt{\frac{16d\ln^2 N}{3\alpha m}}\right)^2,
	\label{eq:lamc2-strong}
\end{eqnarray}
while other conditions remain the same. Using the same steps as the proof of Lemma \ref{lem:I2bound}, it can be shown that \eqref{eq:I2bound} in Lemma \ref{lem:I2bound} holds. Similar arguments hold for Lemma \ref{lem:I3bound}. Therefore,
\begin{eqnarray}
	\mathbb{E}[\|\hat{\mu}-\mu^*\|_2^2]\leq \frac{3\sigma^2p}{N_A}+\frac{15\lambda_c}{2\alpha} + \delta_N,
\end{eqnarray}
in which $\delta_N$ decays faster than any polynomial of $N$. The proof of Theorem 2 is complete.

\subsection{Proof of Theorem 3}\label{sec:minimax}

Recall that $R_A$ and $R_S$ are defined as
\begin{eqnarray}
	R_A(\alpha) = \underset{\hat{\mu}}{\inf}\underset{D\in \mathcal{F}}{\sup}\underset{\pi_A(\alpha)}{\sup}\|\hat{\mu}-\mu^*\|_2,
	\label{eq:ra}
\end{eqnarray}
and
\begin{eqnarray}
	R_S(\alpha) = \underset{\hat{\mu}}{\inf}\underset{D\in \mathcal{F}}{\sup}\underset{\pi_S(\alpha)}{\sup}\|\hat{\mu}-\mu^*\|_2,
	\label{eq:rs}
\end{eqnarray}
in which $\mathcal{F}$ is the set of all distributions satisfying
\begin{eqnarray}
	\Cov[\mathbf{X}_i]\preceq \frac{\sigma^2}{n}\mathbf{I},\label{eq:vxi}\\
\end{eqnarray}
and 
\begin{eqnarray}
	\Cov[\mathbf{X}_0]\preceq \frac{\sigma^2}{N_A}\mathbf{I}.
	\label{eq:vx0}
\end{eqnarray}
Under the condition
\begin{eqnarray}
	\left\lceil\frac{N_A}{n}\right\rceil\leq \frac{\ln \frac{d}{4}}{4\beta\left(\ln \frac{1}{\beta} + 1\right)},
	\label{eq:naub}
\end{eqnarray}
we need to prove

\begin{eqnarray}
	R_A(\alpha)&\geq& \frac{\sigma}{2\sqrt{2\alpha}}\left(\frac{1}{\sqrt{n}}+\frac{1}{\sqrt{N_A}}\right),\label{eq:rabound}\\
	R_S(\alpha)&\geq & \frac{\sigma}{2\sqrt{2}\alpha}\left(\frac{1}{\sqrt{n}}+\frac{1}{\sqrt{N_A}}\right).
	\label{eq:rsbound}
\end{eqnarray}
\textbf{Proof of \eqref{eq:rabound}}. Now we first derive the minimax lower bound under additive contamination model.

Recall the definition of $R_A$ in \eqref{eq:ra}. Define
\begin{eqnarray}
	R_A'(\beta)=\underset{\hat{\mu}}{\inf}\underset{D\in \mathcal{F}}{\sup}\underset{\pi_A'(\beta)}{\sup} \|\hat{\mu}-\mu^*\|_2,
\end{eqnarray}
in which $\pi_A'(\beta)$ means that for each sample in $S_0$, $\mathbf{Y}_i=\mathbf{X}_i$ with probability $\beta$. In additive contamination model, whether $\mathbf{Y}_i=\mathbf{X}_i$ is independent of the value of $\mathbf{X}_i$. Note that now the number of uncorrupted samples is random, following a binomial distribution $\mathbb{B}(N, \beta)$. On the contrary, $\pi_A(\alpha)$ means that the ratio of uncorrupted sample is fixed to be at least $\alpha$. The following lemma links $R_A$ and $R_A'$.

\begin{lemma}\label{lem:link}
	With probability at least $1-\exp[-(\ln 2-1/2)m\alpha]$,
	\begin{eqnarray}
		R_A(\alpha)\geq R_A'\left(\frac{\alpha}{2}\right).
	\end{eqnarray}
\end{lemma}
\begin{proof}
	Let $n$ be the number of samples that are not corrupted, under corruption policy $\pi_A'(\beta)$. By Chernoff inequality, for $t>N\beta$,
	\begin{eqnarray}
		\text{P}(n>T)\leq e^{-N\beta}\left(\frac{eN\beta}{t}\right)^t.
	\end{eqnarray}
	Let $t=2N\beta$. Then
	\begin{eqnarray}
		\text{P}(n>2N\beta)\leq e^{-(2\ln 2 - 1)N\beta}.
	\end{eqnarray}
	Let $\beta=\alpha/2$. As long as $n\leq 2N\beta=N\alpha$, the corruption will be not as severe as $\pi_A(\alpha)$, then $R_A(\alpha)\geq R_A'(\beta)$. The proof is complete.
\end{proof}
It remains to get a lower bound of $R_A'(\beta)$.  

We analyze the case with $N_A\geq n$ first. Let 
\begin{eqnarray}
	k = \left\lceil \frac{N_A}{n}\right\rceil.
	\label{eq:k}
\end{eqnarray}

Let $Z$ be a random variable taking values in $\{1,\ldots, d\}$ uniformly, with each value of $Z$ corresponds to a hypothesis.
Denote $p_j(\cdot)$ as the distribution of the underlying truth of corrupted samples $\mathbf{X}_1,\ldots, \mathbf{X}_m$ and the auxiliary clean sample $\mathbf{X}_0$ as following.
\begin{definition}
	Construct $d$ hypotheses. For $j$-th hypothesis, define a distribution $p_j$ as following: For $i=1,\ldots, m$, let $X_{ij} = \sigma/\sqrt{2n\beta}$, in which $X_{ij}$ is the $j$-th component of $\mathbf{X}_i$. For all $l\neq j$, $p_i(X_{il} = 0) = 1-2\beta$, $p_i(X_{il}=\sigma/\sqrt{2n\beta}) = \beta$, and $p_i(X_{il}=-\sigma/\sqrt{2n\beta}) = \beta$. Moreover, construct $k$ vectors $\mathbf{U}_1, \ldots, \mathbf{U}_k$. Conditional on $Z=j$, $\mathbf{X}_1,\ldots, \mathbf{X}_m, \mathbf{U}_1,\ldots, \mathbf{U}_k$ are i.i.d. Then let
	\begin{eqnarray}
		\mathbf{X}_0=\frac{1}{k}\sum_{i=1}^k \mathbf{U}_i.
	\end{eqnarray}
\end{definition}

The above construction uses some idea from \cite{diakonikolas2018list}, Proposition 5.4.

The mean of $p_j$ is
\begin{eqnarray}
	\mu_j = \frac{\sigma}{\sqrt{2n\beta}}\mathbf{e}_j,
	\label{eq:mui}
\end{eqnarray}
in which $\mathbf{e}_i$ is the unit vector in $i$-th coordinate. Such construction also ensures that $\Var[X_j]=\sigma^2$ for $j\neq i$. Thus assumption \ref{ass:var} is satisfied.

Construct $p_Y$, the distribution of $Y$ as following: $p_Y(Y_j = 0)=1-2\beta$, $p_Y(Y_j = \sigma/\sqrt{2n\beta})=p_Y(Y_j=-\sigma/\sqrt{2n\beta}) = \beta$. Then it is straightforward to show that for each $i$, there exists a $q_i$ such that $p_Y=\beta p_i+(1-\beta)q_i$. Therefore, we design the contamination strategy $\pi_A'(\beta)$ as follows: for each $i\in [m]$, with probability $\beta$, $\mathbf{Y}_i=\mathbf{X}_i$; with probability $1-\beta$, the sample is corrupted, and then $\mathbf{Y}_i$ follows a conditional distribution $q_i$. With this operation, for all values of $Z$, the distribution of $\mathbf{Y}_i$ is always $p_Y$, which does not depend on $Z$.

For all possible estimator $\hat{\mu}$, define
\begin{eqnarray}
	\hat{Z}=\underset{i\in [d]}{\arg\min}\|\hat{\mu}-\mu_i\|.
	\label{eq:zhat}
\end{eqnarray}
From Fano's inequality,
\begin{eqnarray}
	\text{P}(\hat{Z}\neq Z)&\overset{(a)}{\geq}& \frac{H(Z|\mathbf{Y}_1,\ldots, \mathbf{Y}_m, \mathbf{X}_0)-\ln 2}{\ln (d-1)}\nonumber\\
	&=&\frac{H(Z)-I(Z;\mathbf{Y}_1,\ldots, \mathbf{Y}_m, \mathbf{X}_0)-\ln 2}{\ln(d-1)}\nonumber\\
	&\geq & 1-\frac{I(Z;\mathbf{Y}_1,\ldots, \mathbf{Y}_m, \mathbf{X}_0)+\ln 2}{\ln d}\nonumber\\
	&\overset{(b)}{=}&1-\frac{I(Z;\mathbf{X}_0)+\ln 2}{\ln d}.
\end{eqnarray}
(a) holds since $Z\rightarrow \mathbf{Y}_1,\ldots, \mathbf{Y}_m, \mathbf{X}_0\rightarrow \hat{Z}$ is a Markov chain. (b) holds the distribution of $\mathbf{Y}$ does not depend on $Z$. The mutual information can be bounded by
\begin{eqnarray}
	I(Z;\mathbf{X}_0) = I\left(Z;\frac{1}{k}\sum_{i=1}^k \mathbf{U}_i\right)\leq I(Z;\mathbf{U}_1,\ldots, \mathbf{U}_k)=k(H(\mathbf{U}_i) - H(\mathbf{U}_i|Z)),
\end{eqnarray}

For any $j$, given $Z=j$, $U_{ij}$ is fixed. Therefore
\begin{eqnarray}
	H(\mathbf{U}_i) - H(\mathbf{U}_i|Z=j) &\leq& -2\beta\ln \beta - (1-2\beta)\ln (1-2\beta)\nonumber\\
	&\leq & 2\beta\left(\ln \frac{1}{\beta} + 1\right).
\end{eqnarray}
Hence
\begin{eqnarray}
	\text{P}(\hat{Z}\neq Z) \geq 1-\frac{2k\beta \left(\ln \frac{1}{\beta} + 1\right)+\ln 2}{\ln d}.
\end{eqnarray}
Recall \eqref{eq:naub} yields
\begin{eqnarray}
	k\leq \frac{\ln \frac{d}{4}}{4\beta\left(\ln \frac{1}{\beta} + 1\right)}.
	\label{eq:nabound}
\end{eqnarray}
Hence $\text{P}(\hat{Z}\neq Z)\geq 1/2$.

Recall \eqref{eq:mui}. For $i,j\in [d]$, $i\neq j$, $\|\mu_i-\mu_j\|_2\geq \sigma/\sqrt{\beta}$. Therefore, if $\hat{Z}\neq Z$, then
\begin{eqnarray}
	\|\hat{\mu}-\mu^*\|_2&\overset{(a)}{=}&\|\hat{\mu}-\mu_Z\|_2\nonumber\\
	&\overset{(b)}{\geq}& \frac{1}{2}(\|\hat{\mu}-\mu_Z\|_2+\|\hat{\mu}-\mu_{\hat{Z}}\|_2)\nonumber\\
	&\geq & \frac{1}{2}\|\mu_Z-\mu_{\hat{Z}}\|_2\nonumber\\
	&\overset{(c)}{\geq} & \frac{\sigma}{2\sqrt{n\beta}}.
\end{eqnarray}
For (a), note that given $Z=j$, the real mean of $\mathbf{X}$ is $\mu^*=mu_i$. (b) uses \eqref{eq:zhat}, which yields $\|\hat{\mu}-\mu_{\hat{Z}}\|_2\leq \|\hat{\mu}-\mu_Z\|_2$. (c) uses \eqref{eq:mui}, which yields $\|\mu_j-\mu_l\|_2=\sigma/\sqrt{n\beta}$ for $j\neq l$.

From the above analysis, we conclude that if \eqref{eq:nabound} holds, then with probability at least $1/2$, $\|\hat{\mu}-\mu^*\|_2\geq \sigma/(2\sqrt{\beta})$. Taking $\beta=\alpha/2$. From Lemma \ref{lem:link}, with probability at least $1/2-\exp[-(\ln 2-1/2)m\alpha]$, 
\begin{eqnarray}
	\|\hat{\mu}-\mu^*\|_2\geq \frac{\sigma}{2\sqrt{n\beta}} = \frac{\sigma}{\sqrt{2n\alpha}}.
\end{eqnarray}
The above derivation holds for $N_A\geq n$. Similar bound can be derived for $N_A<n$:
\begin{eqnarray}
	\|\hat{\mu}-\mu^*\|_2\geq = \frac{\sigma}{\sqrt{2N_A\alpha}}.
\end{eqnarray}
Therefore \eqref{eq:rabound} holds.

\textbf{Proof of \eqref{eq:rsbound}}. Now we move on to strong contamination model. Define
\begin{eqnarray}
	R_S'(\beta)=\underset{\hat{\mu}}{\inf}\underset{D\in \mathcal{F}}{\sup}\underset{\pi_S'(\beta)}{\sup} \|\hat{\mu}-\mu^*\|_2,
\end{eqnarray}
in which $\pi_S'(\beta)$ is a corruption strategy such that $\mathbf{Y}_i=\mathbf{X}_i$ with probability
$\beta$. The difference of $\pi_S'$ with $\pi_A'$ is that $\pi_S'$ allows the whether $\mathbf{Y}_i=\mathbf{X}_i$ depends on the value of $\mathbf{X}_i$. Similar to Lemma \ref{lem:link}, the following relation holds:
\begin{eqnarray}
	R_S(\alpha) \geq R_S'\left(\frac{\alpha}{2}\right),
\end{eqnarray}
Let $\beta = \alpha / 2$. Now we construct hypotheses as following, which is more complex than those in additive contamination model. For simplicity, now we only show the case with $N_A\geq n$.
\begin{definition}
	Construct $d$ hypotheses, and denote $p_j$ as the distribution of $\mathbf{X}$ under $i$-th hypothesis. If $Z=j$, then $\mathbf{X}$ follows $p_j$. $p_j$ is defined as follows. Given $Z=j$, for $i\in [m]$, $X_{ij}=\sigma/(2\sqrt{n}\beta)$. Moreover, construct another auxiliary random variable $R$, such that $p_j(R=0) = 1-\beta$, $p_j(R=1)=\beta$. Conditional on $R=0$, for all $j\neq i$, $p_j(X_j=0|R=0) = 1-2\beta$, $p_j(X_j = \sigma/(2\sqrt{n\beta})|R=0)=p_j(X_j = -\sigma/(2\sqrt{n\beta})|R=0)=\beta$. Conditional on $R=1$, for all $j\neq i$, $p_j(X_j=\sigma/(2\sqrt{n}\beta)|R=1) = p_j(X_j=-\sigma/(2\sqrt{n}\beta)|R=1) = \beta$. Conditional on $Z=j$ and $U$, $X_j$ are independent for $j\in [d]\setminus\{i\}$. 
\end{definition}

Now the mean of $p_j$ is
\begin{eqnarray}
	\mu_i=\frac{\sigma}{2\sqrt{n}\beta} \mathbf{e}_i.
\end{eqnarray}
Under $Z=j$, for $j\neq i$, the variance in $j$-th dimension is
\begin{eqnarray}
	\Var_i[X_j]=2\beta^2\left(\frac{\sigma}{2\sqrt{n}\beta}\right)^2 + 2\beta(1-\beta)\left(\frac{\sigma}{2\sqrt{n\beta}}\right)^2\leq \frac{\sigma^2}{n},
\end{eqnarray}
hence \eqref{eq:vxi} is satisfied.

Construct $p_Y$, the distribution of $Y$ as following: $p_Y(Y_j=0)=1-2\sqrt{n}\beta$, $p_Y(Y_j=\sigma/(2\sqrt{n}\beta)) = p_Y(Y_j=-\sigma/(2\sqrt{n}\beta)) = \beta$. Then the total variation distance between $p_j$ and $p_Y$ is $\mathbb{TV}(p_j, p_Y)$. Hence, for any $i\in [d]$, $\mathbf{X}$ follows $p_j$, and the attacker can ensure that $\mathbf{Y}$ follows $p_Y$ and $\mathbf{Y}=\mathbf{X}$ with probability $\beta$. Note that now for $i,j\in [d]$, $i\neq j$, $\|\mu_i-\mu_j\|_2\geq \sigma/(\sqrt{2n}\beta)$. Let $\beta=\alpha/2$, and follow similar steps in additive contamination model, with probability at least $1/2-\exp[-(\ln 2-1/2)N\alpha]$,
\begin{eqnarray}
	\|\hat{\mu}-\mu^*\|_2\geq \frac{\sigma}{\sqrt{2n}\alpha}.
\end{eqnarray}
The case with $N_A<n$ can be proved similarly. The proof of Theorem 3 is complete.

\subsection{Proof of Theorem 4}\label{sec:distributed}
Conduct the same decomposition as \eqref{eq:mse} to $\|g(\mathbf{w}) - \nabla F(\mathbf{w})\|_2^2$:
\begin{eqnarray}
	\|g(\mathbf{w})-\nabla F(\mathbf{w})\|_2^2&\leq& 3\|\mathbf{P}(\mu_w(A) - \nabla F(\mathbf{w}))\|_2^2+3\|(\mathbf{I}-\mathbf{P})(\mu_w(S) - \mu_w(G))\|_2^2 \nonumber\\
	&&+ 3\|(\mathbf{I}-\mathbf{P})(\mu_w(G)-\mu^*)\|_2^2,
	\label{eq:msegrad}
\end{eqnarray}
in which
\begin{eqnarray}
	\mu_w(A)=\frac{1}{N_A}\sum_{j=1}^{N_A}\nabla f(\mathbf{w}, \mathbf{Z}_{0j}).
\end{eqnarray}
Then we show the following lemma.
\begin{lemma}
	For all $0\leq \lambda \leq N_A/\sigma$,
	\begin{eqnarray}
		\underset{\mathbf{v}:\|\mathbf{v}\|_2=1}{\sup}\mathbb{E}\left[e^{\lambda \mathbf{v}^T(\mu_w(A)-\nabla F(\mathbf{w}))}\right]\leq e^{\frac{1}{2N_A}\sigma^2 \lambda^2}.
	\end{eqnarray}
\end{lemma}
\begin{proof}
	For any $\mathbf{v}\in \mathbb{R}^d$ with $\|\mathbf{v}\|_2=1$, from Assumption \ref{ass:distributed}(a), for all $0\leq \lambda\leq N_A/\sigma$,
	\begin{eqnarray}
		\mathbb{E}\left[e^{\lambda \mathbf{v}^T(\mu_w(A)-\nabla F(\mathbf{w}))}\right]&=& \mathbb{E}\left[\exp\left(\frac{1}{N_A}\lambda \mathbf{v}^T \sum_{j=1}^{N_A}\left(\nabla f(\mathbf{w}, Z_{0j}) - \nabla F(\mathbf{w})\right)\right)\right]\nonumber\\
		&\leq & \left(e^{\frac{1}{2}\sigma^2 \frac{\lambda^2}{N_A^2}}\right)^{N_A}\nonumber\\
		&=& e^{\frac{1}{2N_A}\sigma^2 \lambda^2}.
	\end{eqnarray}
\end{proof}
Therefore for any $t$,
\begin{eqnarray}
	\text{P}(\mathbf{v}^T (\mu_w(A)- \nabla F(\mathbf{w})) > t)&\leq & \underset{\lambda \geq 0}{\inf} e^{-\lambda t}\mathbb{E}\left[e^{\lambda \mathbf{v}^T(\mu_w(A)-\nabla F(\mathbf{w}))}\right]\nonumber\\
	&=& \underset{0\leq \lambda \leq N_A/\sigma}{\inf}e^{-\lambda t}e^{\frac{1}{2N_A}\sigma^2 \lambda^2}\nonumber\\
	&=&e^{-\frac{N_A}{2}\min\left\{\frac{t^2}{\sigma^2}, \frac{t}{\sigma} \right\}}
\end{eqnarray}
Let 
\begin{eqnarray}
	t=\sqrt{\frac{2}{N_A}\ln \frac{p}{\delta_0}}\sigma.
\end{eqnarray}
From the condition (condition (5) in Theorem 4)
\begin{eqnarray}
	N_A\geq 2\ln \frac{mT}{\delta},
	\label{eq:na}
\end{eqnarray}	
$t\leq \sigma$ holds. Then
\begin{eqnarray}
	\text{P}\left(\mathbf{v}^T(\mu_w(A)- \nabla F(\mathbf{w}))>\sqrt{\frac{2}{N_A}\ln \frac{p}{\delta_0}}\sigma\right)\leq \frac{\delta_0}{p}.
\end{eqnarray}
Note that $\mathbf{P}=\mathbf{U}_p\mathbf{U}_p^T$, hence
\begin{eqnarray}
	\|\mathbf{P}(\mu_w(A)-\nabla F(\mathbf{w}))\|_2^2 &=& \mathbf{U}_p \mathbf{U}_p^T (\mu_w(A)-\nabla F(\mathbf{w}))(\mu_w(A)-\nabla F(\mathbf{w}))^T \mathbf{U}_p \mathbf{U}_p^T\nonumber\\
	&=&\|\mathbf{U}_p^T(\mathbf{X}_0-\nabla F(\mathbf{w}))\|_2^2.
\end{eqnarray}
Then
\begin{eqnarray}
	\text{P}\left(\|\mathbf{P}(\mu_w(A)-\nabla F(\mathbf{w}))\|_2^2 > \frac{2p}{N_A}\ln \frac{p}{\delta_0}\sigma^2\right) 
	&=& \text{P}\left(\|\mathbf{U}_p^T(\mathbf{X}_0-\nabla F(\mathbf{w}))\|_2^2 > \frac{2p}{N_A}\ln \frac{p}{\delta_0}\sigma^2\right)\nonumber\\
	&\leq & \sum_{k=1}^p \text{P}\left((\mathbf{u}_k^T (\mu_w(A)-\mu^*))^2>\frac{2}{N_A}\ln \frac{p}{\delta_0}\sigma^2\right)\nonumber\\
	&=&\sum_{k=1}^p \text{P}\left(\mathbf{u}_k^T (\mu_w(A)-\mu^*)>\sqrt{\frac{2}{N_A}\ln \frac{p}{\delta_0}}\sigma\right)\nonumber\\
	&=&\delta_0.
\end{eqnarray}
Therefore, with probability at least $1-\delta_0$,
\begin{eqnarray}
	\|\mathbf{P}(\mu_w(A)-\nabla F(\mathbf{w}))\|_2^2\leq \frac{2p}{N_A}\sigma^2\ln \frac{p}{\delta_0}.
	\label{eq:I1grad}
\end{eqnarray}
\eqref{eq:I1grad} bounds the first term in \eqref{eq:msegrad}. For the second and third term in \eqref{eq:msegrad}, we can use \eqref{eq:I2highp} and \eqref{eq:I3highp}. Recall that \eqref{eq:I2highp} and \eqref{eq:I3highp} rely on \eqref{eq:lamc2} and \eqref{eq:m2}. In distributed learning setting, the variance of $\mathbf{X}_i$ satisfies
\begin{eqnarray}
	\Var[X_i]\preceq \frac{\sigma^2}{n} \mathbf{I},
\end{eqnarray} 
and there are $m$ working machines in total, hence under additive contamination model, \eqref{eq:lamc2} becomes
\begin{eqnarray}
	\lambda_c\geq 32\frac{\sigma^2}{n}\left(1+\frac{2d}{\alpha m}\right)\frac{1+\epsilon}{1-\epsilon},
	\label{eq:lamc2new}
\end{eqnarray}
while under strong contamination model, \eqref{eq:lamc2-strong} becomes
\begin{eqnarray}
	\lambda_c \geq \frac{8\sigma^2}{n\alpha} \left(1+\sqrt{\frac{16d\ln^2 m}{3\alpha m}}\right)^2\frac{1+\epsilon}{1-\epsilon}.
	\label{eq:lamc2new-strong}
\end{eqnarray}

Using Lemma \ref{lem:I2bound} and Lemma \ref{lem:I3bound}, if $\lambda_c$ satisfies \eqref{eq:lamc2new} or \eqref{eq:lamc2new-strong} for additive or strong contamination model, respectively, then with probability at least $1-e^{-\frac{1}{64}\alpha m}- 4me^{-\frac{1}{16}\lambda_cp\alpha^2 m^\frac{1}{3}\epsilon^2}$,
\begin{eqnarray}
	\|(\mathbf{I}-\mathbf{P})(\mu_w(S)-\mu_w(G))\|_2^2&\leq& \frac{2\lambda_c}{\alpha},\label{eq:I2grad}\\
	\|(\mathbf{I}-\mathbf{P})(\mu_w(G)-\nabla F(\mathbf{w}))\|_2^2&\leq& \frac{\lambda_c}{2\alpha}.
	\label{eq:I3grad}
\end{eqnarray}
Substitute three terms in \eqref{eq:msegrad} with \eqref{eq:I1grad}, \eqref{eq:I2grad} and \eqref{eq:I3grad}, it can be shown that with probability at least $1-\delta_0-e^{-\frac{1}{64}\alpha m}- 4me^{-\frac{1}{16}\lambda_cp\alpha^2 m^\frac{1}{3}\epsilon^2}$,
\begin{eqnarray}
	\|g(\mathbf{w})-\nabla F(\mathbf{w})\|_2^2 \leq \frac{6p}{N_A}\sigma^2 \ln \frac{p}{\delta_0} + \frac{15\lambda_c}{2\alpha}.
	\label{eq:gradbound} 
\end{eqnarray}
Recall that there are $T$ iterations in total. Therefore, we get the following union bound. With probability at least $1-T\delta_0-Te^{-\frac{1}{64}\alpha m}- 4Tme^{-\frac{1}{16}\lambda_cp\alpha^2 m^\frac{1}{3}\epsilon^2}$, for $t=1,\ldots, T$, \eqref{eq:gradbound} holds for $\mathbf{w}_1,\ldots, \mathbf{w}_T$. Let $\delta_0=\delta/T$. Then with probability at least $1-\delta-Te^{-\frac{1}{64}\alpha m}- 4mTe^{-\frac{1}{16}\lambda_cp\alpha^2 m^\frac{1}{3}\epsilon^2}$, for $t=1,\ldots, T$,
\begin{eqnarray}
	\|g(\mathbf{w}_t)-\nabla F(\mathbf{w}_t)\|_2^2 \leq \frac{6p}{N_A}\sigma^2 \ln \frac{pT}{\delta} + \frac{15\lambda_c}{2\alpha}.	
	\label{eq:union}
\end{eqnarray}
Based on \eqref{eq:union}, we then solve the optimization problem. Recall that $\mathbf{w}_{t+1} = \mathbf{w}_t-\eta g(\mathbf{w}_t)$. Then
\begin{eqnarray}
	\|\mathbf{w}_{t+1} - \mathbf{w}^*\|_2&=& \|\mathbf{w}_t-\eta g(\mathbf{w}_t) - \mathbf{w}^*\|_2\nonumber\\
	&\leq & \|\mathbf{w}_t-\eta\nabla F(\mathbf{w}_t) - \mathbf{w}^*\|_2+\eta \|\nabla F(\mathbf{w}_t) - g(\mathbf{w}_t)\|_2.
	\label{eq:wdecomp}
\end{eqnarray}
For the first term in \eqref{eq:wdecomp},
\begin{eqnarray}
	&&\|\mathbf{w}_t-\eta\nabla F(\mathbf{w}_t) - \mathbf{w}^*\|_2^2\nonumber\\
	&=& \|\mathbf{w}_t-\mathbf{w}^*\|_2^2 - 2\eta \langle \mathbf{w}_t-\mathbf{w}^*, \nabla F(\mathbf{w}_t)\rangle+\eta^2 \|\nabla F(\mathbf{w}_t)\|_2^2\nonumber\\
	&\overset{(a)}{\leq} &\|\mathbf{w}_t-\mathbf{w}^*\|_2^2-2\eta\left(F(\mathbf{w}_t) - F(\mathbf{w}^*)+\frac{\alpha}{2}\|\mathbf{w}_t-\mathbf{w}^*\|_2^2\right)+\eta^2\|\nabla F(\mathbf{w}_t)\|_2^2\nonumber\\
	&\overset{(b)}{\leq}&(1-\eta\alpha)\|\mathbf{w}_t-\mathbf{w}^*\|_2^2 - 2\eta(F(\mathbf{w}_t - F(\mathbf{w}^*))) + 2\eta^2 L(F(\mathbf{w}_t) - F(\mathbf{w}^*))\nonumber\\
	&\overset{(c)}{\leq} & (1-\eta\alpha) \|\mathbf{w}_t-\mathbf{w}^*\|_2^2.
\end{eqnarray}
(a) comes from the $\alpha$-strong convexity of $F$ (Assumption \ref{ass:distributed}(b)), which ensures that
\begin{eqnarray}
	F(\mathbf{w}^*)\geq F(\mathbf{w}_t)+\langle \nabla F(\mathbf{w}_t), \mathbf{w}^*-\mathbf{w}_t\rangle + \frac{\alpha}{2}\|\mathbf{w}_t-\mathbf{w}^*\|_2^2.
\end{eqnarray}
(b) comes from the $L$-smoothness of $F$, which ensures that $\|\nabla F(\mathbf{w}_t)\|_2^2\leq 2L (F(\mathbf{w}_t) - F(\mathbf{w}^*))$. (c) comes from condition (4) in Theorem 4, which requires $\eta\leq 1/L$. Hence
\begin{eqnarray}
	\|\mathbf{w}_t-\eta\nabla F(\mathbf{w}_t) - \mathbf{w}^*\|_2\leq \sqrt{1-\eta\alpha}\|\mathbf{w}_t-\mathbf{w}^*\|_2\leq \left(1-\frac{1}{2}\eta\alpha\right) \|\mathbf{w}_t-\mathbf{w}^*\|_2.	
\end{eqnarray}
For the second term in \eqref{eq:wdecomp}, recall \eqref{eq:gradbound}. Therefore
\begin{eqnarray}
	\|\mathbf{w}_{t+1} - \mathbf{w}^*\|_2\leq (1-\rho)\|\mathbf{w}_t-\mathbf{w}^*\|_2+\eta\Delta,
\end{eqnarray}
in which $\rho$ and $\Delta$ are defined as
$\rho = \eta\mu/2$ and
\begin{eqnarray}
	\Delta = \sqrt{\frac{6p}{N_A}\sigma^2 \ln \frac{pT}{\delta} + \frac{15\lambda_c}{2\alpha}}.
	\label{eq:delta}
\end{eqnarray}

Then it is straightforward to show by induction that for $t=1,\ldots, T$,
\begin{eqnarray}
	\|\mathbf{w}_t-\mathbf{w}^*\|_2\leq (1-\rho)^t\|\mathbf{w}_0-\mathbf{w}^*\|_2+\frac{\eta\Delta}{\rho}.
\end{eqnarray}
The algorithm returns 	$\hat{\mathbf{w}}=\mathbf{w}_T$. Therefore,
\begin{eqnarray}
	\|\hat{\mathbf{w}}-\mathbf{w}^*\|_2\leq (1-\rho)^T\|\mathbf{w}_0-\mathbf{w}^*\|_2+\frac{\eta\Delta}{\rho}.
\end{eqnarray}

\end{document}